\newcommand{\Z}{\mathbb{Z}}
\newcommand{\N}{\mathbb{N}}
\newcommand{\D}{\mathcal{D}}
\newcommand{\E}{\mathbb{E}}
\newcommand{\R}{\mathbb{R}}
\newcommand\numberthis{\addtocounter{equation}{1}\tag{\theequation}}
\def\bw{\mathbf{w}}
\def\O{\mathcal{O}}
\def\W{\mathcal{W}}
\def\Z{\mathcal{Z}}
\def\N{\mathcal{N}}
\def\bz{\mathbf{z}}
\def\EX{\mathbb{E}}
\def\S{\mathcal{S}}
\def\bw{\mathbf{w}}
\def\A{\mathcal{A}}
\def\bI{\mathbb{I}}
\def\0{\mathbf{0}}
\def\bv{\mathbf{v}}
\def\proj{\text{Proj}}
\def\ebb{\mathbb{E}}
\def\ibb{\mathbb{I}}
\newtheorem{theorem}{Theorem}
\newtheorem{lemma}[theorem]{Lemma}
\newtheorem{proposition}[theorem]{Proposition}
\newtheorem{corollary}[theorem]{Corollary}
\theoremstyle{definition}
\newtheorem{definition}{Definition}
\newtheorem{assumption}{Assumption}
\newtheorem{example}{Example}
\theoremstyle{definition}
\newtheorem{remark}{Remark}
\def\begeqn{\begin{equation}}
\def\endeqn{\end{equation}}
\def\begth{\begin{theorem}}
\def\endth{\end{theorem}}
\def\begprop{\begin{proposition}}
\def\endprop{\end{proposition}}
\def\begcor{\begin{corollary}}
\def\endcor{\end{corollary}}
\def\begdef{\begin{definition}}
\def\enddef{\end{definition}}
\def\beglemm{\begin{lemma}}
\def\endlemm{\end{lemma}}
\def\begexm{\begin{example}}
\def\endexm{\end{example}}
\def\begrem{\begin{remark}}
\def\endrem{\end{remark}}
\def\begdef{\begin{definition}}
\def\enddef{\end{definition}}
\def\Pr{\text{Pr}}
\def\proj{\text{Proj}}
\def\emp{\text{emp}}
\def\bwi{\bw^{(i)}}
\def\bvi{\bv^{(i)}}
\def\V{\mathcal{V}} 
\def\wcal{\mathcal{W}}
\def\vcal{\mathcal{V}}
\begin{document}

\title{Stability and Generalization for Markov Chain Stochastic Gradient Methods$^\dag$}

\author{Puyu Wang$^1 $\quad  Yunwen Lei$^2$\quad Yiming Ying$^{3*}$\quad Ding-Xuan Zhou$^4$ \\ 
$^{1}$ Liu Bie Ju Centre for Mathematical Sciences, City University of Hong Kong \\
$^{2}$ School of Computer Science, University of Birmingham \\
$^{3}$ Department of Mathematics and Statistics, State University of New York at Albany \\
$^{4}$ School of Mathematics and Statistics, University of Sydney \\
puyuwang@cityu.edu.hk,  y.lei@hbam.ac.uk, \\
yying@albany.edu, dingxuan.zhou@sydney.edu.au}

 \date{}

\maketitle

\begin{abstract}
Recently there is a large amount of work devoted to the study of Markov chain stochastic gradient methods (MC-SGMs)  which mainly focus on their convergence analysis   for solving minimization problems. In this paper, we provide a comprehensive generalization analysis of MC-SGMs for both minimization and minimax problems through the lens of algorithmic stability in the framework of statistical learning theory. For empirical risk minimization (ERM) problems, we establish the optimal excess population risk bounds for both smooth and non-smooth cases by introducing on-average argument stability. For minimax problems, we develop a quantitative connection between on-average argument stability and generalization error which extends the existing results for uniform stability \cite{lei2021stability}. 
We further develop the first nearly optimal convergence rates for convex-concave problems both in expectation and with high probability, which, combined with our stability results, show that the optimal generalization bounds can be attained for both smooth and non-smooth cases. 
To the best of our knowledge, this is the first generalization analysis of SGMs when the gradients are sampled from a Markov process.  \let\thefootnote\relax\footnote{$^\dag$To appear in NeurIPS 2022. 
$^{*}$The corresponding author is Yiming Ying. }  

\end{abstract}

\bigskip

\parindent=0cm

\section{Introduction}
 
Stochastic gradient methods (SGMs)  have been the workhorse behind the success of many machine learning (ML) algorithms due to their simplicity and high efficiency. As opposed to the deterministic (full) gradient methods, SGMs only require a small batch of random example(s) to update the model parameters at each iteration,  making them amenable for solving large-scale problems. 

There are mainly two notable types of SGMs which are inherent for different learning problems.  In particular, stochastic gradient descent (SGD) is widely used for solving the  empirical risk minimization (ERM) problem and the  theoretical convergence has been extensively studied   \cite{bach2013non,dieuleveut2016nonparametric,duchi2009efficient,lan2020first,li2019convergence,lin2016optimal,nedic2014stochastic,rakhlin2012making,vaswani2019fast, ying2008online,ying2006online,zhang2004solving}. Concomitantly, 
the minimax problems instantiate many ML problems such as Generative Adversarial Networks (GANs) \cite{arjovsky2017wasserstein,goodfellow2014generative}, AUC maximization \cite{gao2013one,liu2019stochastic,ying2016stochastic}, and algorithmic fairness \cite{diana2021minimax,li2019fair,mohri2019agnostic,martinez2020minimax}.     Stochastic gradient descent ascent (SGDA) is an off-the-shelf algorithm for solving minimax problems. The convergence of SGDA and its variants is also widely studied in the literature \cite{lin2020gradient,madry2017towards,nemirovski2009robust,rafique2021weakly}.

On the other important front, the ultimate goal of learning is to achieve good generalization from the training data to the unknown test data. Along this line, generalization analysis of SGMs has attracted considerable attention using the algorithmic stability approach \cite{bousquet2002stability,elisseeff2005stability}. In particular,  stability and generalization of SGD have been studied using the uniform argument stability \cite{bassily2020stability,bassily2019private,charles2018stability,hardt2016train,kuzborskij2018data} and on-average  stability \cite{lei2020fine,lei2021sharper}.  In \cite{farnia2021train,lei2021stability,zhang2021generalization}, different stability and generalization measures are investigated for SGDA  under both convex-concave and non-convex-non-concave settings. A critical assumption in most of the above studies about SGD and SGDA is the {\em i.i.d. sampling scheme} where the randomly sampled mini-batch or datum at each iteration  is i.i.d.  drawn from the given training data, guaranteeing that the stochastic gradient is an unbiased estimator of the true   gradient. 

Markov chain naturally appears in many important problems, such as decentralized consensus optimization, which finds applications in various areas including wireless sensor networks, smart grid implementations and distributed statistical learning \cite{ayache2021private,chen2014dictionary,duchi2011dual,lopes2007incremental,mao2020walkman,ram2009incremental,Sayed,shah2018linearly} as well as pairwise learning  \cite{yang2021simple} which instantiates AUC maximization \cite{agarwal2009generalization,gao2013one,liu2018fast,ying2016stochastic,zhao2011online} and metric learning \cite{kulis2013metric,weinberger2009distance,xing2002distance,ying2012distance}. 
A common example is a distributed system in which each node stores a subset of the whole data, and one aims to train a global model based on these data. We let a central node that stores all model parameters walk randomly over the system, in which case the samples are accessed according to a Markov chain. Several works studied this kind of model \cite{johansson2007simple,johansson2010randomized,lopes2007incremental,mao2020walkman,ram2009incremental}.  
Markov chains also arise extensively in thermodynamics, statistical mechanics dynamic systems and so on \cite{rauch1965maximum,smyth1994hidden}. 
In addition, it was   observed in \cite{Sun2018Markov,yang2021simple} that SGD with Markov chain sampling (MC-SGD) performs more efficiently than SGD with the common i.i.d.  sampling scheme  in various cases.
Hence, studying the performance of MC-SGMs has  certain theoretical and application values. 
The key difference from the i.i.d. sampling scheme is that the stochastic gradient at each iteration is sampled on the trajectory of a Markov chain, in which the stochastic gradient estimators are neither  unbiased  nor independent.    Recent studies \cite{ayache2021private,doan2020convergence,duchi2012ergodic,johansson2007simple,johansson2010randomized,ram2009incremental,sun2019decentralized} overcame this technical hurdle and provided the convergence rates of MC-SGD.   However, to the best of our knowledge, there is no work on the generalization performance of SGMs with Markov sampling. 

\noindent{\bf Main contribution:}
In this paper, we provide a comprehensive study of the stability and generalization for both SGD and SGDA with Markov sampling in the framework of statistical learning theory \cite{vapnik1999nature,bousquet2003introduction}.  Our main contribution can be summarized as follows. 
 
\hspace*{2mm}$\bullet$  We develop stability and generalization results of MC-SGD for solving ERM problems in both smooth and non-smooth cases. In particular,  we show that MC-SGD can achieve competitive stability results as SGD with i.i.d.  sampling scheme.  By trading off the generalization and optimization errors appropriately,  we establish the first-ever-known excess generalization bound $\O(1/\sqrt{n})$ for MC-SGD     where $n$ is the size of training data. The key idea for handling Markov sampling structure of MC-SGD is to use the concept of on-average argument stability.  
  
 \hspace*{2mm} $\bullet$ We first establish the connection between on-average argument stability and generalization for minimax optimization algorithms, which extends the existing work on uniform argument stability \cite{lei2021stability}. We further develop stability bounds of SGDA with Markov sampling (MC-SGDA) for both smooth and non-smooth cases and obtain the nearly optimal convergence rates $\tilde{\O}(1/\sqrt{T})$ for convex-concave problems in the form of both expectation and high probability, where  $T$ is the number of iterations, from which its   optimal population risk bound  is established. Specifically,  we consider several measures of generalization performance and show that the optimal population risk bounds $\O(1/\sqrt{n})$ can be derived even in the non-smooth case. 

 \hspace*{2mm}$\bullet$ To the best of our knowledge, this is the first-ever-known  work on  stability and generalization  of SGD and SGDA under the Markov chain setting. Our results show that, despite the stochastic gradient estimator is biased and dependent across iterations due to the Markov sampling scheme, the generalization performance of MC-SGD and MC-SGDA enjoys the same optimal excess generalization rates as the i.i.d.  sampling setting. 


\noindent{\bf Organization of the paper:} 
We discuss the related work in Subsection \ref{sec:relatedwork} and formulate the problem in Section \ref{sec:preliminary}. 
Section~\ref{sec:MC-SGD} presents the stability and generalization results of MC-SGD for both smooth and non-smooth losses. 
Section~\ref{sec:MC-SGDA} develops the first nearly optimal convergence rates  for convex-concave problems
of MC-SGDA, and show that the optimal risk bounds   can be derived  in both smooth and  non-smooth cases. Section \ref{sec:conclu} concludes the paper.

\subsection{Related Work}\label{sec:relatedwork}
 
In this subsection, we review some further works which are closely related to our paper. 

\noindent\textbf{Algorithmic Stability}.
Algorithmic stability characterizes the sensitivity of a learning algorithm  when the inputs to the algorithm are slightly perturbed.   
The framework of algorithmic stability was established in a seminal paper \cite{bousquet2002stability} for the exact minimizer of the ERM problem, where the uniform stability was established for strongly convex objective functions.  Recent work \cite{bousquet2020sharper,feldman2018generalization,feldman2019high} derived sharper generalization bounds for uniformly stable algorithms with high probability. Several other stability measures were later developed for studying the generalization of different learning algorithms  including the hypothesis stability~\cite{bousquet2002stability}, on-average stability~\cite{shalev2010learnability}, argument stability~\cite{liu2017algorithmic} and total variation stability~\cite{bassily2021algorithmic,ullah2021machine}.

\noindent\textbf{Stability and Generalization Analysis of SGMs.} \cite{hardt2016train}   established  generalization error bounds of order $\O(1/\sqrt{n})$ in expectation for SGD for convex and smooth problems  using uniform stability.
The on-average variance of stochastic gradients was used to refine the generalization analysis of SGD for non-convex problems \cite{zhou2018generalization}. The results were improved and refined by \cite{kuzborskij2018data}  using a data-dependent notion of algorithmic stability for SGD. \cite{lei2020fine} introduced on-average argument stability and studied the stability and generalization of SGD for a general class of non-smooth convex losses, i.e., the gradient of the loss function is $\alpha$-H\"older continuous. They also  established fast generalization bounds $\O(1/n)$ for smooth convex losses in a low-noise setting.  The same authors also extended the analysis to the non-convex loss functions in \cite{lei2021sharper}. 
Meanwhile, \cite{bassily2020stability} addressed uniform argument stability of SGD with Lipschitz-continuous convex losses.  Optimal generalization bounds were also developed for SGD in different settings \cite{bassily2020stability,bassily2019private,feldman2020private,wang2022differentially,yang2021stability}.  Stability and generalization for SGMs have been studied for  pairwise learning \cite{shen2019stability,yang2021simple} where the loss involves a pair of examples. In particular,  \cite{yang2021simple} introduced a simple MC-SGD algorithm for pairwise learning where   pairs of examples form a special Markov chain $\{ \xi_t = (z_{i_t}, z_{i_{t-1}}): t\in \N\}$. Here, $z_{i_t}$ and $z_{i_{t-1}}$ are i.i.d.  sampled from the training data of size $n$ at time $t$ and $t-1$, respectively.   The uniform argument stability and generalization have been established (see more discussion on the difference between our work and \cite{yang2021simple} in Remark 8 below).    
 
For minimax problems, \cite{zhang2021generalization} studied the weak generalization and strong generalization bounds  in the strongly-convex-concave setting.  \cite{farnia2021train} established the optimal generalization bounds for proximal point method, while gradient descent ascent (GDA) is not guaranteed to have a vanishing excess risk in convex-concave case.  
\cite{lei2021stability} proved that SGDA can achieve the optimal excess risk bounds of order $\O(1/\sqrt{n})$ for both smooth and non-smooth problems in the convex-concave setting. They also extended their work to the nonconvex-nonconcave problems.  However, all the above studies the stability and generalization of SGD and SGDA under the assumption of the i.i.d. sampling scheme. 

\noindent\textbf{Convergence Analysis of MC-SGMs.}
The convergence analysis of SGD and its variants when the gradients are sampled from a Markov chain have been studied in different settings \cite{atchade2017perturbed,doan2020convergence,doan2020finite,duchi2012ergodic,johansson2010randomized,ram2009incremental,smale2009online,Sun2018Markov,tadic2017asymptotic}. Specifically, \cite{johansson2010randomized,ram2009incremental} studied the Markov subgradient incremental methods in a distributed system under time homogeneous and time non-homogeneous settings, respectively. \cite{duchi2012ergodic} studied the convergence of stochastic mirror descent under the ergodic assumption.  \cite{Sun2018Markov} established the convergence rate $\O(1/T^{1-q})$ with some $q\in(1/2,1)$ for convex problems. They also developed the convergence result for non-convex problems. 
In addition, decentralized SGD methods with the gradients sampled from a non-reversible Markov chain have been studied in \cite{sun2019decentralized}. \cite{doan2020convergence} considered an accelerated ergodic Markov chain SGD for both convex and non-convex problems. \cite{doan2020finite} further studied the convergence rates without the bounded gradient assumption.
All these studies focused on the convergence analysis of MC-SGD for solving the ERM problems. 

\section{Problem Setting and Target of Analysis}\label{sec:preliminary}
 
In this section, we introduce the SGD for ERM and SGDA for solving minimax problems with Markov Chain, and describe the target of generalization analysis for both optimization algorithms.

{\bf Target of Generalization Analysis.} {\bf }Let $\W$ be a parameter space in $\R^d$  
and $\D$ be a population distribution defined on a sample space $\Z$. Let $f:\W \times \Z \rightarrow [0, \infty)$ be a loss function. 
In the standard framework of Statistical Learning Theory (SLT) \cite{bousquet2003introduction,vapnik1999nature}, one aims to minimize the expected population risk, i.e.,
$F(\bw):=\E_z[ f(\bw;z) ]$, where the model parameter $\bw$ belongs to $\W$, and the expectation is taken with respect to (w.r.t.) $z$ according to $\D$.  However, the population distribution is often  unknown. Instead, we have access to a training dataset $S=\{z_i\in \Z\}_{i=1}^n$ with size $n$, where $z_i$ is independently drawn from $\D$.  Then consider the following ERM problem  
\begin{equation}\label{eq:ERM}
    \min_{\bw \in \W} \big\{F_S(\bw) := \frac{1}{n} \sum_{i=1}^n f(\bw; z_i) \big\}. 
\end{equation}
For a randomized algorithm $\A$ to solve the above problem, let $\A(S)$ be the output of algorithm $\A$ based on the dataset $S$. Then its   statistical generalization performance (prediction ability) is measured by its \textit{excess population risk}  $F(\A(S)) - F(\bw^{*})$, i.e., the discrepancy between the expected risks of the model  $\A(S)$  and the best model  $\bw^{*}\in \W$. We are interested in studying the excess population risk.   Let $ \E_{S,\A}[\cdot]$ denote the expectation w.r.t. both the randomness of data $S$ and the internal randomness of $\A$.  
To analyze the excess population error, we use the following error decomposition 
  \begin{align}\label{eq:error-decomposition}
 \E_{S,\A}[ F(\A(S)) ] - F(\bw^{*}) =  \E_{S,\A}[ F(\A(S)) -  F_S(\A(S))  ]  +\E_{S,\A} [F_S(\A(S)) - F_S(\bw^{*}) ].
  \end{align}
The first term is called the generalization error of the algorithm $\A$ measuring the difference between the expected risk and empirical one, for which we will handle using stability analysis as shown soon.  
The second term is the optimization error, which is induced by running the randomized algorithm $\A$ to minimize the empirical objective. It can be estimated by tools from optimization theory.

As discussed in the introduction, many machine learning problems can be formulated as minimax problems 
including adversarial learning~\cite{goodfellow2014generative}, 
reinforcement learning~\cite{dai2018sbeed,du2017stochastic} and AUC maximization~\cite{gao2013one,liu2018fast,ying2016stochastic,zhao2011online}. 
We are also interested in solving this type of problem. 
Let $\W$ and $\V$ be parameter spaces in $\R^d$. Let $\D$ be a population distribution defined on a sample space $\Z$, and $f:\W \times \V \times \Z \rightarrow [0, \infty)$.   
We consider the minimax optimization problems: 
     $\min_{\bw\in \W}\max_{\bv\in\V} \big\{F(\bw, \bv) := \EX_{\bz\sim \mathcal{D} } [f(\bw,\bv;z)]  \}$. 
In practice, we only have a training dataset $S = \{z_1,\ldots, z_n\}$ independently drawn from $\D$ and hence the minimax problem is reduced to the following empirical version: 
\begin{align}\label{eq:minmax-emp}
    \min_{\bw\in \W}\max_{\bv\in\V} \big\{F_S(\bw, \bv) := {1\over n} \sum_{i=1}^n f(\bw, \bv; z_i)\big\}.
 \end{align}
Since minimax problems involve the primal variable and dual variable,   we have different measures of generalization  \cite{lei2021stability,zhang2021generalization}. For a randomized algorithm $\A(S)$ solving the problem \eqref{eq:minmax-emp},  we denote the output of $\A$ as $\A(S) = (\A_\bw(S), \A_\bv(S))$ for notation simplicity. 
Let $\EX[\cdot]$ denote the expectation w.r.t. the randomness of both $\A$ and $S$. We are particularly  interested in the following two metrics.  

\begin{definition}[Weak Primal-Dual (PD) Risk]
The weak Primal-Dual population risk of $\A(\S)$,  denoted by $\triangle^w(\A_\bw, \A_\bv)$,  is defined as
$\max_{\bv\in\V}\EX\big[F(\A_\bw(\S),\bv)\big]\!-\!\min_{\bw\in\W}\EX\big[F(\bw,\A_\bv(\S))\big].$
The corresponding (expected) weak PD empirical risk, denoted by $\triangle^w_\emp(\A_\bw, \A_\bv)$, is defined by
$
\max_{\bv\in\V}\EX\big[F_\S(\A_\bw(\S),\bv)\big]-\min_{\bw\in\W}\EX\big[F_\S(\bw,\A_\bv(\S))\big].
$ 
We refer to $\triangle^w(\A_\bw, \A_\bv)-\triangle^w_\emp(\A_\bw, \A_\bv)$ as the {\em weak PD generalization error} of the model $(\A_\bw(\S), \A_\bv(\S))$. 
\end{definition}

\begin{definition}[Primal Risk] The primal population and empirical risks of $\A(\S)$ are respectively  defined by
$R(\A_\bw(\S))=\max_{\bv\in\V}F(\A_\bw(\S),\bv),$ and $R_\S(\A_\bw(\S))=\max_{\bv\in\V}F_\S(\A_\bw(\S),\bv).$
We refer to $R(\A_\bw(\S))-R_S(\A_\bw(\S))$ as the primal generalization error of the model $\A_\bw(\S)$, and  $R(\A_\bw(\S))-\min_{\bw\in\W}R(\bw)$
as the \emph{excess primal population risk}.
\end{definition}

{\bf SGD and SGDA with Markov Sampling.} One often considers SGD to solve the ERM problem \eqref{eq:ERM}. Specifically, 
let $\W \subseteq \R^d$ be convex, $\proj_{\W}(\cdot)$ denote the projection to $\W$, and $\partial f(\bw ;z)$ denote a subgradient of $f(\bw ;z)$ at $\bw $.
Let $\bw_0 \in \W$ be an initial point, and $\{\eta_t\}$ is a stepsize sequence. For any $t\in\N$, the update rule of SGD is given by  
\begin{equation}\label{eq:MCSGD-update-rule}
    \bw_{t} = \proj_{\W}\big( \bw_{t-1}- \eta_{t} \partial f(\bw_{t-1};z_{i_t})  \big),
\end{equation}
where $\{i_t\}$ is generated from $[n] = \{1,2,\ldots, n\}$ with some sampling scheme. A typically sampling scheme is the uniform i.i.d. sampling, i.e., $i_t$ is drawn randomly from $[n]$ according to a uniform distribution with/without replacement. 

In this paper, we are particularly interested in the case when $i_t\in [n]$ is drawn from a {\em Markov Chain} which is widely used in practice \cite{atchade2017perturbed,ayache2021private,doan2020convergence,duchi2012ergodic,johansson2010randomized,smale2009online,Sun2018Markov,tadic2017asymptotic}.  Let $P$ be an $n \times n$-matrix with real-valued entries.  We say a Markov chain $\{X_k\}$ with finite state $[n]$ and transition matrix $P$ is time-homogeneous if, for $k\in \N $, $i,j \in [n]$, and $ i_1,\ldots,i_{k-1} \in[n]$, there holds $ \Pr(X_{k+1}=j |   X_1=i_1, \ldots, X_k=i)   = \Pr( X_{k+1}=j | X_k=i ) = [P]_{i,j} .  $ Likewise,  the SGDA algorithm with Markov sampling scheme is defined as follows.  
Specifically, let $\partial_{\bw} f$ and  $\partial_{\bv} f$ denote the subgradients of $f$ w.r.t. the arguments $\bw$ and $\bv$, respectively.  
We initialize $(\bw_0, \bv_0)\in \W \times \V$, for any $t\in \mathbb{N}$, let  $\{i_t\}$ is drawn from $[n]$ according to a Markov Chain. The update rule of SGDA is given by
\vspace*{-1mm}
\begin{equation}\label{eq:MCSGDA}
\begin{cases}
\bw_{t}  = \proj_{\W}\bigl( \bw_{t-1}- \eta_{t} \partial_\bw f(\bw_{t-1}, \bv_{t-1};z_{i_t})  \bigr) &\\
    \bv_{t} = \proj_{\V}\bigl( \bv_{t-1}+ \eta_{t} \partial_\bv f(\bw_{t-1}, \bv_{t-1};z_{i_t})  \bigr).&
\end{cases}
\end{equation}
For  brevity, we refer to the above algorithms as   Markov chain-SGD (MC-SGD) and Markov chain-SGDA (SGDA), respectively.  There are two types of randomness in MC-SGD/MC-SGDA. The first randomness is due to training dataset $S$ which  is i.i.d. from the population distribution $\D$. The other randomness arises from the internal randomness of the MC-SGD/MC-SGDA algorithm, i.e., the randomness of the indices $\{i_t\}$, which is a Markov chain.

\begin{remark}Convergence analysis mainly considers the empirical optimization gap, i.e., the discrepancy between $F_S(\A(S))$ and $F_S(\bw^{*})$. Here, we are mainly interested in the generalization error which measures the prediction ability of the trained model on the test (future) data.  {\em As such, the purpose of this paper is to provide a comprehensive generalization analysis of MC-SGD and MC-SGDA in the framework of statistical learning theory.} Specifically, given a finite training data $S$, let $\A(S)$ be the output of the MC-SGD for solving the ERM problem \eqref{eq:ERM}. Our target is to analyze the excess population risk $\E_{S,\A}[ F(\A(S)) ] - F(\bw^{*}).$ Let $\A(S) = (\A_\bw(S), \A_\bv(S))$ be the output of MC-SGDA for solving the empirical minimax problem \eqref{eq:minmax-emp}, our aim is to analyze the {\em weak PD population risk} $\triangle^w(\A_\bw, \A_\bv)$ and the \emph{excess primal population risk} $ R(\A_\bw(\S))-\min_{\bw\in\W}R(\bw).$ In both cases, the generalization analysis will be conducted using the algorithmic stability \cite{bousquet2002stability,hardt2016train}. As we show soon below, the final rates are obtained through trade-offing the optimization error (convergence rate) and the generalization error (stability results). 
\end{remark}

\vspace*{-1mm}
{\bf Properties of Markov Chain.} Denote the probability distribution of  $X_k$ as the non-negative row vector $\pi^k=(\pi^k(1), \pi^k(2),\ldots,\pi^k(n))$, i.e., $\Pr(X_{k}=j)=\pi^k(j)$. Further, we have $\sum_{i=1}^n \pi^k(i)=1$. For the time-homogeneous Markov chain, it holds $\pi^k=\pi^{k-1}P=\dots=\pi^1 P^{k-1}$ for all $k\in \N$. Here, $\pi^1$ is an initial distribution and $P^k$ denotes the $k$-th power of $P$. A Markov chain is irreducible if, for any $i,j \in[n]$, there exists $k$ such that $[P^k]_{i,j}>0$.  
That is, the Markov  process can go from any state to any other state. State $i \in[n]$ is said to have a period $\tau$ if $[P^k]_{i,i}=0$ whenever $k$ is not a multiple of $\tau$ and $\tau$ is the greatest integer with this property. If $\tau=1$ for every state $i\in[n]$, then we say the Markov chain is aperiodic.
We say a Markov chain with stationary distribution $\Pi^{*}$ is reversible if $\Pi^{*}(i)[P]_{i,j}=\Pi^{*}(j)[P]_{j,i}$ for all $i,j\in [n]$. 

We need the following assumption  for studying optimization error of MC-SGMs.    
\begin{assumption}\label{ass:Markov-chain}
Assume the Markov chain $\{i_t\}$ with finite state $[n]$ is  time-homogeneous, irreducible and aperiodic. It starts from an initial distribution $\pi^1$, and has transition matrix $P$ and stationary distribution $\Pi^{*}$ with $\Pi^{*}(i)=\frac{1}{n}$ for any $i\in[n]$, i.e.,
$\lim_{k\to \infty} P^k =\frac{1}{n}\mathbf{1}_n\mathbf{1}_n^\top,$ 
where $\mathbf{1}_n\in\mathbf{R}^n$ is the vector with each entry being $1$ and $\mathbf{1}_n^\top$ denotes its transpose.
\end{assumption}
\begin{remark}
     Our assumptions on Markov chains listed above  are standard in the literature \cite{ doan2020convergence,johansson2010randomized,mao2020walkman,Sun2018Markov,sun2019decentralized,yang2021simple}. For instance, Markov chain-type SGD was proposed for pairwise learning which can apply to various learning task such as  AUC maximization and bipartite ranking  \cite{agarwal2009generalization,ying2016online,zhao2011online,gao2013one,liu2018fast} and metric learning \cite{kulis2013metric,weinberger2009distance,xing2002distance,ying2012distance}. This pairwise learning algorithm  forms  pairs of examples following a special Markov chain $\{ \xi_t = (z_{i_t}, z_{i_{t-1}}): t\in \N\}$ where $z_{i_t}$ and $z_{i_{t-1}}$ are i.i.d. sampled from the training data of size $n$ at time $t$ and $t-1$, respectively and, at time $t$, the model parameter is updated using gradient descent based on $\xi_t.$  As mentioned in Remark 3 of \cite{yang2021simple}, $\{\xi_t: t\in \N\}$ is a Markov Chain satisfying all of our assumptions. Another notable example is the decentralized consensus optimization in a multi-agent network, where 
     the samples are accessed according to a Markov chain and the number of states of the Markov chain equals the number of nodes in the network, which is finite. One always considers the same transition matrix $P$ for each node and assumes the Markov chain is irreducible and aperiodic \cite{mao2020walkman,zeng2021finite}.  
\end{remark}
 
\section{Results for Markov Chain SGD}\label{sec:MC-SGD}

In this section, we present the stability and generalization results of MC-SGD. Our analysis requires the following definition and assumptions. Let $G,L>0$ and $\|\cdot\|_2$ denote the Euclidean norm. 
\begin{definition}
We say $f$ is convex w.r.t. the first argument if, for any $z\in \Z$ and $\bw,\bw' \in \W$, there holds $f(\bw;z)\ge f(\bw';z)+\langle \partial f(\bw';z),\bw-\bw'\rangle$.
\end{definition}
 
 \begin{assumption}\label{ass:G-lipschitz}
Assume $f$ is $G$-Lipschitz continuous, i.e., for any $z\in \Z$ and $\bw, \bw' \in \W$, there holds $|f(\bw;z)-f(\bw';z)|\le G\| \bw-\bw' \|_2$.  
 \end{assumption}

\begin{assumption}\label{ass:beta-smooth}
Assume $f$ is $L$-smooth, i.e., for any $z\in \Z$ and $\bw, \bw' \in \W$, there holds $ f(\bw;z)-f(\bw';z) \le \langle \partial f(\bw';z), \bw-\bw'\rangle +\frac{L}{2} \|\bw -\bw'\|_2^2 $. 
\end{assumption}
 
\subsection{Stability and Generalization of MC-SGD}
\vspace*{-1mm}
Let $\bw^{*}=\arg\min_{\bw\in \W} F(\bw)$ be the best model in $\W$ and $\bar{\bw}_T= \sum_{j=1}^T \eta_j \bw_j / \sum_{j=1}^T \eta_j$ be the output of MC-SGD with $T$ iterations. We will use algorithmic stability to study the generalization errors, which measures the sensitivity of the output model of an algorithm.     
Below we give the definition of on-average argument stability~\cite{lei2020fine}. 
\begin{definition}(On-average argument stability)\label{def:on-average-stability}
Let $S=\{ z_1,\ldots,z_n \}  $ and $\widetilde{S}=\{ \tilde{z}_1, \ldots, \tilde{z}_n \}$ be drawn independently from $\D$. For any $i\in[n]$, define $S^{(i)}=\{z_1,\ldots,z_{i-1},\tilde{z}_i, z_{i+1},\ldots,z_n  \}$ as the set formed from $S$ by replacing the $i$-th element with $\tilde{z}_i$. 
We say a randomized algorithm $\A$ is on-average  $\epsilon$-argument-stable if 
$ \E_{S,\widetilde{S},\A} \big[ \frac{1}{n}\sum_{i=1}^n \|\A(S)-\A(S^{(i)})\|_2 \big] \le \epsilon.$
\end{definition}

To obtain on-average argument stability bounds of MC-SGD, our idea is to first write the stability as a deterministic function according to whether the different data point is selected, and then take the expectation w.r.t. the randomness of the algorithm.  The detailed proofs are given in Appendix~\ref{proof:stability}. 
\begin{theorem}[Stability bounds]\label{thm:stability}
	Suppose  $f$ is convex and Assumption~\ref{ass:G-lipschitz} holds.  Let {  $\W=\R^d$} and let $\A$ be MC-SGD  with $T$ iterations.  
	\begin{enumerate}[label=(\alph*), leftmargin=*]\setlength\itemsep{-2mm}
		\item (Smooth case) Suppose Assumption \ref{ass:beta-smooth} holds and $\eta_j\le 2/L$. Then  $\A$ is on-average  $\epsilon$-argument-stable with  $\epsilon \le \frac{2G}{n}\sum_{j=1}^T \eta_j$.
		\item (Non-smooth case) $\A$ is   on-average $\epsilon$-argument-stable with  $ \epsilon \le 2G \sqrt{ \sum_{j=1}^T \eta_j^2   }  + \frac{4G }{n} \sum_{j=1}^T \eta_j $. 
	\end{enumerate}
\end{theorem}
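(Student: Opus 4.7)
The plan is to couple the two runs of MC-SGD: let $\{\bw_t\}$ and $\{\bw_t'\}$ be the iterates on $S$ and $S^{(i)}$, driven by the same initialization $\bw_0$ and the same Markov realization of the index stream $\{i_t\}$. Set $\Delta_t := \|\bw_t - \bw_t'\|_2$. The key point is that the two updates at step $t$ involve different data only when $i_t = i$; whenever $i_t \ne i$ both iterates apply the subgradient operator of the identical function $f(\cdot\,; z_{i_t})$. This splits the analysis into ``matched'' steps ($i_t \ne i$) and ``mismatched'' steps ($i_t = i$), and I will derive a pointwise (in the Markov-chain randomness) recursion for $\Delta_t$ and then take expectation.

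\textbf{Smooth case.} Under Assumption~\ref{ass:beta-smooth} with $\eta_t \le 2/L$, the convex $L$-smooth gradient step $\bw \mapsto \bw - \eta_t \nabla f(\bw;z)$ is non-expansive, and $\proj_\W$ is non-expansive, so $i_t \ne i$ yields $\Delta_t \le \Delta_{t-1}$. When $i_t = i$, triangle inequality plus Assumption~\ref{ass:G-lipschitz} (hence $\|\partial f\|_2 \le G$) gives $\Delta_t \le \Delta_{t-1} + 2G\eta_t$. Unifying the two cases,
\[
\Delta_t \;\le\; \Delta_{t-1} + 2G\eta_t\,\mathbb{I}[i_t = i], \qquad \text{so } \Delta_t \le 2G\sum_{s\le t}\eta_s \mathbb{I}[i_s = i].
\]
Since the output is a convex combination of the iterates, $\|\A(S)-\A(S^{(i)})\|_2 \le \max_t \Delta_t$ is bounded by the same quantity. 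Taking expectation over the chain and averaging over $i \in [n]$ uses only the trivial identity $\sum_{i=1}^n \Pr(i_t = i) = 1$, which produces $\epsilon \le \tfrac{2G}{n}\sum_t \eta_t$.

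\textbf{Non-smooth case.} Without smoothness the gradient step is not non-expansive, but convexity still gives monotonicity of subgradients, so for $i_t \ne i$ the standard squared-distance expansion yields $\Delta_t^2 \le \Delta_{t-1}^2 + 4G^2\eta_t^2$. For $i_t = i$, bounding the cross term by Cauchy--Schwarz instead (monotonicity is no longer available because the two subgradients are evaluated at different losses) gives $\Delta_t^2 \le \Delta_{t-1}^2 + 4G\eta_t \Delta_{t-1} + 4G^2 \eta_t^2$. Telescoping both bounds,
\[
\Delta_T^2 \;\le\; 4G^2 \sum_{t=1}^T \eta_t^2 \;+\; 4G \sum_{t=1}^T \eta_t\, \Delta_{t-1}\,\mathbb{I}[i_t = i].
\]
Setting $M_T := \max_{t \le T}\Delta_t$ and bounding $\Delta_{t-1} \le M_T$ in the last sum gives the scalar inequality $M_T^2 \le 4G^2\sum_t\eta_t^2 + 4G M_T \sum_t \eta_t \mathbb{I}[i_t=i]$; solving the resulting quadratic (using $x^2 \le a + bx \Rightarrow x \le \sqrt{a} + b$) yields $M_T \le 2G\sqrt{\sum_t \eta_t^2} + 4G \sum_t \eta_t \mathbb{I}[i_t=i]$. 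Again $\|\A(S)-\A(S^{(i)})\|_2 \le M_T$, and averaging over $i$ with $\sum_i \Pr(i_t=i) = 1$ delivers the claimed bound.

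\textbf{Main obstacle.} The delicate part is the non-smooth case: one must combine a quadratic-in-$\Delta$ recursion (on matched steps) with a linear-in-$\Delta$ perturbation (on mismatched steps) and still arrive at an \emph{additive} decomposition into a noise piece $O\!\bigl(\sqrt{\sum_t\eta_t^2}\bigr)$ and a replacement piece $O\!\bigl(\tfrac{1}{n}\sum_t\eta_t\bigr)$. The max-trick together with the scalar quadratic inequality is what makes this work cleanly. A conceptual subtlety worth emphasizing is that the Markov structure plays essentially no role in this on-average stability bound: the only property of the chain used is $\sum_i \Pr(i_t = i) = 1$, which holds for any transition matrix and any initial distribution. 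This is in stark contrast with the optimization-error side of the analysis, where mixing of the chain is essential.
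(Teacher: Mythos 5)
Your proof is correct and follows essentially the same route as the paper's: the same case split on $\mathbb{I}_{[i_t=i]}$, the same non-expansiveness (smooth) and squared-distance (non-smooth) recursions, and the same key on-average observation that $\sum_{i=1}^n\mathbb{I}_{[i_t=i]}=1$ makes the Markov structure irrelevant for stability. The only cosmetic difference is in the non-smooth case, where you resolve the recursion $\Delta_t^2\le 4G^2\sum_{s\le t}\eta_s^2+4G\sum_{s\le t}\eta_s\Delta_{s-1}\mathbb{I}_{[i_s=i]}$ via the max-plus-scalar-quadratic trick, while the paper invokes Lemma~\ref{eq:recursive0} to get the identical pointwise bound $\delta_t^{(i)}\le 2G\big(\sum_{j\le t}\eta_j^2\big)^{1/2}+4G\sum_{j\le t}\eta_j\mathbb{I}_{[i_j=i]}$ (and, in fact, the paper uses your max-trick in its MC-SGDA stability proof).
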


\begin{remark}\label{rmk:on-ave}
{Without any assumption on Markov chain,}
 Theorem~\ref{thm:stability} shows    that argument stability bounds of MC-SGD are in the order of $\O(T\eta/n)$ and $\O(\sqrt{T}\eta + T\eta/n)$ with a constant stepsize $\eta$  for smooth and non-smooth losses, respectively. Both of them match the corresponding bounds for SGD with  i.i.d.  sampling~\cite{bassily2020stability,hardt2016train,lei2020fine,wang2022differentially}, which imply that stability of MC-SGD is   at least not worse than that of the i.i.d. sampling case. 
The technical novelty here is to observe that, in the sense of on-average argument stability, we can use the calculation of $\E_\A[\sum_{i=1}^n \mathbb{I}_{[i_t=i]} ]$ to replace that of $\E_\A[ \mathbb{I}_{[i_t=i]} ]$, where $\mathbb{I}_{[\cdot]}$ is the indicator function. This key step avoids the complicated calculations about $\E_\A[ \mathbb{I}_{[i_t=i]} ]$. 
{Taking the uniform stability as   example, we need to consider neighboring datasets differing by the $i$-th data, and can get $\mathbb{E}_{\mathcal{A}}[\| \bw_t - \bw_t' \|_2]=\O(\eta \sum_{j=1}^t \mathbb{E}_{\mathcal{A}}[\mathbb{I}_{i_j=i} ])=\O(\eta \sum_{j=1}^t \sum_{k=1}^n [P^{j-1}]_{k,i}\pi^1(k))$, which depends on the  transition matrix $P$ and is not easy to control. In contrast, with the on-average stability we get stability bounds depending on $\sum_{i=1}^n\mathbb{I}_{[i_t=i]}$, which is always $1$, i.e.,  the on-average stability allows us to ignore the effect of sampling process.}
\end{remark}

The following theorem presents generalization bounds for MC-SGD in both smooth and non-smooth cases, which directly follows from Lemma~\ref{thm:gen-model-stab} and Theorem \ref{thm:stability}.
\begin{theorem}[Generalization error bounds]\label{thm:generalization-error}
Suppose  $f$ is convex and Assumption    \ref{ass:G-lipschitz} holds. Let { $\W=\R^d$} and let $\A$ be MC-SGD with $T$ iterations. 
 \vspace*{-2mm}
\begin{enumerate}[label=(\alph*), leftmargin=*]\setlength\itemsep{-1mm}
    \item (Smooth case)  Suppose Assumption~\ref{ass:beta-smooth} holds and let  $\eta_j\equiv\eta\le 2/L$. Then there holds
    \[ \E_{S,\A}[ F(\bar{\bw}_T) - F_S(\bar{\bw}_T) ]\le \frac{2G^2T\eta}{n}.\]
    \item (Non-smooth case) If $\eta_j\equiv\eta$, then there holds
    \[\E_{S,\A}[ F(\bar{\bw}_T) - F_S(\bar{\bw}_T) ]= \O\big( \sqrt{T}\eta + \frac{T\eta}{n} \big) .\]
\end{enumerate}
\end{theorem}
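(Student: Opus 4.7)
The plan is to combine the on-average argument stability bounds from Theorem~\ref{thm:stability} with the generic stability-to-generalization lemma (Lemma~\ref{thm:gen-model-stab}), which under $G$-Lipschitz continuity converts on-average $\epsilon$-argument-stability into an expected generalization error bound of order $G\epsilon$. Theorem~\ref{thm:stability} is phrased for the trajectory $\{\bw_t\}$, so the first step is to lift the stability bound to the weighted average $\bar{\bw}_T = \sum_{j=1}^T \eta_j \bw_j / \sum_{j=1}^T \eta_j$. By the triangle inequality,
\begin{equation*}
\|\bar{\bw}_T - \bar{\bw}_T^{(i)}\|_2 \le \frac{\sum_{j=1}^T \eta_j \|\bw_j - \bw_j^{(i)}\|_2}{\sum_{j=1}^T \eta_j},
\end{equation*}
and the per-iterate stability estimates underlying Theorem~\ref{thm:stability} bound $\E[\|\bw_t - \bw_t^{(i)}\|_2]$ by a quantity that is monotone in $t$ (smooth case) or controlled by the same closed-form expression already given in the theorem statement (non-smooth case). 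Hence the same $\epsilon$ controls the averaged iterate.

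Given this, the two parts follow by direct substitution. For part (a), specializing Theorem~\ref{thm:stability}(a) to $\eta_j \equiv \eta \le 2/L$ yields on-average argument stability with $\epsilon \le 2GT\eta/n$, and applying Lemma~\ref{thm:gen-model-stab} multiplies by $G$ to give the advertised $2G^2 T\eta/n$ bound. For part (b), Theorem~\ref{thm:stability}(b) with constant stepsize gives $\epsilon \le 2G\sqrt{T}\eta + 4GT\eta/n$; multiplying by $G$ produces the claimed $\O(\sqrt{T}\eta + T\eta/n)$ rate.

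The argument is essentially a one-step composition of two previously established ingredients, so there is no serious technical obstacle. The only mild subtlety is the passage from iterate stability to stability of the weighted average $\bar{\bw}_T$; as sketched above, this is handled by convexity of the norm together with the uniform-in-$t$ structure of the per-iterate stability bounds. Everything else is mechanical plug-in, and in particular the Markov sampling enters only through Theorem~\ref{thm:stability}, whose proof (as highlighted in Remark~\ref{rmk:on-ave}) has already absorbed the dependence via the identity $\sum_{i=1}^n \mathbb{I}_{[i_t=i]} = 1$.
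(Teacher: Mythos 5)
Your proposal is correct and matches the paper's argument: the paper proves Theorem~\ref{thm:generalization-error} exactly by combining the on-average argument stability bounds of Theorem~\ref{thm:stability} with Lemma~\ref{thm:gen-model-stab}, and the plug-in constants you obtain ($2G^2T\eta/n$ and $2G^2\sqrt{T}\eta+4G^2T\eta/n$) agree with the stated bounds. The only cosmetic difference is that the averaging step you sketch is already absorbed into the paper's proof of Theorem~\ref{thm:stability}, whose stability bound is stated directly for the averaged output $\bar{\bw}_T$, so no separate lifting argument is needed.
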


\subsection{Excess Population Risk of MC-SGD}

In this subsection, we present excess population risk bounds for MC-SGD in both smooth and non-smooth cases. 
The proofs are given in Appendix~\ref{proof:excess}. 
We use the notation $B\asymp \tilde{B}$ if there exist universal constants $c_1, c_2 > 0$ such that $c_1\tilde{B} \le B \le c_2 \tilde{B}$. Let $\lambda_i(P)$ be the $i$-th largest eigenvalue of the transition matrix $P$ and $\lambda(P)=(\max\{|\lambda_2(P)|,|\lambda_n(P)|\}+1)/2 \in [1/2,1)$. Let $K_P$ be the mixing time and $C_P$ be a constant depending on $P$ and its Jordan canonical form (detailed expressions are given in Lemma~\ref{lem:difference-stationary}).   We assume $\sup_{z\in\Z}f(0;z)$ and $\|\bw^{*}\|_2$ are bounded. 
\begin{assumption}\label{ass:reversible}
Assume the Markov chain $\{i_t\}$ is reversible with $P=P^\top$. 
\end{assumption}

\begin{theorem}[Excess  population risk for smooth losses]\label{thm:excess-smooth}
Suppose  $f$ is convex and Assumptions \ref{ass:Markov-chain},  \ref{ass:G-lipschitz},  \ref{ass:beta-smooth} and \ref{ass:reversible} hold.  Let $\W\in\R^d$. Let $\A$ be MC-SGD with $T$ iterations, 
and $\{ \bw_j\}_{j=1}^T$ be produced by $\A$ with $\bw_0=0$ and $\eta_j\equiv\eta \le 2/L$. 
If we select $T\asymp n$ and $\eta=(T\log(T))^{-1/2}$, then  
\[ \E_{S,\A} [ F(\bar{\bw}_T) ] - F(\bw^{*}) = \O\big( \sqrt{\log(n)}/(\sqrt{n}\log(1/\lambda(P)))\big).  \]
\end{theorem}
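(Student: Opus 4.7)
My plan is to use the error decomposition in equation \eqref{eq:error-decomposition}, bounding the generalization error via Theorem~\ref{thm:generalization-error}(a) and bounding the empirical optimization error via a dedicated convergence analysis for MC-SGD that accounts for the Markov bias. With $T\asymp n$ and $\eta=(T\log T)^{-1/2}$, both pieces will have to scale like $\sqrt{\log n}/\sqrt{n}$ up to the spectral-gap factor $1/\log(1/\lambda(P))$, and choosing the step-size this way is precisely what balances them.

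For the generalization term, Theorem~\ref{thm:generalization-error}(a) gives $\E_{S,\A}[F(\bar\bw_T)-F_S(\bar\bw_T)]\le 2G^2 T\eta/n$. Plugging in $T\asymp n$ and $\eta=(T\log T)^{-1/2}$ yields $\O(1/\sqrt{n\log n})$, which is strictly dominated by the target rate, so this piece is costless. No mixing assumption is needed here; the on-average argument stability bound already absorbs all Markov structure, as noted in Remark~\ref{rmk:on-ave}.

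The real work is the optimization error $\E_{S,\A}[F_S(\bar\bw_T)-F_S(\bw^*)]$. I would start from the standard non-expansive one-step descent inequality $\|\bw_j-\bw^*\|_2^2\le\|\bw_{j-1}-\bw^*\|_2^2-2\eta\langle \partial f(\bw_{j-1};z_{i_j}),\bw_{j-1}-\bw^*\rangle+\eta^2 G^2$, sum from $j=1$ to $T$, and use convexity to turn inner products into instantaneous excess losses $f(\bw_{j-1};z_{i_j})-f(\bw^*;z_{i_j})$. In the i.i.d.\ case one would take expectation and identify $\E[f(\bw_{j-1};z_{i_j})]=\E[F_S(\bw_{j-1})]$, but under Markov sampling $i_j$ is correlated with $\bw_{j-1}$, so this step fails and bias must be controlled. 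The standard remedy is a \emph{delayed iterate} argument: pick a mixing lag $\tau\asymp\log T/\log(1/\lambda(P))$ and decompose
\[
f(\bw_{j-1};z_{i_j})-F_S(\bw_{j-1}) = \underbrace{\bigl[f(\bw_{j-1};z_{i_j})-f(\bw_{j-\tau};z_{i_j})\bigr]}_{(\mathrm{I})} + \underbrace{\bigl[f(\bw_{j-\tau};z_{i_j})-F_S(\bw_{j-\tau})\bigr]}_{(\mathrm{II})}+\underbrace{\bigl[F_S(\bw_{j-\tau})-F_S(\bw_{j-1})\bigr]}_{(\mathrm{III})}.
\]
Terms $(\mathrm{I})$ and $(\mathrm{III})$ are bounded by $G\|\bw_{j-1}-\bw_{j-\tau}\|_2\le G^2\tau\eta$ through Lipschitzness and the bounded-subgradient update rule. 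For $(\mathrm{II})$ the iterate $\bw_{j-\tau}$ is essentially independent of $z_{i_j}$ after $\tau$ Markov steps, and Assumption~\ref{ass:reversible} combined with Lemma~\ref{lem:difference-stationary} yields $\|\pi^{j-1}P^{\tau}-\Pi^*\|_1\le C_P\lambda(P)^\tau$, so $(\mathrm{II})$ contributes $\O(G\|\bw_{j-\tau}\|_2\cdot C_P\lambda(P)^{\tau})$, which is made $O(1/T)$ by our choice of $\tau$.

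Assembling everything, the optimization error is bounded by $\O\bigl(\|\bw^*\|_2^2/(T\eta)+G^2\eta+G^2\eta\tau\bigr)$; with $\tau\asymp\log T/\log(1/\lambda(P))$, $T\asymp n$, and $\eta=(T\log T)^{-1/2}$ the dominant term is $G^2\eta\tau=\O\bigl(\sqrt{\log n}/(\sqrt n\,\log(1/\lambda(P)))\bigr)$, matching the claimed rate. The main obstacle is the clean execution of the delayed-iterate bias control in step $(\mathrm{II})$: making sure the conditional law of $z_{i_j}$ given $\bw_{j-\tau}$ is close to $\Pi^*=\mathrm{Unif}([n])$ in total variation, and turning that closeness into a bound on $\E[f(\bw_{j-\tau};z_{i_j})-F_S(\bw_{j-\tau})]$ uniformly in $j$. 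This is where Assumption~\ref{ass:reversible} and the spectral quantity $\lambda(P)$ enter through Lemma~\ref{lem:difference-stationary}; all remaining steps are standard averaging over $j$ and use of Jensen's inequality to pass from the running average of losses on $\bw_{j-1}$ to $F_S(\bar\bw_T)$.
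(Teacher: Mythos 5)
Your proposal is correct and follows essentially the same route as the paper: the error decomposition \eqref{eq:error-decomposition} with the generalization term handled by Theorem~\ref{thm:generalization-error}(a), and the empirical optimization error handled by a delayed-iterate (mixing-time) argument based on Lemma~\ref{lem:difference-stationary}, with Assumption~\ref{ass:reversible} forcing $K_P=0$ and the choices $T\asymp n$, $\eta=(T\log T)^{-1/2}$ balancing the terms exactly as in the paper's Theorem~\ref{thm:opt-convex}; your fixed lag $\tau\asymp \log T/\log(1/\lambda(P))$ versus the paper's $j$-dependent $k_j$ is only a cosmetic difference. The one detail you leave implicit is that, since $\W=\R^d$ is unbounded, the bias term $(\mathrm{II})$ needs an a priori polynomial bound on $\|\bw_{j-\tau}\|_2$; this is easily supplied either by the crude Lipschitz bound $\|\bw_t\|_2\le G\sum_{k\le t}\eta_k$ or by the paper's sharper estimate $\|\bw_t\|_2^2=\O\big(\sum_{k}\eta_k\big)$, and it only affects constants inside the logarithm defining the lag.
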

\begin{remark}
  A term ${K_P}/\big(M_0{n^{\frac{3}{4}}\log^{\frac{1}{4}}(n)  }\big)$ with $M_0= \min\{   \sqrt{n\log(n)} C_P  n\lambda(P)^{K_P} , 1 \}$ appears in the excess risk bound of MC-SGD (see the proof of Theorem~\ref{thm:excess-smooth}), which will be worse than $\sqrt{\log(n)}/\sqrt{n}$ when $K_P$ is large.   Note Lemma A.1 implies that this term will disappear if $P$ is symmetric. Hence,  we introduce Assumption 4 to get the nearly optimal  rate. 
\end{remark} 

\begin{theorem}[Excess population risk for non-smooth losses]\label{thm:excess-nonsmooth}
Suppose  $f$ is convex and Assumptions \ref{ass:Markov-chain},  \ref{ass:G-lipschitz}, \ref{ass:reversible} hold. Let $\W\in\R^d$.   Let $\A$ be MC-SGD with $T$ iterations, 
and $\{ \bw_j\}_{j=1}^T$ be produced by $\A$ with $\eta_j\equiv\eta $. 
If we select $T\asymp n^2$ and $\eta= {T^{-3/4}  }$, then  
\[  \E_{S,\A} [ F(\bar{\bw}_T) ] - F(\bw^{*}) = \O\big( 1/\big({\sqrt{n}}\log(1/\lambda(P))\big) \big).\]
\end{theorem}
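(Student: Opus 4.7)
The plan is to apply the standard error decomposition
\begin{equation*}
\E_{S,\A}[F(\bar{\bw}_T)] - F(\bw^{*}) = \E_{S,\A}\bigl[F(\bar{\bw}_T) - F_S(\bar{\bw}_T)\bigr] + \E_{S,\A}\bigl[F_S(\bar{\bw}_T) - F_S(\bw^{*})\bigr],
\end{equation*}
which holds after taking expectation because $\bw^{*}$ is independent of $S$ and hence $\E_S[F_S(\bw^{*})] = F(\bw^{*})$. The two pieces are then handled separately: the generalization term by the on-average argument stability results already developed, and the optimization term by a dedicated convergence analysis of MC-SGD in the non-smooth convex regime.

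For the generalization term, I would apply Theorem~\ref{thm:generalization-error}(b) directly, which with constant stepsize $\eta_j \equiv \eta$ gives $\E_{S,\A}[F(\bar{\bw}_T) - F_S(\bar{\bw}_T)] = \O(\sqrt{T}\eta + T\eta/n)$. No Markov-chain structure enters here because, as noted in Remark~\ref{rmk:on-ave}, the on-average argument stability lets us bypass the distribution of any single index $i_t$ and only uses $\sum_{i=1}^n \mathbb{I}_{[i_t=i]}=1$.

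The real work is the optimization error. I would prove a convergence bound of the form $\O\bigl(1/(\eta T) + \eta G^2 + \eta G^2 K_P\bigr)$ using the standard $K_P$-step lookback trick for Markov-chain SGD: starting from the telescoping identity $\|\bw_t - \bw^{*}\|_2^2 \le \|\bw_{t-1} - \bw^{*}\|_2^2 - 2\eta \langle \partial f(\bw_{t-1};z_{i_t}), \bw_{t-1} - \bw^{*}\rangle + \eta^2 G^2$, I write the inner product as $\langle \nabla F_S(\bw_{t-1}), \bw_{t-1}-\bw^{*}\rangle$ plus a bias term and control the bias by comparing $\bw_{t-1}$ against $\bw_{t-1-K_P}$. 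Conditional on $\bw_{t-1-K_P}$ the row $[P^{K_P}]_{i_{t-1-K_P},\cdot}$ is close to the uniform stationary $\Pi^{*}$, quantitatively by Lemma~\ref{lem:difference-stationary}; the $G$-Lipschitz property bounds $\|\bw_{t-1}-\bw_{t-1-K_P}\|_2 \le K_P G \eta$. Crucially, Assumption~\ref{ass:reversible} (symmetric $P$) removes the $C_P n \lambda(P)^{K_P}$-type remainder that appears in the smooth case, and choosing $K_P$ proportional to $1/\log(1/\lambda(P))$ gives precisely the $1/\log(1/\lambda(P))$ factor in the final rate.

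Combining, the total excess risk is of order $\sqrt{T}\eta + T\eta/n + 1/(\eta T) + \eta K_P$. Plugging $T \asymp n^2$ and $\eta = T^{-3/4}$ yields $\sqrt{T}\eta = T^{-1/4} = n^{-1/2}$, $T\eta/n = T^{1/4}/n = n^{-1/2}$, and $1/(\eta T) = T^{-1/4} = n^{-1/2}$, while $\eta K_P$ contributes only a lower-order correction absorbed into the stated bound $\O(1/(\sqrt{n}\log(1/\lambda(P))))$. The main obstacle will be the optimization bound: the biased, dependent Markov gradients require the lookback decomposition to be carried out carefully so that the mixing cost enters as a single $1/\log(1/\lambda(P))$ factor rather than a larger power of the mixing time, and reversibility is exactly what makes this sharp. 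The remainder of the argument is a routine balance of $T$ against $\eta$.
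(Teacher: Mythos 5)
Your proposal is correct and follows essentially the same route as the paper: the decomposition \eqref{eq:error-decomposition}, the non-smooth on-average stability bound of Theorem~\ref{thm:generalization-error}(b), and a lookback/mixing argument for the optimization error (the paper's Theorem~\ref{thm:opt-convex} specialized to $\eta=T^{-3/4}$), balanced with $T\asymp n^2$. The only imprecision is in the lookback: the paper uses a time-varying window $k_j\asymp \log(C_P D n j)/\log(1/\lambda(P))$ with $D=\O(\sqrt{\eta T})$ coming from an explicit iterate-norm bound (needed since $\W=\R^d$), and under Assumption~\ref{ass:reversible} the constant $K_P$ is exactly $0$; these refinements only affect lower-order terms, so your final rate is unaffected.
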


\begin{remark}
To estimate the excess population risk, 
we need the convergence rates of MC-SGD which can be found in Appendix~\ref{appendix-opt-sgd}. \cite{Sun2018Markov}  established a convergence rate of $\O(1/T^{1-q})$ with some $q\in(1/2,1)$ under the  bounded parameter domain assumption. We remove this  assumption by showing  $\|\bw_t\|_2^2=\O(\sum_{k=1}^T\eta_k)$ and obtain  the nearly optimal  convergence rate $\tilde{\O}(1/\sqrt{T})$ with a careful choice of $\eta=1/ \sqrt{T \log(T) } $. 
To understand the variation of the algorithm,  we present a confidence-based bound for optimization error, which matches the bound in expectation up to a constant factor. We also provide the convergence analysis for non-convex problems in Appendix~\ref{appendix-opt-sgd}.
\end{remark}
 
\begin{remark}
Theorems~\ref{thm:excess-smooth} and \ref{thm:excess-nonsmooth} show, after carefully selecting the iteration number $T$ and stepsize $\eta$, that  the excess population risk rate $\O(1/\sqrt{n})$ is achieved in both  smooth and non-smooth cases. 
{
Note \cite{bassily2020stability,hardt2016train,lei2020fine} show that the excess population risk  rate $\O(1/\sqrt{n})$ is optimal for the i.i.d. sampling case. Therefore, our results for MC-SGD are also optimal since the i.i.d. sampling is a special case of Markov sampling. 
Our results imply that despite the gradients are biased and dependent across iterations in Markov sampling, the  generalization performance of SGD is competitive with the i.i.d.  sampling case. } 
Theorems~\ref{thm:excess-smooth} and \ref{thm:excess-nonsmooth} also show the impact of the smoothness in achieving the optimal rate. 
The rate for the non-smooth case in Theorem~\ref{thm:excess-nonsmooth} looks slightly better than the smooth case (Theorem~\ref{thm:excess-smooth}) with a logarithmic term. However, the optimal rate can be achieved with a linear gradient complexity (i.e., the total number of computing the gradient) for smooth losses, while Theorem~\ref{thm:excess-nonsmooth} implies that gradient complexity $\O(n^2)$ is required for non-smooth losses.
\end{remark}

\begin{remark}\label{rmk:discuss-P}
According to Theorems~\ref{thm:excess-smooth} and \ref{thm:excess-nonsmooth}, we can further observe how the transition matrix $P$ affects the excess population risks. 
Indeed, the excess population risk rates are monotonically increasing w.r.t $\lambda(P)$. 
Particularly, the closer $\lambda(P)$ is to $1/2$, the better the rate is. Let us consider two extreme examples. Suppose the Markov chain starts from the uniform distribution and has transition matrix $P=\frac{1}{n}\mathbf{1}_n\mathbf{1}_n^T$. MC-SGD degenerates to SGD with i.i.d. sampling in this case. The excess population risk rate $\O(1/\sqrt{n})$ is obtained from Theorem~\ref{thm:excess-nonsmooth} with $\lambda(P)=1/2$. For a Markov chain moving on a circle (i.e., if the chain is currently at state $i$, then it goes to states $i+1$, $i$ and $i-1$ with equal probability), we can verify that $\lambda(P)=\O(1-1/n^2)$,  which implies a bad rate in this case. 
\end{remark}

\begin{remark}
\cite{yang2021simple} proposed a simple MC-SGD algorithm for pairwise learning associated with a pairwise loss $f(\bw, z, z')$. Specifically, at iteration $t$, the algorithm update the model parameter as follows:  $\bw_{t} = \bw_{t-1} - \eta_t \nabla_\bw f(\bw_{t-1}, z_{i_t}, z_{i_{t-1}})$ where $z_{i_t}$ and $z_{i_{t-1}}$ are i.i.d.  sampled from the training data of size $n$ at time $t$ and $t-1$, respectively.
In Remark 3 of \cite{yang2021simple}, it was shown that $\{\xi_t = (i_t, i_{t-1}) \in [n] \times [n]\}$ does form a time-homogeneous, irreducible and aperiodic Markov chain. There are two key differences between our work and \cite{yang2021simple}. Firstly, the work \cite{yang2021simple} used uniform stability directly due to $\E_\A[\mathbb{I}_{i_t=i}]=1/n$, while this term is not easy to control in our general setting (see Remark~\ref{rmk:on-ave} for details).  To overcome this hurdle, we resort to the on-average stability and show that MC-SGD achieves the optimal excess risk rate. Secondly, the proofs there critically rely on the fact  $f(\bw_{t-1};\bz_{i_t},\bz_{i_{t-1}})=f(\bw_{t-2};\bz_{i_t},\bz_{i_{t-1}})+\O(\eta_{t-1})$ and the independence of $\bw_{t-2}$ w.r.t. ${i_t}$ and $i_{t-1}$.  However, these specially tailored techniques  for pairwise learning do not apply to the general Markov setting as we considered here.   
\end{remark}

\section{Results for Markov Chain SGDA}\label{sec:MC-SGDA}

In this section, we study the generalization analysis of MC-SGDA for minimax optimization problems. Let $(\bar{\bw}_T, \bar{\bv}_T)$ be the output of MC-SGDA with $T$ iterations, where 
\begin{equation}\label{eq:avgoutput-sgda}
    \bar{\bw}_T= {\sum_{j=1}^T \eta_j\bw_j}\big /{\sum_{j=1}^T \eta_j}  \text{ and }   \bar{\bv}_T= {\sum_{j=1}^T \eta_j\bv_j}\big /{\sum_{j=1}^T \eta_j}.
\end{equation} 
We first introduce some necessary definitions and assumptions.

\begin{definition}
Let $\rho\geq 0$ and $g:\W\times \V \mapsto \R$.  We say $g$ is $\rho$-strongly-convex-strongly-concave ($\rho$-SC-SC) if, 
    for any $\bv\in\V$, the function $\bw\mapsto g(\bw,\bv)$ is $\rho$-strongly-convex and, for any $\bw\in\W$, the function $\bv\mapsto g(\bw,\bv)$ is $\rho$-strongly-concave. We say $g$ is convex-concave if $g$ is $0$-SC-SC. 
 
\end{definition}
\vspace*{-2mm}

The following two assumptions are standard~\cite{farnia2021train,zhang2021generalization}. Assumption \ref{ass:lipschitz} amounts to saying $f$ is Lipschitz continuous w.r.t. both $\bw$ and $\bv$, while Assumption \ref{ass:smooth} considers smoothness conditions.  
\begin{assumption}\label{ass:lipschitz}
  Assume for all $\bw\in\W, \bv\in\V$ and $z\in\Z$, 
 $ \big\|\partial_{\bw}f(\bw,\bv;z)\big\|_2\leq G\;\text{ and }\; \big\|\partial_{\bv}f(\bw,\bv;z)\big\|_2\leq G.$
\end{assumption}
\begin{assumption}\label{ass:smooth}
  For any $z$, assume the function $(\bw,\bv)\mapsto f(\bw,\bv;z)$ is  $L$-smooth, i.e., the following inequality holds for all $\bw\in\W,\bv\in\V$ and $z\in\Z$ 
  \[
  \left\|\begin{pmatrix}
           \partial_{\bw}f(\bw,\bv;z)-\partial_{\bw}f(\bw',\bv';z) \\
           \partial_{\bv}f(\bw,\bv;z)-\partial_{\bv}f(\bw',\bv';z)
         \end{pmatrix}\right\|_2\!\leq\! L\left\|\begin{pmatrix}
                                             \bw\!-\!\bw' \\
                                             \bv\!-\!\bv'
                                           \end{pmatrix}\right\|_2.
  \]
  \vspace*{-2mm}
\end{assumption}

\subsection{Stability and Generalization Measures}
\vspace*{-1mm}
We use algorithmic stability to study the generalization of minimax learners. To this end, we first introduce the stability for minimax optimization problems.

\begin{definition}[Argument stability for minimax problems]
Let $S,\widetilde{S}$ and ${S}^{(i)}$ be defined as Definition \ref{def:on-average-stability}.
Let $\A$ be a randomized algorithm and $\epsilon>0$. We say $\A$ is on-average $\epsilon$-argument-stable for minimax problems if 
$
\frac{1}{n}\sum_{i=1}^{n}\ebb\big[\|\A_{\bw}(S^{(i)})\!-\!\A_{\bw}(S)\|_2+\|\A_{\bv}(S^{(i)})\!-\!\A_{\bv}(S)\|_2\big]\leq \epsilon.
$
\end{definition}

The following theorem establishes a connection between stability and generalization. Part (a) shows that on-average argument stability implies generalization measured by the weak PD risk, while Part (b) shows that on-average argument stability guarantees a strong notion of generalization in terms of the primal risk under a strong concavity assumption. Theorem \ref{thm:stab-gen-min-max} will be proved in Appendix \ref{proof:sgda-stab}. 
\begin{theorem}[Generalization via argument  stability]\label{thm:stab-gen-min-max}
Let $\A$ be a randomized algorithm and $\epsilon>0$.
 \vspace*{-1mm}
\begin{enumerate}[label=(\alph*), leftmargin=*]\setlength\itemsep{-1mm}
  \item If $\A$ is on-average $\epsilon$-argument-stable and Assumption \ref{ass:lipschitz} holds, then there holds
  \[  \triangle^w(\A_{\bw},\A_{\bv})-\triangle^w_\emp(\A_{\bw},\A_{\bv})\leq G\epsilon.\]
  \item
  If $\A$ is on-average $\epsilon$-argument-stable, the function $\bv\mapsto F(\bw,\bv)$ is $\rho$-strongly-concave and Assumptions \ref{ass:lipschitz}, \ref{ass:smooth} hold, then we have 
 \[ \ebb_{S,\A}\big[R(\A_{\bw}(S))-R_S(\A_{\bw}(S))\big]\leq \big(1+L/\rho\big)G\epsilon.\]
\end{enumerate}  
\end{theorem}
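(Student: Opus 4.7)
\textbf{Proof plan for Theorem~\ref{thm:stab-gen-min-max}.}

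\emph{Part (a).} The plan is to decouple the two max/min operations in $\triangle^w$ from the argument of $F$ by passing to population optimizers that do not depend on $S$. Let $\bv^\star \in \argmax_{\bv\in\V}\ebb[F(\A_\bw(S),\bv)]$ and $\bw^\star \in \argmin_{\bw\in\W}\ebb[F(\bw,\A_\bv(S))]$; note that these are deterministic (the expectation is taken first). Since $\max_\bv\ebb[F_\S(\A_\bw,\bv)]\geq\ebb[F_\S(\A_\bw,\bv^\star)]$ and $\min_\bw\ebb[F_\S(\bw,\A_\bv)]\leq\ebb[F_\S(\bw^\star,\A_\bv)]$, I obtain
\[
\triangle^w(\A_\bw,\A_\bv)-\triangle^w_\emp(\A_\bw,\A_\bv)\leq \ebb[F(\A_\bw(S),\bv^\star)-F_\S(\A_\bw(S),\bv^\star)]+\ebb[F_\S(\bw^\star,\A_\bv(S))-F(\bw^\star,\A_\bv(S))].
\]
For each of these two terms $\bv^\star$ (resp.\ $\bw^\star$) is fixed. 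I then apply the standard symmetry/ghost-sample trick: writing the population risk as $\tfrac{1}{n}\sum_i\ebb[f(\A_\bw(S),\bv^\star;\tilde z_i)]$ with $\tilde z_i$ iid ghost samples and swapping $z_i\leftrightarrow\tilde z_i$ converts this into $\tfrac{1}{n}\sum_i\ebb[f(\A_\bw(S^{(i)}),\bv^\star;z_i)]$. Using Assumption~\ref{ass:lipschitz} ($G$-Lipschitzness of $f$ in $\bw$), each term is bounded by $\tfrac{G}{n}\sum_i\ebb\|\A_\bw(S^{(i)})-\A_\bw(S)\|_2$; the analogous computation handles $\bv^\star$ and produces $\tfrac{G}{n}\sum_i\ebb\|\A_\bv(S^{(i)})-\A_\bv(S)\|_2$. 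Summing gives $G\epsilon$ by the on-average argument stability.

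\emph{Part (b).} The extra obstacle is that the natural maximizer now depends on $\A_\bw(S)$, so the symmetry trick no longer leaves a fixed vector. Define $\bv^\star(\bw):=\argmax_{\bv\in\V}F(\bw,\bv)$, which exists and is unique by $\rho$-strong concavity. Since $R_\S(\A_\bw(S))\geq F_\S(\A_\bw(S),\bv^\star(\A_\bw(S)))$,
\[
R(\A_\bw(S))-R_\S(\A_\bw(S))\leq F(\A_\bw(S),\bv^\star(\A_\bw(S)))-F_\S(\A_\bw(S),\bv^\star(\A_\bw(S))).
\]
Now I apply the ghost-sample swap to the right-hand side: $\ebb[F(\A_\bw(S),\bv^\star(\A_\bw(S)))]=\tfrac{1}{n}\sum_i\ebb[f(\A_\bw(S^{(i)}),\bv^\star(\A_\bw(S^{(i)}));z_i)]$, and Assumption~\ref{ass:lipschitz} yields the bound
\[
\ebb[R(\A_\bw(S))-R_\S(\A_\bw(S))]\leq \frac{G}{n}\sum_{i=1}^n\ebb\bigl[\|\A_\bw(S^{(i)})-\A_\bw(S)\|_2+\|\bv^\star(\A_\bw(S^{(i)}))-\bv^\star(\A_\bw(S))\|_2\bigr].
\]
The key step is a Lipschitz-continuity lemma: under Assumption~\ref{ass:smooth} and $\rho$-strong concavity of $F(\bw,\cdot)$, the map $\bw\mapsto\bv^\star(\bw)$ is $(L/\rho)$-Lipschitz. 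This follows from the optimality condition $\partial_\bv F(\bw,\bv^\star(\bw))=0$ (or its projected version): strong concavity gives $\rho\|\bv^\star(\bw_1)-\bv^\star(\bw_2)\|_2\leq \|\partial_\bv F(\bw_1,\bv^\star(\bw_2))-\partial_\bv F(\bw_2,\bv^\star(\bw_2))\|_2$, and $L$-smoothness bounds the right-hand side by $L\|\bw_1-\bw_2\|_2$. Plugging this into the previous display bounds the second term by $(L/\rho)\cdot\tfrac{G}{n}\sum_i\ebb\|\A_\bw(S^{(i)})-\A_\bw(S)\|_2$, and combining with the first term gives the factor $1+L/\rho$; on-average argument stability finishes the proof.

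\emph{Main obstacle.} Part (a) is a direct adaptation of the fixed-comparator trick to on-average stability and is routine. The substantive step is Part (b), specifically the Lipschitz property of $\bv^\star(\cdot)$, which is where the $L/\rho$ factor enters and where smoothness becomes indispensable. The rest of the argument follows the same symmetry + Lipschitz pattern as Part (a), with the caveat that the on-average stability bound in $\bw$ alone is used (rather than the full $\bw$-plus-$\bv$ stability).
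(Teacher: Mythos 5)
Your proposal is correct, and Part (a) is essentially the paper's own argument: the paper first bounds the weak PD generalization gap by $\sup_{\bv'}\ebb[F(\A_\bw(S),\bv')-F_S(\A_\bw(S),\bv')]+\sup_{\bw'}\ebb[F_S(\bw',\A_\bv(S))-F(\bw',\A_\bv(S))]$ via the elementary $\sup$/$\inf$ inequalities and then applies exactly your symmetrization-plus-Lipschitz step for a fixed comparator; your variant of fixing $\bv^\star,\bw^\star$ as population-level optimizers is the same decoupling idea, with the only cosmetic caveat that the paper's $\sup$ formulation avoids having to assume the maximizer/minimizer is attained. For Part (b) the paper gives no proof at all and simply cites \cite{lei2021stability}; your reconstruction is the standard argument behind that citation and is correct: the decomposition through $\bv^\star(\A_\bw(S))$, the ghost-sample swap (valid here because $\bv^\star(\cdot)$ is a deterministic map, so swapping $z_i\leftrightarrow\tilde z_i$ exchanges $S$ and $S^{(i)}$ coherently), and the $(L/\rho)$-Lipschitzness of the best-response map $\bw\mapsto\bv^\star(\bw)$ derived from strong concavity plus $L$-smoothness (with the variational-inequality version handling a constrained $\V$), which is precisely where the $1+L/\rho$ factor comes from. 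Note also that your Part (b) only consumes the $\bw$-component of the on-average stability, which is consistent with the stated bound since the definition controls the sum of both components. So the proposal is sound; its added value over the paper's text is that it makes the deferred Part (b) self-contained.
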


In the following theorem we develop stability bounds for MC-SGDA applied to convex-concave problems. 
The proof is given in Section \ref{proof:sgda-stab} of the Appendix. 
\begin{theorem}[Stability bounds\label{thm:stab-sgda}]
Assume for all $z$, the function $(\bw,\bv)\mapsto f(\bw,\bv;z)$ is convex-concave. Let {  $\W=\R^d$} and Assumption \ref{ass:lipschitz} hold, and let $\A$ be MC-SGDA with $T$ iterations. 
 \vspace*{-1mm}
\begin{enumerate}[label=(\alph*), leftmargin=*]\setlength\itemsep{-2mm}
  \item (Smooth case)  If Assumption \ref{ass:smooth} holds and $\sum_{j=1}^{T}\eta_j^2\leq1/(2L^2)$, then $\A$ is on-average $\epsilon$-argument stable with  
$
\epsilon\leq 4G\big(\frac{1}{n}\sum_{j=1}^{T}\eta_j^2\big)^{1/2}+\frac{8\sqrt{2}G}{n}\sum_{j=1}^{T}\eta_j.
$
 \item (Non-smooth case)  $\A$ is on-average $\epsilon$-argument stable with 
$
\epsilon \leq 2 G\sqrt{2\sum_{j=1}^{T} \eta_j^2} + \frac{4\sqrt{2}G}{n} \sum_{j=1}^{T} \eta_j.
$
\end{enumerate}
\end{theorem}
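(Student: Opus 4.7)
The plan is to couple two MC-SGDA trajectories from $S$ and from $S^{(i)}$ by running them with the same initialization and the same Markov chain $\{i_t\}$, and to track the displacement $\Psi_t^{(i)}:=\|(\bw_t-\bw_t^{(i)},\bv_t-\bv_t^{(i)})\|_2$. I split each step according to whether $i_t=i$. When $i_t\ne i$ both updates apply the gradient of the same loss $f(\cdot,\cdot;z_{i_t})$; expanding $\|\cdot\|_2^2$ after one GDA step and invoking monotonicity of the saddle-point operator $(\bw,\bv)\mapsto(\partial_\bw f,-\partial_\bv f)$, a consequence of convex-concavity, makes the cross term nonpositive. When $i_t=i$ the two updates see different data; the triangle inequality and Assumption~\ref{ass:lipschitz}, via the fact that the combined subgradient $(-\partial_\bw f,\partial_\bv f)$ has Euclidean norm at most $\sqrt{2}G$, yield $\Psi_t^{(i)}\le\Psi_{t-1}^{(i)}+2\sqrt{2}\eta_t G$.

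For part (a), $L$-smoothness further controls the residual gradient in the monotonicity expansion by $L\Psi_{t-1}^{(i)}$, producing $\Psi_t^{(i),2}\le(1+\eta_t^2L^2)\Psi_{t-1}^{(i),2}$ in the same-sample case. Combining with the different-sample step gives the unified recursion
\begin{equation*}
\Psi_t^{(i),2}\le(1+\eta_t^2L^2)\Psi_{t-1}^{(i),2}+\bigl(4\sqrt{2}\,\eta_tG\,\Psi_{t-1}^{(i)}+8\eta_t^2G^2\bigr)\mathbb{I}_{[i_t=i]},
\end{equation*}
and under $\sum_j\eta_j^2\le 1/(2L^2)$ the product $\prod_j(1+\eta_j^2L^2)$ is bounded by a constant, so that unrolling brings the shift terms out with a bounded prefactor. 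For part (b) no smoothness is available, so I drop the $(1+\eta_t^2L^2)$ factor but retain the shift/noise structure: the same-sample step still contributes only $8\eta_t^2G^2$ via $\eta_t^2\|\text{grad diff}\|_2^2$ and convex-concave monotonicity, and the different-sample step contributes the same additive term as in (a).

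The decisive step, foreshadowed in Remark~\ref{rmk:on-ave}, is to sum the per-$i$ recursion over $i\in[n]$ and invoke the deterministic identity $\sum_{i=1}^n\mathbb{I}_{[i_t=i]}=1$, which collapses the Markov-sampling indicator at every $t$ regardless of the transition matrix $P$. Writing $U_t:=\sum_i\Psi_t^{(i),2}$, the summed inequality takes the form $\E[U_T]\le 8nG^2\sum_j\eta_j^2+4\sqrt{2}G\sum_j\eta_j\,\E[\Psi_{j-1}^{(i_j)}]$ (plus the multiplicative $(1+\eta_j^2L^2)$ factors in part (a)); the cross term is then bounded via $\E[\Psi_{j-1}^{(i_j)}]\le\sqrt{\E[U_{j-1}]/n}$, which follows from the symmetry of the i.i.d. data and the uniform-marginal behavior of the stationary chain, and via the monotonicity $\E[U_{j-1}]\le\E[U_T]$. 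Solving the resulting quadratic inequality in $\sqrt{\E[U_T]/n}$, applying Jensen's inequality $\frac{1}{n}\sum_i\Psi_T^{(i)}\le(\frac{1}{n}U_T)^{1/2}$, converting from the concatenated $\ell_2$-norm to $\|\Delta\bw\|_2+\|\Delta\bv\|_2$ (at most a $\sqrt{2}$ factor), and passing from iterates to the averaged output $(\bar\bw_T,\bar\bv_T)$ by convexity of the norm, delivers the stated bounds. The main technical obstacle is obtaining the $\frac{1}{n}$ scaling in the shift-dominated term: the on-average viewpoint sidesteps the intractable evaluation of $\E_\A[\mathbb{I}_{[i_t=i]}]=\sum_k[P^{t-1}]_{k,i}\pi^1(k)$ by collapsing via $\sum_i\mathbb{I}_{[i_t=i]}=1$, leaving only a single quadratic inequality in $\E[U_T]$ to solve.
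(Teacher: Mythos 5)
Your setup (coupling the two trajectories, splitting on $i_t=i$ versus $i_t\neq i$, using monotonicity of the saddle operator in the same-sample case and Lipschitzness in the different-sample case, and aiming to collapse the indicators via $\sum_{i=1}^n\mathbb{I}_{[i_t=i]}=1$) matches the paper's proof. But your handling of the cross term has a genuine gap. After summing over $i$, the cross term is $4\sqrt{2}G\sum_j\eta_j\Psi_{j-1}^{(i_j)}$, and you bound $\E[\Psi_{j-1}^{(i_j)}]\le\sqrt{\E[U_{j-1}]/n}$ by appealing to ``the uniform-marginal behavior of the stationary chain.'' This inequality requires $i_j$ to be (conditionally) uniform on $[n]$ and essentially independent of the coupled discrepancies $\Psi_{j-1}^{(\cdot)}$ — exactly the property that fails under Markov sampling: conditionally on the past, $i_j\sim[P]_{i_{j-1},\cdot}$, which is neither uniform nor independent of $\Psi_{j-1}^{(\cdot)}$ (the discrepancy $\Psi_{j-1}^{(i)}$ is large precisely for indices $i$ the chain has visited often, and $i_j$ is correlated with those visits). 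Moreover, Theorem \ref{thm:stab-sgda} makes \emph{no} assumption on the chain at all (Assumption \ref{ass:Markov-chain} is not invoked), so there is no stationarity or uniform stationary distribution to appeal to. A concrete failure: a chain absorbed at state $1$ gives $\Psi_{j-1}^{(1)}\asymp G\sum_{k<j}\eta_k$ and $\Psi_{j-1}^{(i)}=0$ for $i\neq1$, so $\E[\Psi_{j-1}^{(i_j)}]=\Psi_{j-1}^{(1)}$ while $\sqrt{\E[U_{j-1}]/n}=\Psi_{j-1}^{(1)}/\sqrt{n}$; your inequality is off by a factor $\sqrt{n}$ (the theorem itself survives in this example only because the final average over $i$ supplies the $1/n$, not because of your intermediate bound). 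Falling back on the trivial bound $\Psi_{j-1}^{(i_j)}\le\sqrt{U_{j-1}}$ would degrade the $T\eta/n$ term to $T\eta/\sqrt{n}$, which is not the claimed rate.

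The paper sidesteps this by never taking expectations over the sampling at this stage: it keeps the recursion \emph{per index $i$ and per sample path}, bounds $\Psi_{j-1}^{(i)}$ by the running maximum $\delta_t^{(i)}=\max_{j\le t}\Psi_j^{(i)}$ (absorbing the smoothness term through $\sum_j\eta_j^2\le1/(2L^2)$ rather than through a product $\prod_j(1+L^2\eta_j^2)$), solves the resulting scalar quadratic inequality $x^2\le bx+c\Rightarrow x\le b+\sqrt{c}$ for each $i$ to get $\Psi_t^{(i)}\le 4G\big(\sum_j\eta_j^2\mathbb{I}_{[i_j=i]}\big)^{1/2}+8\sqrt{2}G\sum_j\eta_j\mathbb{I}_{[i_j=i]}$, and only \emph{then} averages over $i$, using Jensen for the square-root term and the deterministic identity $\sum_i\mathbb{I}_{[i_j=i]}=1$ for both terms. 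That ordering (solve the quadratic pathwise per $i$, then average over $i$) is the missing ingredient: it delivers the $1/n$ scaling without any distributional claim about $i_j$, and your argument should be repaired along these lines.
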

\begin{remark}
For convex-concave and Lipschitz problems, the stability bound of the order $\O(\eta(\sqrt{T}+T/n))$ was established for SGDA with a constant stepsize under the uniformly i.i.d.  sampling setting. Under a further smoothness assumption, the stability bound was improved to the order of $O(\eta T/n)$~\cite{lei2021stability}. Our stability bounds in Theorem \ref{thm:stab-sgda} match these results up to a constant factor and extend them to the Markov sampling case. 
\end{remark}
\begin{remark}
Let $\{(\bw_t^{(i)},\bv_t^{(i)})\}$ be the SGDA sequence based on $S^{(i)}$.
The existing stability analysis~\cite{lei2021stability} builds a recursive relationship for $\ebb_{\A}\big[\|\bw_t-\bw_t^{(i)}\|_2^2+\|\bv_t-\bv_t^{(i)}\|_2^2\big]$, which crucially depends on the i.i.d. sampling property of $i_t
\in [n]$. This strategy does not apply to MC-SGDA since the conditional expectation over $i_t$ is in a much complex manner due to the Markov Chain sampling. We bypass this difficulty by building a recursive relationship for $\|\bw_t-\bw_t^{(i)}\|_2^2+\|\bv_t-\bv_t^{(i)}\|_2^2$ in terms of a sequence of random variables $\ibb_{[i_t=i]}$. A key observation is that the effect of randomness would disappear if we consider on-average argument stability since $\sum_{i=1}^n\ibb_{[i_t=i]}=1$ for any $t\in\mathbb{N}$.
\end{remark}
We can combine the stability bounds in Theorem \ref{thm:stab-sgda} and Theorem \ref{thm:stab-gen-min-max} to develop generalization bounds. We first establish weak PD risk bounds in Theorem \ref{thm:gen-sgda}, and then move on to primal population risk bounds in Theorem \ref{thm:gen-sgda-primal}. The proofs are given in Section \ref{proof:sgda-stab} of the Appendix.

\begin{theorem}[Weak PD risk bounds\label{thm:gen-sgda}]
Suppose Assumption \ref{ass:lipschitz} holds. 
Assume for all $z$, the function $(\bw,\bv)\mapsto f(\bw,\bv;z)$ is convex-concave.  
Let { $\W=\R^d$} and $\{ \bw_j,\bv_j\}_{j=1}^T$ be produced by MC-SGDA  with $\eta_j\equiv\eta$. Let $\A$ be defined by $\A_\bw(S)=\bar{\bw}_T$ and $\A_\bv(S)=\bar{\bv}_T$ for $(\bar{\bw}_T,\bar{\bv}_T)$ in \eqref{eq:avgoutput-sgda}. Denote $\epsilon_{gen}^w:=\triangle^w(\bar{\bw}_T,\bar{\bv}_T)-\triangle^w_\emp(\bar{\bw}_T,\bar{\bv}_T)$.
\vspace*{-1mm}
\begin{enumerate}[label=(\alph*), leftmargin=*]\setlength\itemsep{-1mm}
 \item (Smooth case) If Assumption \ref{ass:smooth} holds and $\sum_{j=1}^{T}\eta_j^2\leq1/(2L^2)$, then 
\[ \epsilon_{gen}^w\leq 4G^2\big(\frac{1}{n}\sum_{j=1}^{T}\eta_j^2\big)^{1/2}+\frac{8\sqrt{2}G^2}{n}\sum_{j=1}^{T}\eta_j.\]
\item (Non-smooth case) The weak PD risk satisfies 
\[\epsilon_{gen}^w\leq 2\sqrt{2}G^2\big(\sum_{j=1}^{T}\eta_j^2\big)^{1/2}+\frac{2\sqrt{2}G^2}{n}\sum_{j=1}^{T}\eta_j.\]
\end{enumerate}
\end{theorem}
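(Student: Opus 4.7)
The statement is positioned to be a direct corollary of the two results that immediately precede it in the paper, so the plan is simply to chain Theorem~\ref{thm:stab-gen-min-max}(a) with the stability bounds of Theorem~\ref{thm:stab-sgda}. Theorem~\ref{thm:stab-gen-min-max}(a) says that under Assumption~\ref{ass:lipschitz}, any algorithm whose on-average argument stability for minimax problems is bounded by $\epsilon$ satisfies $\triangle^w(\A_\bw,\A_\bv)-\triangle^w_\emp(\A_\bw,\A_\bv)\leq G\epsilon$. So my strategy is: first pin down the stability $\epsilon$ of the averaged-iterate algorithm $\A$ defined by $\A_\bw(S)=\bar{\bw}_T$, $\A_\bv(S)=\bar{\bv}_T$, and then multiply by $G$.

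To get $\epsilon$, I would invoke Theorem~\ref{thm:stab-sgda} in each regime. In the smooth case (part (a)), under Assumption~\ref{ass:smooth} and $\sum_j\eta_j^2\leq 1/(2L^2)$, that theorem gives $\epsilon\leq 4G(n^{-1}\sum_{j=1}^T\eta_j^2)^{1/2}+8\sqrt{2}Gn^{-1}\sum_{j=1}^T\eta_j$; plugging this into $G\epsilon$ immediately produces the smooth bound stated in Theorem~\ref{thm:gen-sgda}(a). The non-smooth case (part (b)) is analogous, using Theorem~\ref{thm:stab-sgda}(b) and multiplying by $G$.

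The one small gap to close is that Theorem~\ref{thm:stab-sgda} is formulated for "MC-SGDA with $T$ iterations" while the algorithm here explicitly outputs the weighted averaged iterates $(\bar{\bw}_T,\bar{\bv}_T)$. If the stability bound in Theorem~\ref{thm:stab-sgda} is already proved for the averaged output, the reduction is instantaneous. Otherwise, I would use convexity of the norm together with the triangle inequality: letting $\lambda_j=\eta_j/\sum_k\eta_k$,
\[
\|\bar{\bw}_T-\bar{\bw}_T^{(i)}\|_2+\|\bar{\bv}_T-\bar{\bv}_T^{(i)}\|_2\leq\sum_{j=1}^T\lambda_j\bigl(\|\bw_j-\bw_j^{(i)}\|_2+\|\bv_j-\bv_j^{(i)}\|_2\bigr),
\]
so a uniform-in-$j$ per-iterate stability bound of the same order transfers to the averaged iterate after averaging over $i$ and taking expectations. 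Since Theorem~\ref{thm:stab-sgda} was presumably proved by exactly such a uniform argument (see Remark~16 in the excerpt, which builds a recursion in $\|\bw_t-\bw_t^{(i)}\|_2^2+\|\bv_t-\bv_t^{(i)}\|_2^2$), this step is routine.

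Because the two ingredients are already in place, there is no genuinely hard step; the proof is a short composition. The only point requiring minor care is the aforementioned compatibility between the iterate-level stability and the averaged output, and correctly tracking the constants when multiplying by $G$ (in particular the non-smooth bound uses $2G\sqrt{2\sum_j\eta_j^2}=2\sqrt{2}G(\sum_j\eta_j^2)^{1/2}$, which lines up with the stated $2\sqrt{2}G^2(\sum_j\eta_j^2)^{1/2}$ after multiplying by $G$).
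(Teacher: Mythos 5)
Your proposal is correct and follows essentially the same route as the paper: the paper's proof of Theorem~\ref{thm:gen-sgda} is precisely the composition of Theorem~\ref{thm:stab-gen-min-max}(a) with the stability bounds of Theorem~\ref{thm:stab-sgda}, and since the proof of Theorem~\ref{thm:stab-sgda} already bounds the stability of the averaged output $(\bar{\bw}_T,\bar{\bv}_T)$ via convexity of the norm, the compatibility issue you flag is already taken care of. (As a side note, multiplying the non-smooth stability bound by $G$ gives $\frac{4\sqrt{2}G^2}{n}\sum_{j=1}^{T}\eta_j$ for the second term, so the constant $2\sqrt{2}$ in part (b) of the statement is a slip in the paper rather than in your argument.)
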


\begin{theorem}[Primal risk bounds\label{thm:gen-sgda-primal}]
Suppose Assumption \ref{ass:lipschitz} holds.  Assume for all $z$, the function $(\bw,\bv)\mapsto f(\bw,\bv;z)$ is convex-concave, and the function $\bv\mapsto F(\bw,\bv)$ is $\rho$-strongly-concave.  
Let {$\W=\R^d$} and let $\{ \bw_j,\bv_j\}_{j=1}^T$ be produced by MC-SGDA  with $\eta_j\equiv\eta$. Let $\A$ be defined by $\A_\bw(S)=\bar{\bw}_T$ and $\A_\bv(S)=\bar{\bv}_T$ for $(\bar{\bw}_T,\bar{\bv}_T)$ in \eqref{eq:avgoutput-sgda}. Denote $\epsilon_{gen}^p:=\ebb_{S,\A}\big[R(\bar{\bw}_T)-R_S(\bar{\bw}_T)\big]$.
\begin{enumerate}[label=(\alph*), leftmargin=*]\setlength\itemsep{-1mm}
\item (Smooth case) If Assumption \ref{ass:smooth} holds and $\sum_{j=1}^{T}\eta_j^2\leq1/(2L^2)$, then 
\[\epsilon_{gen}^p\leq 4G^2(1+L/\rho)\big(\big(\frac{1}{n}\sum_{j=1}^{T}\eta_j^2\big)^{\frac{1}{2}}+\frac{2\sqrt{2}}{n}\sum_{j=1}^{T}\eta_j\big).\]
\item (Non-smooth case) The primal population risk satisfies 
\[\epsilon_{gen}^p\leq 2\sqrt{2}G^2(1+L/\rho)\big(\big(\sum_{j=1}^{T}\eta_j^2\big)^{1/2}+\frac{2}{n}\sum_{j=1}^{T}\eta_j\big).\]
\end{enumerate}
\end{theorem}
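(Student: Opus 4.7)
Both bounds follow by a two-step composition of results already in hand: Theorem~\ref{thm:stab-gen-min-max}(b), which translates on-average argument stability into a primal generalization bound under strong concavity of $\bv\mapsto F(\bw,\bv)$, and Theorem~\ref{thm:stab-sgda}, which supplies the explicit on-average argument stability rates for MC-SGDA in the convex-concave setting.

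For part (a), I would set $\A_\bw(S)=\bar{\bw}_T$ and $\A_\bv(S)=\bar{\bv}_T$ and verify that the hypotheses of Theorem~\ref{thm:stab-gen-min-max}(b) are met: $f$ is convex-concave and Lipschitz (Assumption~\ref{ass:lipschitz}), $L$-smooth (Assumption~\ref{ass:smooth}), and $\bv\mapsto F(\bw,\bv)$ is $\rho$-strongly-concave. Theorem~\ref{thm:stab-gen-min-max}(b) then yields $\epsilon_{gen}^p\le (1+L/\rho)\,G\,\epsilon$, where $\epsilon$ is the on-average argument stability parameter. Next, Theorem~\ref{thm:stab-sgda}(a), whose stepsize restriction $\sum_j\eta_j^2\le 1/(2L^2)$ is exactly the one imposed here, provides $\epsilon\le 4G\bigl(n^{-1}\sum_j\eta_j^2\bigr)^{1/2}+\tfrac{8\sqrt 2\,G}{n}\sum_j\eta_j$. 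Substituting and extracting the common factor $4$ (so that $8\sqrt 2 = 4\cdot 2\sqrt 2$) produces the claimed bound exactly.

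For part (b), the same composition is used, now with Theorem~\ref{thm:stab-sgda}(b) supplying $\epsilon\le 2G\sqrt{2\sum_j\eta_j^2}+\tfrac{4\sqrt 2\,G}{n}\sum_j\eta_j=2\sqrt 2\,G\bigl(\sum_j\eta_j^2\bigr)^{1/2}+\tfrac{4\sqrt 2\,G}{n}\sum_j\eta_j$. Multiplication by $(1+L/\rho)\,G$ and factoring out $2\sqrt 2\,G^2(1+L/\rho)$ yields the stated bound after matching $\tfrac{4\sqrt 2\,G}{n}\cdot(1+L/\rho)G = 2\sqrt 2\,G^2(1+L/\rho)\cdot\tfrac{2}{n}$.

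The main obstacle is the non-smooth case, since Theorem~\ref{thm:stab-gen-min-max}(b) is stated under Assumption~\ref{ass:smooth}. I would inspect its proof and observe that $L$-smoothness is used only to derive $(L/\rho)$-Lipschitz continuity of the $\bv$-argmax $\bw\mapsto\arg\max_\bv F(\bw,\bv)$; this property is inherited by $F$ as soon as $F(\bw,\cdot)$ is $\rho$-strongly concave and the mixed term $\partial_\bv F$ is Lipschitz in $\bw$, which can be justified without per-sample smoothness of $f$. Thus the reduction carries over with $L$ reinterpreted as the smoothness parameter of the population objective, and the algebra above delivers the final non-smooth bound.
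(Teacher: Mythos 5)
Your proposal is correct and matches the paper's own proof, which obtains both parts by exactly the same composition: plug the on-average argument stability bounds of Theorem~\ref{thm:stab-sgda}(a)/(b) into Theorem~\ref{thm:stab-gen-min-max}(b), with the same constant bookkeeping you carry out. Concerning your worry about the non-smooth case, the paper takes no extra step there either --- it simply invokes Theorem~\ref{thm:stab-gen-min-max}(b) (proved in \cite{lei2021stability}) for part (b) as well, which is why the smoothness parameter $L$ still appears in that bound.
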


\subsection{Population Risks of MC-SGDA}
 
Now we establish the population risk bounds for MC-SGDA.
The following theorem establishes the weak PD population risk of MC-SGDA for both smooth and non-smooth problems. Let $D_\bw$ and $D_\bv$ be the diameters of $\W$ and $\V$. The proof for Theorem \ref{thm:weakPD-risk} is provided in Appendix \ref{sec:thm10-11}.  
 \begin{theorem}[Weak PD population  risk]\label{thm:weakPD-risk}
Suppose Assumptions~\ref{ass:Markov-chain}, \ref{ass:reversible} and \ref{ass:lipschitz} hold. Assume for all $z$, the function $(\bw,\bv)\mapsto f(\bw,\bv;z)$ is convex-concave. 
Let $\{ \bw_j,\bv_j\}_{j=1}^T$ be produced by MC-SGDA with $\eta_j\equiv\eta$. Let $\A$ be defined by $\A_\bw(S)=\bar{\bw}_T$ and $\A_\bv(S)=\bar{\bv}_T$ for $(\bar{\bw}_T,\bar{\bv}_T)$ in \eqref{eq:avgoutput-sgda}. 
\begin{enumerate}[label=(\alph*), leftmargin=*]\setlength\itemsep{-1mm}
\item (Smooth case)  Let Assumption \ref{ass:smooth} hold. If $T\asymp n$ and $\eta\asymp  ( T \log(T))^{-\frac{1}{2}}$, then \[\triangle^w(\bar{\bw}_T,\bar{\bv}_T)= \O\big(   \log(n)/\big({\sqrt{n}\log(1/\lambda(P)) }\big)\big). \]  
\item (Non-smooth case)  If we select $T\asymp n^2$ and $\eta\asymp T^{-\frac{3}{4}}$, then we have \[\triangle^w(\bar{\bw}_T,\bar{\bv}_T)= \O\big( 1/{\big(\sqrt{n}\log(1/\lambda(P))\big)}\big) .\] 
\end{enumerate}
\end{theorem}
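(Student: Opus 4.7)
\textbf{Proof plan for Theorem~\ref{thm:weakPD-risk}.} The approach is the standard generalization/optimization decomposition
\begin{equation*}
\ebb\big[\triangle^w(\bar{\bw}_T,\bar{\bv}_T)\big]\;=\;\ebb\big[\triangle^w(\bar{\bw}_T,\bar{\bv}_T)-\triangle^w_\emp(\bar{\bw}_T,\bar{\bv}_T)\big]\;+\;\ebb\big[\triangle^w_\emp(\bar{\bw}_T,\bar{\bv}_T)\big],
\end{equation*}
where the first (generalization) term is handled directly by the stability machinery of Theorems~\ref{thm:stab-sgda} and~\ref{thm:gen-sgda}, and the second (empirical optimization) term is to be controlled by a fresh convergence analysis of MC-SGDA on the convex-concave empirical problem $F_S$.

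For the generalization term, Theorem~\ref{thm:gen-sgda} with $\eta_j\equiv\eta$ immediately yields $O(G^2\eta\sqrt{T/n}+G^2T\eta/n)$ in the smooth case and $O(G^2\eta\sqrt{T}+G^2T\eta/n)$ in the non-smooth case. For the empirical optimization term, the plan is to prove a nearly-optimal rate of order $\tilde{O}(1/(\eta T)+\eta/\log(1/\lambda(P)))$ for MC-SGDA on convex-concave, Lipschitz problems, mirroring the MC-SGD analysis in Appendix~\ref{appendix-opt-sgd}. The argument has four steps: (i) by convexity-concavity of each $f(\,\cdot\,;z_i)$, telescope to obtain $F_S(\bar{\bw}_T,\bv)-F_S(\bw,\bar{\bv}_T)\le\frac{1}{\sum_j\eta_j}\sum_{t=1}^T\eta_t\big[F_S(\bw_{t-1},\bv)-F_S(\bw,\bv_{t-1})\big]$; (ii) using the update rule \eqref{eq:MCSGDA} and non-expansiveness of the projection, derive the one-step bound
\begin{equation*}
\eta_t\big\langle\bw_{t-1}-\bw,\partial_\bw f(\bw_{t-1},\bv_{t-1};z_{i_t})\big\rangle+\eta_t\big\langle\bv-\bv_{t-1},\partial_\bv f(\bw_{t-1},\bv_{t-1};z_{i_t})\big\rangle\le\tfrac{1}{2}\Phi_{t-1}-\tfrac{1}{2}\Phi_t+\eta_t^2 G^2,
\end{equation*}
with $\Phi_t=\|\bw_t-\bw\|_2^2+\|\bv_t-\bv\|_2^2$; (iii) add and subtract the deterministic empirical gradient $\partial F_S(\bw_{t-1},\bv_{t-1})$ and take expectations, producing bias terms such as $\eta_t\ebb\big\langle\bw_{t-1}-\bw,\partial_\bw F_S(\bw_{t-1},\bv_{t-1})-\partial_\bw f(\bw_{t-1},\bv_{t-1};z_{i_t})\big\rangle$; (iv) control these bias terms via Markov chain mixing, invoking Assumption~\ref{ass:reversible} and Lemma~\ref{lem:difference-stationary} to argue that the conditional distribution of $i_t$ is within $O(\lambda(P)^{t-s})$ of the stationary distribution $\Pi^*$ for any $s<t$. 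Balancing the two pieces with $T\asymp n$, $\eta\asymp 1/\sqrt{T\log T}$ in the smooth case, and with $T\asymp n^2$, $\eta\asymp T^{-3/4}$ in the non-smooth case, yields the claimed $O\!\big(\log n/(\sqrt{n}\log(1/\lambda(P)))\big)$ and $O\!\big(1/(\sqrt{n}\log(1/\lambda(P)))\big)$ rates.

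The main obstacle is the optimization analysis of MC-SGDA on the convex-concave empirical problem under biased, dependent, Markov-sampled gradients, further complicated by the fact that $\W=\R^d$ precludes an a priori bound on the iterates. The standard remedy is a ``shift-by-mixing-time'' trick: I would introduce an auxiliary index $s=t-K_P$ and replace $(\bw_{t-1},\bv_{t-1})$ by $(\bw_{s-1},\bv_{s-1})$ inside the bias inner product, so that these earlier iterates are (approximately) independent of $i_t$ once $K_P$ is on the order of the mixing time; the Lipschitz continuity from Assumption~\ref{ass:lipschitz} then pays for the $K_P$ consecutive updates absorbed into the replacement, while reversibility (Assumption~\ref{ass:reversible}) ensures that the residual $\lambda(P)^{K_P}$ error from Lemma~\ref{lem:difference-stationary} decays geometrically and avoids the extra $C_P n\lambda(P)^{K_P}$ factor noted after Theorem~\ref{thm:excess-smooth}. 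To handle the unbounded domain, I would establish $\|\bw_t\|_2^2+\|\bv_t\|_2^2=O\!\big(\sum_{k\le t}\eta_k\big)$ by induction from the update rule together with the Lipschitz bound on the gradient, in direct analogy with the iterate bound used for MC-SGD in Appendix~\ref{appendix-opt-sgd}.
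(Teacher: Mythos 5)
Your overall skeleton is the one the paper itself uses: split $\triangle^w(\bar{\bw}_T,\bar{\bv}_T)$ into the weak PD generalization error, handled by Theorem~\ref{thm:gen-sgda} (i.e.\ Theorem~\ref{thm:stab-sgda} combined with Theorem~\ref{thm:stab-gen-min-max}), plus the empirical saddle-point optimization error of MC-SGDA, which the paper establishes as Theorem~\ref{thm:opt-sgda} in Appendix~\ref{appendix-opt-sgda} by exactly the kind of mixing-time-shift argument you sketch, and then balance with $T\asymp n$, $\eta\asymp(T\log T)^{-1/2}$ (smooth) and $T\asymp n^2$, $\eta\asymp T^{-3/4}$ (non-smooth). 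So the route is the same; the issue is in the key quantitative step of the optimization analysis.

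The genuine gap is your choice of the shift. You propose to replace $(\bw_{t-1},\bv_{t-1})$ by $(\bw_{s-1},\bv_{s-1})$ with the \emph{fixed} lag $s=t-K_P$, where $K_P$ is the constant from Lemma~\ref{lem:difference-stationary}. But $K_P$ is only the burn-in index after which the geometric bound $C_P\lambda(P)^j$ is valid; it is not a mixing time to any prescribed accuracy. In particular, under Assumption~\ref{ass:reversible} (symmetric $P$) one has $K_P=0$, so your shift does nothing, and the lag-$0$ bound from Lemma~\ref{lem:difference-stationary} is $n^{3/2}\lambda(P)^0=n^{3/2}$, which is vacuous; the bias terms $\eta\,\ebb\big[\langle\bw_{t-1}-\bw,\partial_\bw F_S(\bw_{t-1},\bv_{t-1})-\partial_\bw f(\bw_{t-1},\bv_{t-1};z_{i_t})\rangle\big]$ then stay of order $\eta G D$ per iteration and the optimization error does not vanish. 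What the paper's proof of Theorem~\ref{thm:opt-sgda} actually does is use a \emph{time-varying} lag $k_j=\min\{\max\{\lceil\log(2C_P D n j^2)/\log(1/\lambda(P))\rceil,K_P\},j\}$, so that the conditional law of $i_j$ given the lagged past is within $O(1/(Dnj^2))$ of uniform, while the Lipschitz cost of the lag, $O(G^2\eta k_j)$ per step, sums to $O\big(G^2\eta T\log(nT)/\log(1/\lambda(P))\big)$; this is precisely the origin of the $1/\log(1/\lambda(P))$ factor in the stated rates, and no constant shift can produce it. A secondary point: your unbounded-domain fix $\|\bw_t\|_2^2+\|\bv_t\|_2^2=O(\sum_{k\le t}\eta_k)$ does not carry over from MC-SGD to the ascent variable (concavity in $\bv$ with $f\ge0$ bounds $\langle\partial_\bv f,\bv\rangle$ only by $f(\bw,\bv;z)$, which is not a priori bounded), but this is moot here: Theorem~\ref{thm:weakPD-risk} is in the bounded-domain setting, with $D_\bw,D_\bv$ the diameters of $\W$ and $\V$, and Theorem~\ref{thm:opt-sgda} uses these diameters directly (the weak PD risk itself requires a bounded $\V$ to be finite).
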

\begin{remark}
{The above excess population risk bounds are obtained through the trade-off between the optimization errors (convergence analysis) and stability results of MC-SGDA. The   convergence rates $\tilde{\O}(1/\sqrt{T})$ of MC-SGDA for minimax problems in both expectation and high probability are provided in 
Theorem~\ref{thm:opt-sgda} and \ref{thm:opt-hp-sgda} in Appendix~\ref{appendix-opt-sgda}. }   
With gradient complexity $\O(n)$, 
the minimax optimal excess risk bound $\O(1/\sqrt{n})$ for SGDA with uniform sampling for smooth problems was established in \cite{lei2021stability}. We show SGDA with Markov sampling can achieve the nearly optimal bound with the same gradient complexity.   For non-smooth problems,  part (b) shows that the optimal excess risk bound can be exactly achieved with the gradient complexity $\O(n^2)$. 
\end{remark}
Finally, we establish the following bounds for excess primal population risk under a strong concavity condition on $\bv\mapsto F(\bw,\bv)$, which measures the performance of the primal variable. The proof for Theorem \ref{thm:excess-primal} is provided in Appendix \ref{sec:thm10-11}.  
\begin{theorem}[Excess primal population risk]\label{thm:excess-primal}
Suppose Assumptions~\ref{ass:Markov-chain},  \ref{ass:reversible}, \ref{ass:lipschitz} and \ref{ass:smooth} hold. Assume for all $z$, the function $(\bw,\bv)\mapsto f(\bw,\bv;z)$ is convex-concave. Assume  $\bv\mapsto F(\bw,\bv)$ is $\rho$-strongly-concave. 
Let $\{ \bw_j,\bv_j\}_{j=1}^T$ be produced by MC-SGDA with $\eta_j\equiv\eta$. Let $\A$ be defined by $\A_\bw(S)=\bar{\bw}_T$ and $\A_\bv(S)=\bar{\bv}_T$ for $(\bar{\bw}_T,\bar{\bv}_T)$ in \eqref{eq:avgoutput-sgda}. If we choose $T\asymp n,\eta\asymp (T\log(T))^{-1/2}$,  then 
\[\E_{S,\A}[R(\bar{\bw}_T)]-\min_{\bw\in\W}R(\bw)=\O\big( {(L/\rho)\sqrt{\log(n)}}/({\sqrt{n}\log(1/\lambda(P))})\big).\]
\end{theorem}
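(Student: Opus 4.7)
The plan is to combine the primal generalization bound from Theorem~\ref{thm:gen-sgda-primal}(a) with a convergence analysis of MC-SGDA on the empirical minimax problem. Fix $\bw^\ast\in\argmin_{\bw\in\W}R(\bw)$ and decompose
\begin{align*}
\ebb_{S,\A}[R(\bar{\bw}_T)]-R(\bw^\ast)
&\leq \ebb_{S,\A}[R(\bar{\bw}_T)-R_S(\bar{\bw}_T)]+\ebb_{S,\A}\bigl[R_S(\bar{\bw}_T)-\min_{\bw\in\W}R_S(\bw)\bigr]\\
&\quad +\ebb_{S}[R_S(\bw^\ast)-R(\bw^\ast)],
\end{align*}
where in the middle summand I have used $\min_{\bw}R_S(\bw)\leq R_S(\bw^\ast)$. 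The three summands are, respectively, a primal generalization error, an empirical primal optimization error, and a small residual.

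The first summand is handled directly by Theorem~\ref{thm:gen-sgda-primal}(a): with $\eta_j\equiv\eta$ and $T\eta^2\leq 1/(2L^2)$ it is at most $4G^2(1+L/\rho)\bigl(\sqrt{T/n}\,\eta+2\sqrt{2}T\eta/n\bigr)$, which under $T\asymp n$ and $\eta\asymp(T\log T)^{-1/2}$ becomes $\O\bigl((L/\rho)/\sqrt{n\log n}\bigr)$. The third summand I control by a short ERM argument at the deterministic point $\bw^\ast$: since $\bv\mapsto -F(\bw^\ast,\bv;z)$ is $\rho$-strongly convex, $\bv_S^\ast:=\argmax_\bv F_S(\bw^\ast,\bv)$ is the ERM solution of a $\rho$-strongly convex loss, for which the classical uniform-stability bound yields $|\ebb_S[F_S(\bw^\ast,\bv_S^\ast)-F(\bw^\ast,\bv_S^\ast)]|=\O(G^2/(\rho n))$. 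Using $F(\bw^\ast,\bv_S^\ast)\leq\max_\bv F(\bw^\ast,\bv)=R(\bw^\ast)$ then gives $\ebb_S[R_S(\bw^\ast)-R(\bw^\ast)]=\O(G^2/(\rho n))$, a lower-order $\O(1/n)$ contribution.

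The dominant work is the middle summand. Because $F_S$ is convex-concave, the minimax theorem yields $\min_{\bw}R_S(\bw)=\max_\bv\min_\bw F_S(\bw,\bv)\geq \min_\bw F_S(\bw,\bar{\bv}_T)$, so
\[R_S(\bar{\bw}_T)-\min_\bw R_S(\bw)\,\leq\,\max_\bv F_S(\bar{\bw}_T,\bv)-\min_\bw F_S(\bw,\bar{\bv}_T),\]
the empirical weak PD gap at $(\bar{\bw}_T,\bar{\bv}_T)$. I invoke the MC-SGDA convergence analysis from Appendix~\ref{appendix-opt-sgda} (Theorem~\ref{thm:opt-sgda}), which bounds this weak PD gap in expectation by $\tilde{\O}(1/\sqrt{T})$ together with an additional $1/\log(1/\lambda(P))$ factor from the Markov-chain mixing. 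Under $T\asymp n$ and $\eta\asymp(T\log T)^{-1/2}$, this becomes $\O\bigl(\sqrt{\log n}/(\sqrt{n}\log(1/\lambda(P)))\bigr)$, and multiplying by the $L/\rho$ factor already carried through the first summand produces the advertised rate.

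The main obstacle is precisely this middle summand: reducing it to the empirical weak PD gap is immediate from minimax duality, but the subsequent MC-SGDA convergence analysis must track the geometric approach of $P^k$ to the uniform stationary distribution $\tfrac{1}{n}\mathbf{1}_n\mathbf{1}_n^\top$ at rate $\lambda(P)^k$, which is exactly where the $1/\log(1/\lambda(P))$ factor enters; Assumption~\ref{ass:reversible} is the ingredient that eliminates the larger $C_P,K_P$ correction arising in the proof of Theorem~\ref{thm:excess-smooth}. Once that convergence bound is in hand, balancing the three pieces via the stated choice of $T$ and $\eta$ completes the proof.
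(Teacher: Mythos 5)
Your overall architecture mirrors the paper's: the primal generalization term is handled by Theorem~\ref{thm:gen-sgda-primal}(a), and the empirical optimization term is reduced to the empirical weak PD gap and controlled by the MC-SGDA convergence analysis (Theorem~\ref{thm:opt-sgda}), with $T\asymp n$, $\eta\asymp(T\log T)^{-1/2}$ giving the stated rate. Your reduction of the middle term via minimax duality is fine (Sion's theorem applies since $f$ is convex-concave and $\V$ is bounded), and is essentially equivalent to the paper's direct bound $R_S(\bar{\bw}_T)-F_S(\bw^*,\bar{\bv}_T)\le \max_{\bv}F_S(\bar{\bw}_T,\bv)-\min_{\bw}F_S(\bw,\bar{\bv}_T)$.

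The genuine gap is in your third summand. By writing $\min_{\bw}R_S(\bw)\le R_S(\bw^*)$ you are left with $\E_S[R_S(\bw^*)-R(\bw^*)]=\E_S[\max_{\bv}F_S(\bw^*,\bv)-\max_{\bv}F(\bw^*,\bv)]$, and you propose to bound it by the classical uniform-stability result for ERM over a $\rho$-strongly concave objective. But the theorem only assumes that the \emph{population} function $\bv\mapsto F(\bw^*,\bv)$ is $\rho$-strongly concave; each $f(\bw^*,\cdot;z)$, and hence the empirical objective $F_S(\bw^*,\cdot)$, is merely concave. The Bousquet--Elisseeff argument you invoke needs strong concavity of the empirical objective to get $\|\bv^*_S-\bv^*_{S^{(i)}}\|_2=\O(G/(\rho n))$; without it, the empirical maximizer $\bv^*_S$ need not be stable at all, and the claimed $\O(G^2/(\rho n))$ bound does not follow. (A uniform-convergence bound over $\V$ would control this term, but not dimension-independently, so it would not recover the stated rate.) The paper avoids this issue by a different decomposition: it compares $R_S(\bar{\bw}_T)$ to $F_S(\bw^*,\bar{\bv}_T)$ rather than to $\min_{\bw}R_S(\bw)$, so the only generalization term involving the dual variable is $\E[F_S(\bw^*,\bar{\bv}_T)-F(\bw^*,\bar{\bv}_T)]$, which is controlled by the on-average argument stability of the \emph{algorithm's} output $\bar{\bv}_T$ (Theorem~\ref{thm:stab-sgda}), with the remaining term $F(\bw^*,\bar{\bv}_T)-R(\bw^*)\le 0$ trivially. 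Replacing your third summand by this comparison repairs the proof; as written, that step fails. (Minor: the phrase ``multiplying by the $L/\rho$ factor'' is loose—the final bound is a sum of terms, with $(1+L/\rho)$ appearing only in the stability-based terms—but this is cosmetic.)
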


\begin{remark}
We show MC-SGDA attains population risk bounds of the order $\tilde{O}(1/\sqrt{n})$ with a linear gradient complexity $\O(n)$, which are minimax optimal up to a logarithmic factor. This implies that considering sampling with a Markov chain does not weaken the learnability. Theorems~\ref{thm:weakPD-risk} and \ref{thm:excess-primal} also show the effect of $P$ on the population risk rates, i.e., the rates get better as $\lambda(P)$ decreases. 
\end{remark}

\section{Conclusion}\label{sec:conclu}

\vspace*{-2mm}
We develop the first-ever-known stability and generalization analysis of Markov chain stochastic gradient methods for both minimization and minimax objectives. In particular, 
we establish the optimal excess population bounds $\O(1/\sqrt{n})$ for MC-SGD for both smooth and non-smooth cases. We also develop the first nearly optimal convergence rates $\tilde{\O}(1/\sqrt{T})$ for convex-concave problems of MC-SGDA, and  show that the optimal risk bounds $\O(1/\sqrt{n})$ can be derived even in the non-smooth case.  Although the gradients from Markov sampling are biased and not independent across the iterations, we show the performance of MC-SGMs is competitive compared to SGMs with the classical i.i.d.  sampling scheme. An interesting direction is to consider other variants of SGMs with variance reduction techniques and  differentially private SGMs under the Markov sampling scheme. 

\medskip 

\noindent{\bf Acknowledgement.}  The work described in this paper is partially done when the last author, Ding-Xuan Zhou, worked at City University of Hong Kong, supported by
the Laboratory for AI-Powered Financial Technologies under the InnoHK scheme, the Research Grants Council of Hong Kong [Projects No. CityU 11308121, No. N\_CityU102/20, and No. C1013-21GF], the National Science Foundation of China [Project No. 12061160462], and the Hong Kong Institute for Data Science. The corresponding author is Yiming Ying whose work is supported by SUNY-IBM AI Alliance Research and NSF grants (IIS-2103450, IIS-2110546 and DMS-2110836).

\onecolumn
\appendix
\numberwithin{equation}{section}
\numberwithin{theorem}{section}
\numberwithin{remark}{section}
\numberwithin{figure}{section}
\numberwithin{table}{section}
\renewcommand{\thesection}{{\Alph{section}}}
\renewcommand{\thesubsection}{\Alph{section}.\arabic{subsection}}
\renewcommand{\thesubsubsection}{\Roman{section}.\arabic{subsection}.\arabic{subsubsection}}
\setcounter{secnumdepth}{-1}
\setcounter{secnumdepth}{3}

\vspace*{0cm}
\begin{center}
  \Large \textbf{Appendix for ``Stability and Generalization of Markov Chain Stochastic Gradient Methods''}
\end{center}

\section{Technical Lemmas}

Starting from a deterministic and arbitrary initialization $\bw_0$, the iteration of MC-SGMs is illustrated by the following diagram:
\begin{equation*} 
    \CD
  @.       z_{i_1} @>  >>z_{i_2} @> >> z_{i_3}  @> >> \ldots\\
  @.       @V  VV @V  VV @V    VV @.   \\
  \bw_0 @> >> \bw_1 @> >> \bw_2 @> >> \bw_3 @>>>\ldots
    \endCD
\end{equation*}

The Jordan normal form of transition matrix  $P$ \cite{oldenburger1940infinite} is
\[  P=U\begin{bmatrix}
           1 &   &   &   \\
             & J_2 &   &   \\
             &   & \ddots &   \\
             &   &   & J_m
\end{bmatrix} U^{-1}, \]
where $m$ is the number of the blocks, $d_i\ge 1$ is the dimension of the $i$-th block submatrix $J_i$, $i=2,3,\ldots,m$, which satisfy $\sum_{i=1}^m d_i=n$,  and matrix $J_i:=\lambda_i(P)\cdot\mathbf{I}_{d_i}+\mathbf{D}(-1,d_i)$ with $\mathbf{D}(-1,d_i):=
\begin{bmatrix}
               0 & 1  &   &   \\
                & \ddots & \ddots   &   \\
                 &  & \ddots & 1  \\
                 &   &   & 0 \\
          \end{bmatrix}_{d_i\times d_i}
$. Here $\mathbf{I}_{d_i}$ is the identity matrix of size $d_i$. In particular, if $P$ is symmetric, then it is double stochastic and there holds $d_i=1$ for any $i\in[m]$.

To establish the optimization error for MC-SGMs, we need the following lemma which  gives the mixing time of a Markov chain.   
\begin{lemma}[\cite{Sun2018Markov}]\label{lem:difference-stationary} Suppose Assumption 1 holds. Let $\lambda_i(P) $ be the $i$-th largest eigenvalue of $P$, $ \lambda(P)  = \frac{\max\{ |\lambda_2(P)|, |\lambda_n(P)|\} + 1 }{2} \in  [1/2,1 ) ,$ $C_P= \big(\sum_{i=2}^m d_i^2 \big)^{1/2} \|U\|_{F}\|U^{-1}\|_F   $ and \[K_P =\max\Big\{\max_{1\leq i\leq m}\big\{    \big\lceil\frac{2d_i(d_i-1)(\log(\frac{2d_i}{|\lambda_2(P)|\cdot \log (\lambda(P)/|\lambda_2(P)|)})-1)}{(d_i+1)\log(\lambda(P)/|\lambda_2(P)|)}\big\rceil\big\} ,~0\Big\}.\]
For any $j\ge K_P$, there holds
\[\big\|\Pi^{*}-P^j\big\|_{\infty} \le C_P \cdot \big(\lambda(P) \big)^j.\]
In addition, if $P$ is symmetric, then $K_P=0$ and
\[ \big\|\Pi^{*}-P^j\big\|_{\infty} \le n^{3/2} \cdot \big(\lambda(P) \big)^j,  \text{ for any } j\ge 0. \]
\end{lemma}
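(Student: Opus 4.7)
The plan is to exploit the Jordan normal form of $P$ given in the excerpt and reduce the claim to a controlled polynomial-versus-geometric race in each non-leading Jordan block. By Assumption 1 the stationary distribution is uniform and the chain is irreducible and aperiodic, so $1$ is a simple eigenvalue of $P$ with $\mathbf{1}_n$ as its right eigenvector and $\tfrac{1}{n}\mathbf{1}_n^\top$ as its left eigenvector. Hence one can choose $U$ so that its first column is a multiple of $\mathbf{1}_n$ and the first row of $U^{-1}$ is a multiple of $\tfrac{1}{n}\mathbf{1}_n^\top$, which identifies $\Pi^{*}=\tfrac{1}{n}\mathbf{1}_n\mathbf{1}_n^\top$ with $U\,\mathrm{diag}(1,0,\ldots,0)\,U^{-1}$. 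Consequently
\[
\Pi^{*}-P^{j}=-U\,\mathrm{diag}\bigl(0,\,J_2^{\,j},\,\ldots,\,J_m^{\,j}\bigr)\,U^{-1},
\]
and the estimate reduces to bounding each block power $J_i^{\,j}$.

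Next I would expand $J_i^{\,j}=\bigl(\lambda_i(P)\mathbf{I}_{d_i}+\mathbf{D}(-1,d_i)\bigr)^{j}$ via the binomial formula. Since $\mathbf{D}(-1,d_i)$ is nilpotent of index $d_i$, only the first $d_i$ terms survive, and $\|\mathbf{D}(-1,d_i)^{k}\|_F=\sqrt{d_i-k}$. Therefore
\[
\|J_i^{\,j}\|_F\;\le\;\sum_{k=0}^{d_i-1}\binom{j}{k}|\lambda_i(P)|^{\,j-k}\sqrt{d_i-k}\;\le\;d_i^{3/2}\,j^{\,d_i-1}|\lambda_i(P)|^{\,j-d_i+1}.
\]
The heart of the proof is to choose the threshold $j\ge K_P$ so that the polynomial factor $j^{d_i-1}$ is absorbed into the extra geometric slack $\bigl(\lambda(P)/|\lambda_2(P)|\bigr)^{j}$ provided by the definition $\lambda(P)=\tfrac{1}{2}(\max\{|\lambda_2|,|\lambda_n|\}+1)$. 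Taking logarithms, one needs $j\log\bigl(\lambda(P)/|\lambda_2(P)|\bigr)$ to dominate $(d_i-1)\log j$ plus constants, and a short calculus exercise shows the exact closed-form threshold in the statement suffices to yield $\|J_i^{\,j}\|_F\le d_i(\lambda(P))^{j}$ for every $j\ge K_P$.

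For the first conclusion, I would then chain together sub-multiplicativity of the Frobenius norm and the inequality $\|A\|_\infty\le\|A\|_F$ on the block-diagonal middle factor:
\[
\|\Pi^{*}-P^{j}\|_\infty\le\|U\|_F\,\bigl\|\mathrm{diag}(0,J_2^{\,j},\ldots,J_m^{\,j})\bigr\|_F\,\|U^{-1}\|_F\le\Bigl(\sum_{i=2}^{m}d_i^{2}\Bigr)^{\!1/2}\!\|U\|_F\|U^{-1}\|_F(\lambda(P))^{j},
\]
which is exactly $C_P(\lambda(P))^{j}$. For the symmetric case I would observe that $P=P^\top$ is orthogonally diagonalizable, so every $d_i=1$ (no nilpotent part, hence $K_P=0$) and one can take $U$ orthogonal with $\|U\|_F=\|U^{-1}\|_F=\sqrt{n}$; combining the bound $\|J_i^{\,j}\|_F=|\lambda_i(P)|^{j}\le(\lambda(P))^{j}$ for all $j\ge 0$ with $\bigl(\sum_{i=2}^{m}1\bigr)^{1/2}\le\sqrt{n}$ yields the improved constant $n\cdot\sqrt{n}=n^{3/2}$ valid from $j=0$.

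The main obstacle is the bookkeeping in step two: the exact constant appearing in $K_P$ is dictated by how one optimizes the trade-off between the prefactor $d_i^{3/2}$, the polynomial $j^{d_i-1}$, and the geometric ratio $\lambda(P)/|\lambda_2(P)|$. Writing $j=\alpha\cdot\frac{(d_i-1)}{\log(\lambda(P)/|\lambda_2(P)|)}\log j$ and solving for the smallest $\alpha$ for which the inequality becomes self-consistent is what produces the somewhat intricate formula stated; the rest of the proof is routine linear algebra once this threshold is established.
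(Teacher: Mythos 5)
The paper does not actually prove this lemma---it is imported from \cite{Sun2018Markov}---so the relevant comparison is with that source's argument, and your sketch follows essentially the same route: write $P=U\,\mathrm{diag}(1,J_2,\ldots,J_m)U^{-1}$, use irreducibility and aperiodicity to identify $\Pi^{*}=U\,\mathrm{diag}(1,0,\ldots,0)U^{-1}$ so that $\Pi^{*}-P^{j}=-U\,\mathrm{diag}(0,J_2^{j},\ldots,J_m^{j})U^{-1}$, bound each $\|J_i^{j}\|_F$ by a binomial expansion in the nilpotent part, and absorb the polynomial factor $j^{d_i-1}$ into the geometric slack $(\lambda(P)/|\lambda_2(P)|)^{j}$ created by the definition of $\lambda(P)$; the symmetric case with all $d_i=1$, orthogonal $U$, and constant $n^{3/2}$ also comes out as you describe. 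Two caveats. First, the only genuinely quantitative step---showing that the specific closed-form $K_P$ in the statement already guarantees $\|J_i^{j}\|_F\le d_i\,(\lambda(P))^{j}$ for all $j\ge K_P$---is exactly what you defer to ``a short calculus exercise''; since $C_P$ is fixed as $(\sum_{i=2}^m d_i^2)^{1/2}\|U\|_F\|U^{-1}\|_F$, the per-block inequality must be verified with constant $d_i$ (not merely $O(d_i^{3/2})$) against that precise threshold, and this bookkeeping is the substance of the lemma rather than a routine afterthought, so your proposal is an outline rather than a complete proof at that point. Second, your chain $\|A\|_\infty\le\|A\|_F$ is valid only if $\|\cdot\|_\infty$ denotes the entrywise maximum (which is how the paper later uses the lemma, bounding $|1/n-[P^{k_j}]_{i,i'}|$ entry by entry); for the operator $\infty$-norm an extra $\sqrt{n}$ factor would be needed, so the norm convention should be stated explicitly.
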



The following lemma shows  the non-expansive behavior for the gradient mapping $\bw \mapsto \bw-\eta \partial f(\bw;z_i)$ associated with a smooth function.
\begin{lemma}[\cite{hardt2016identity}]\label{lem:non-expensive}
Suppose the loss $f$ is convex and $L$-smooth  w.r.t. the first argument. Then for all $\eta\le 2/L$ and $z\in \Z$ there holds
 \[\|\bw-\partial  f(\bw;z ) -\bw' + \eta \partial f(\bw';z )\|_2\le \|\bw-\bw'\|_2.\]
\end{lemma}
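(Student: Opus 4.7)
The strategy is to square both sides and reduce the claim to the Baillon--Haddad co-coercivity property of the gradient of a convex $L$-smooth function. Reading the left-hand side as $\|(\bw - \eta\partial f(\bw;z)) - (\bw' - \eta\partial f(\bw';z))\|_2$ (the displayed inequality appears to be missing one factor of $\eta$), I would first expand the square:
\begin{align*}
& \big\|\bw - \eta\partial f(\bw;z) - \bw' + \eta\partial f(\bw';z)\big\|_2^2 \\
& \qquad = \|\bw - \bw'\|_2^2 - 2\eta\langle \bw-\bw',\, \partial f(\bw;z)-\partial f(\bw';z)\rangle + \eta^2\|\partial f(\bw;z)-\partial f(\bw';z)\|_2^2.
\end{align*}
For each fixed $z$, the function $\bw\mapsto f(\bw;z)$ is convex and $L$-smooth, and hence satisfies the co-coercivity inequality
\[ \langle \bw-\bw',\, \partial f(\bw;z)-\partial f(\bw';z)\rangle \;\ge\; \tfrac{1}{L}\,\|\partial f(\bw;z)-\partial f(\bw';z)\|_2^2. \]
Substituting this bound collapses the last two terms into $\eta(\eta - 2/L)\,\|\partial f(\bw;z)-\partial f(\bw';z)\|_2^2$, which is non-positive under the assumption $\eta\le 2/L$. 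Taking square roots then yields the lemma.

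\textbf{Main obstacle.} The only non-trivial ingredient is the co-coercivity bound above. To prove it, I would use the standard auxiliary-function trick: fix $\bw$ and define the convex, $L$-smooth function $h(\bu) = f(\bu;z) - \langle \partial f(\bw;z),\, \bu\rangle$, which attains its global minimum at $\bu = \bw$ since $\partial h(\bw) = 0$. Applying the $L$-smoothness descent lemma (Assumption~\ref{ass:beta-smooth}) at $\bu = \bw'$ with a single gradient step of size $1/L$ yields
\[ h(\bw) \;\le\; h\big(\bw' - \tfrac{1}{L}\partial h(\bw')\big) \;\le\; h(\bw') - \tfrac{1}{2L}\,\|\partial f(\bw';z) - \partial f(\bw;z)\|_2^2. \]
Unpacking the definition of $h$ produces a one-sided inequality for $f(\bw;z) - f(\bw';z)$; swapping the roles of $\bw$ and $\bw'$ and adding the two eliminates the function-value differences and leaves exactly the required co-coercivity bound. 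This is the step most likely to trip one up, since it is the quantitative strengthening of plain monotonicity of the gradient that is responsible for true non-expansiveness rather than the weaker $(1+\eta L)$-Lipschitz estimate that would follow from $L$-smoothness alone.
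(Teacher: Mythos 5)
Your proof is correct and is essentially the standard argument: the paper itself gives no proof of this lemma (it is quoted from the cited reference), and the expansion of the square combined with the Baillon--Haddad co-coercivity bound $\langle \bw-\bw',\partial f(\bw;z)-\partial f(\bw';z)\rangle \ge \frac{1}{L}\|\partial f(\bw;z)-\partial f(\bw';z)\|_2^2$ is exactly how the cited result is established. You are also right that the displayed statement is missing a factor of $\eta$ before the first gradient; it should read $\|\bw-\eta\,\partial f(\bw;z)-\bw'+\eta\,\partial f(\bw';z)\|_2\le\|\bw-\bw'\|_2$, which is the form actually used in the stability proofs.
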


\begin{lemma}[\cite{schmidt2011}\label{eq:recursive0}] Assume that the non-negative sequence $\{u_t: t\in \mathbb{N}\}$ satisfies the following recursive inequality for all $t\in \mathbb{N}$, 
$$ u_t^2 \le S_t + \sum_{\tau=1}^{t-1} \alpha_\tau u_\tau.$$
where $\{S_\tau:  \tau\in\mathbb{N}\}$ is an increasing sequence, $S_0 \ge u_0^2$ and $\alpha_\tau\ge 0$ for any $\tau\in \mathbb{N}.$  Then, the following inequality holds true: 
$$ u_t \le  \sqrt{S_t} +  \sum_{\tau=1}^{t-1} \alpha_\tau. $$
 \end{lemma}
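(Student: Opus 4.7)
The plan is to prove the bound by strong induction on $t$, using the monotonicity of $\{S_t\}$ and a standard quadratic-completion trick. Set $A_t := \sum_{\tau=1}^{t-1}\alpha_\tau$ and $V_t := \sqrt{S_t} + A_t$; the goal is to show $u_t \le V_t$ for every $t \ge 0$.

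For the base case $t = 0$, both sums are empty and the hypothesis $S_0 \ge u_0^2$ gives $u_0 \le \sqrt{S_0} = V_0$. For the inductive step, suppose $u_\tau \le V_\tau = \sqrt{S_\tau} + A_\tau$ for every $\tau < t$. Substituting this into the recursion and using $S_\tau \le S_t$ (monotonicity of $\{S_\tau\}$) together with $\alpha_\tau \ge 0$, I would first write
\begin{equation*}
u_t^2 \le S_t + \sum_{\tau=1}^{t-1}\alpha_\tau\Bigl(\sqrt{S_\tau} + A_\tau\Bigr) \le S_t + \sqrt{S_t}\sum_{\tau=1}^{t-1}\alpha_\tau + \sum_{\tau=1}^{t-1}\alpha_\tau\sum_{s=1}^{\tau-1}\alpha_s.
\end{equation*}
The key algebraic step is the identity $\sum_{\tau=1}^{t-1}\alpha_\tau\sum_{s=1}^{\tau-1}\alpha_s = \tfrac{1}{2}\bigl(A_t^2 - \sum_{\tau=1}^{t-1}\alpha_\tau^2\bigr) \le \tfrac{1}{2}A_t^2$, which bounds the double sum by half of $A_t^2$. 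This yields $u_t^2 \le S_t + A_t\sqrt{S_t} + \tfrac{1}{2}A_t^2$, which is majorised by the complete square $(\sqrt{S_t} + A_t)^2 = S_t + 2A_t\sqrt{S_t} + A_t^2$ since $A_t\sqrt{S_t} + \tfrac{1}{2}A_t^2 \le 2A_t\sqrt{S_t} + A_t^2$. Taking square roots (all quantities are non-negative) delivers $u_t \le \sqrt{S_t} + A_t = V_t$, closing the induction.

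The only delicate point is ensuring the inductive step is executed with $S_t$ (the current index) rather than $S_\tau$ on the right-hand side; this is where monotonicity of $\{S_t\}$ enters essentially. Everything else is mechanical — the expansion of the double sum and the observation that the quadratic in $\sqrt{S_t}$ with coefficients $(1, A_t, A_t^2/2)$ is dominated by the perfect square $(\sqrt{S_t}+A_t)^2$. I do not expect any significant obstacle; this is a standard discrete Gronwall-type lemma and the argument above should be essentially identical to the one in the cited reference.
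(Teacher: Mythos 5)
Your proof is correct: the strong induction closes, the base case uses $S_0\ge u_0^2$, the double-sum identity $\sum_{\tau=1}^{t-1}\alpha_\tau\sum_{s=1}^{\tau-1}\alpha_s=\tfrac12\bigl(A_t^2-\sum_{\tau=1}^{t-1}\alpha_\tau^2\bigr)\le\tfrac12 A_t^2$ is right, and the completion of the square $S_t+A_t\sqrt{S_t}+\tfrac12A_t^2\le(\sqrt{S_t}+A_t)^2$ is valid, so $u_t\le\sqrt{S_t}+A_t$ follows. Note, however, that the paper itself does not prove this lemma at all --- it is quoted verbatim from the cited reference \cite{schmidt2011}, whose standard argument is different from yours: there one bounds the running maximum $u_t^{*}=\max_{\tau\le t}u_\tau$, observes that it satisfies the scalar quadratic inequality $x^2\le S_t+x\sum_{\tau}\alpha_\tau$ (using monotonicity of $S_t$), solves it to get $u_t\le u_t^{*}\le\tfrac12\sum_{\tau}\alpha_\tau+\bigl(S_t+\tfrac14(\sum_{\tau}\alpha_\tau)^2\bigr)^{1/2}$, and then relaxes this to $\sqrt{S_t}+\sum_{\tau}\alpha_\tau$ via $\sqrt{a+b}\le\sqrt{a}+\sqrt{b}$. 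Your induction-plus-completing-the-square route is a genuinely different, equally elementary derivation; the quadratic-inequality route is slightly shorter and in fact yields the sharper intermediate bound with $\tfrac12\sum_\tau\alpha_\tau$, while yours is self-contained and avoids introducing the maximal index, so either would serve as a proof of the lemma as stated.
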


The connection between  on-average argument stability  for Lipschitz continuous losses  and its generalization error has been studied in \cite{lei2020fine}.  
\begin{lemma}[Generalization via argument  stability\label{thm:gen-model-stab}]
Let $S,\widetilde{S}$ and $S^{(i)}$ be defined as Definition~\ref{def:on-average-stability}. If $\A$ is on-average $\epsilon$-argument-stable and Assumption \ref{ass:G-lipschitz} holds, then there holds $
  \big|\ebb_{S,\A}\big[F(\A(S))-F_S(\A(S))\big]\big|\leq G\epsilon.$
\end{lemma}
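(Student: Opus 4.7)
The plan is to use a standard symmetrization/renaming argument that converts the generalization gap into a sum of single-coordinate perturbation differences, which the Lipschitz hypothesis then reduces to an argument-stability quantity already bounded by $\epsilon$.

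First I would write out the two expectations. Since $\tilde{z}_i$ is an independent copy of $z_i$ and is independent of the dataset $S$ (and of the internal randomness of $\A$), one has $\ebb_{S,\A}[F(\A(S))] = \frac{1}{n}\sum_{i=1}^{n}\ebb_{S,\widetilde{S},\A}[f(\A(S);\tilde{z}_i)]$. Next I would apply the symmetry (renaming) trick: the pair $(S,\tilde{z}_i)$ has the same joint distribution as $(S^{(i)},z_i)$, because swapping the labels $z_i \leftrightarrow \tilde{z}_i$ turns one into the other. This gives $\ebb_{S,\A}[f(\A(S);z_i)] = \ebb_{S,\widetilde{S},\A}[f(\A(S^{(i)});\tilde{z}_i)]$ for each $i$, hence $\ebb_{S,\A}[F_S(\A(S))] = \frac{1}{n}\sum_{i=1}^{n}\ebb_{S,\widetilde{S},\A}[f(\A(S^{(i)});\tilde{z}_i)]$.

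Subtracting the two identities yields
\[
\ebb_{S,\A}\big[F(\A(S))-F_S(\A(S))\big] = \frac{1}{n}\sum_{i=1}^{n}\ebb_{S,\widetilde{S},\A}\big[f(\A(S);\tilde{z}_i)-f(\A(S^{(i)});\tilde{z}_i)\big].
\]
Now I would invoke Assumption~\ref{ass:G-lipschitz} on the common sample $\tilde{z}_i$ to bound each summand in absolute value by $G\,\|\A(S)-\A(S^{(i)})\|_2$. Taking absolute values, passing them inside the expectation by Jensen's inequality, and using the definition of on-average $\epsilon$-argument stability gives
\[
\big|\ebb_{S,\A}[F(\A(S))-F_S(\A(S))]\big| \le \frac{G}{n}\sum_{i=1}^{n}\ebb_{S,\widetilde{S},\A}\big[\|\A(S)-\A(S^{(i)})\|_2\big] \le G\epsilon,
\]
which is the claimed bound.

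There is no real obstacle here; the only subtle point is the distributional-equality step (the renaming), which requires one to be careful that the indices $\{i_t\}$ driving $\A$ are independent of the data so that exchanging $z_i$ and $\tilde{z}_i$ leaves the joint law of algorithm-plus-sample invariant. Once that symmetry is made precise, the Lipschitz bound and the stability definition close the argument in one line.
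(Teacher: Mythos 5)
Your proof is correct, and it is essentially the standard symmetrization argument: the paper itself imports this lemma from \cite{lei2020fine} without proof, but its own proof of the analogous minimax result (Theorem \ref{thm:stab-gen-min-max}(a) in Appendix \ref{proof:sgda-stab}) uses exactly the same renaming of $z_i\leftrightarrow\tilde z_i$ followed by the Lipschitz bound and the on-average stability definition. Your remark about needing the internal randomness $\{i_t\}$ to be independent of the data is the right caveat and holds in this setting.
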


Finally,  we introduce the following lemma on concentration inequality of martingales. 
\begin{lemma}[\cite{boucheron2013concentration}]\label{lem:martingle}
	Let $z_1,\ldots,z_n$ be a sequence of random variables. Consider a sequence of functionals $\xi_k(z_1,\ldots,z_k)$, $k\in[n]$. Assume $|\xi_k-\E_{z_k}[\xi_k]|\le b_k$ for each $k$. Let $\gamma\in(0,1)$. With probability at least $1-\gamma$, there holds
 		\[ \sum_{k=1}^n \E_{z_k}[\xi_k] - \sum_{k=1}^n \xi_k \le \Big(2\log(1/{\gamma}) \sum_{k=1}^n b_k^2 \Big)^{1/2}. \]
\end{lemma}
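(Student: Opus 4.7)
The plan is to recognize this as the standard Azuma--Hoeffding concentration inequality for a martingale difference sequence and prove it by the Chernoff method. Introduce the filtration $\mathcal{F}_k = \sigma(z_1,\ldots,z_k)$ with $\mathcal{F}_0$ trivial, and define
\[
D_k := \E_{z_k}[\xi_k] - \xi_k, \qquad k=1,\ldots,n.
\]
Interpreting $\E_{z_k}[\xi_k]$ as the conditional expectation $\E[\xi_k\mid \mathcal{F}_{k-1}]$ (which is the intended reading, since $\xi_k$ depends on $z_1,\ldots,z_{k-1}$ already measurably and only the fresh randomness $z_k$ is integrated out), $D_k$ is $\mathcal{F}_k$-measurable and satisfies $\E[D_k\mid \mathcal{F}_{k-1}]=0$, so $(D_k)$ is a martingale difference sequence. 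The hypothesis $|\xi_k - \E_{z_k}[\xi_k]|\le b_k$ translates into the almost-sure bound $D_k \in [-b_k,b_k]$.

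The core step is Hoeffding's lemma applied conditionally: for any $\lambda>0$ and any $k$,
\[
\E\bigl[e^{\lambda D_k}\,\big|\,\mathcal{F}_{k-1}\bigr] \le \exp\Bigl(\tfrac{\lambda^2 (2b_k)^2}{8}\Bigr) = \exp\Bigl(\tfrac{\lambda^2 b_k^2}{2}\Bigr),
\]
since a zero-mean random variable supported in an interval of length $2b_k$ has sub-Gaussian moment generating function with variance proxy $b_k^2$. Now I would chain these bounds by the tower property: setting $M_n := \sum_{k=1}^n D_k$, conditioning successively on $\mathcal{F}_{n-1},\mathcal{F}_{n-2},\ldots$ yields
\[
\E\bigl[e^{\lambda M_n}\bigr] \le \exp\Bigl(\tfrac{\lambda^2}{2}\sum_{k=1}^n b_k^2\Bigr).
\]

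To finish, apply Markov's inequality to $e^{\lambda M_n}$: for any $t>0$,
\[
\Pr\bigl(M_n \ge t\bigr) \le e^{-\lambda t}\,\E\bigl[e^{\lambda M_n}\bigr] \le \exp\Bigl(\tfrac{\lambda^2}{2}\sum_{k=1}^n b_k^2 - \lambda t\Bigr).
\]
Optimizing over $\lambda>0$ with $\lambda = t/\sum_{k=1}^n b_k^2$ gives the Gaussian-type tail $\Pr(M_n\ge t)\le \exp\bigl(-t^2/(2\sum_k b_k^2)\bigr)$. Setting this tail probability equal to $\gamma$ and solving for $t$ yields $t = \sqrt{2\log(1/\gamma)\sum_{k=1}^n b_k^2}$, which is precisely the claimed bound since $M_n = \sum_k \E_{z_k}[\xi_k]-\sum_k \xi_k$.

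The proof is essentially textbook, so there is no deep obstacle; the only point requiring a small amount of care is the martingale interpretation of $\E_{z_k}[\xi_k]$. In the paper's Markov-chain setting, the $z_k$ are not independent, so one must read $\E_{z_k}[\xi_k]$ as the conditional expectation given $\mathcal{F}_{k-1}$ (integrating out only the freshly revealed coordinate under its conditional law) rather than under a product marginal; once this convention is fixed, the zero-mean property $\E[D_k\mid\mathcal{F}_{k-1}]=0$ and hence the whole Azuma--Hoeffding argument go through verbatim, and the boundedness hypothesis in the statement provides exactly the range needed to invoke Hoeffding's lemma with variance proxy $b_k^2$.
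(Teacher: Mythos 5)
Your proposal is correct: it is the standard Azuma--Hoeffding argument (conditional Hoeffding lemma for the martingale differences $D_k=\E_{z_k}[\xi_k]-\xi_k$, tower property, Chernoff bound, optimize $\lambda$), and your reading of $\E_{z_k}[\xi_k]$ as the conditional expectation given $z_1,\ldots,z_{k-1}$ is exactly how the paper uses the lemma (e.g.\ with $\E_{i_j}[\cdot]$ in the high-probability optimization bounds). The paper itself does not prove this lemma --- it is quoted from the cited concentration-inequalities reference --- so your textbook derivation is the intended argument and there is nothing further to compare.
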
 

\section{Proofs of Markov Chain SGD}

\subsection{Proof of Theorem~\ref{thm:stability}   } \label{proof:stability} 
\begin{proof}[Proof of Theorem~\ref{thm:stability}]
For any $i\in[n]$, define
$S^{(i)}=\{z_1,\ldots,z_{i-1},\tilde{z}_i,z_{i+1},\ldots,z_n\}$ as the set formed from $S$ by replacing the $i$-th element with $\tilde{z}_i$. Let $\{\bw_t\}$ and $\{\bw_t^{(i)}\}$ be produced by MC-SGD based on $S$ and $S^{(i)}$, respectively.  For simplicity, we denote by $\delta_t^{(i)}=\| \bw_t - \bw_t^{(i)} \|_2$ here. Note that the projection step is nonexpansive.

We first prove part (a).  Consider the following two cases.

\noindent \textbf{Case 1.} If $i_t \neq i$,  then it follows from the $L$-smoothness of $f$ and  Lemma~\ref{lem:non-expensive}  that
\begin{align*}
    & \delta_t^{(i)}  \le \| \bw_{t-1} - \eta_t \partial f(\bw_{t-1};z_{i_t}) -  \bw^{(i)}_{t-1} + \eta_t \partial f(\bw^{(i)}_{t-1};z_{i_t}) \|_2 \le \delta_{t-1}^{(i)}.
\end{align*}

\noindent \textbf{Case 2.} If $i_t=i$, then it follows from the Lipschitz continuity of $f$ that
\begin{align*} 
   \delta_{t }^{(i)} &\le \| \bw_{t-1} - \eta_t \partial f(\bw_{t-1};z_{i_t}) -  \bw^{(i)}_{t-1} + \eta_t \partial f(\bw^{(i)}_{t-1};\tilde{z}_{i_t}) \|_2 \\
   &\le   \delta_{t -1}^{(i)}  + \eta_t \| \partial f(\bw_{t-1};z_{i}) -   \partial f(\bw^{(i)}_{t-1};\tilde{z}_{i}) \|_2\le \delta_{t -1}^{(i)} + 2G\eta_t.
\end{align*}
Combining the above two cases together, we know
\begin{align*}
    \delta_{t }^{(i)}  \le   \delta_{t-1 }^{(i)}  +  2G\eta_t    \mathbb{I}_{[i_t=i]},
\end{align*}
where $\mathbb{I}_{[i_t=i]}$ is the indicator function, i.e., $\mathbb{I}_{[i_t=1]}=1$ if $i_t=i$ and $0$ else.
Now, applying the above inequality recursively  we have
\begin{align*}
   \delta_{t }^{(i)}   \le   2G \sum_{j=1}^t \eta _j \mathbb{I}_{[i_j=i]}, 
\end{align*}
By the convexity of $\|\cdot\|_2$, there holds
\begin{align*}
       \|    \bar{\bw}_t-\bar{\bw}^{(i)}_t \|_2    \le \frac{1}{t}\sum_{j=1}^t   \delta_{j}^{(i)} \le  2G\sum_{j=1}^t\eta_j  \mathbb{I}_{[i_j=i]}  . 
\end{align*} 
Taking an average over $i$ yields 
\begin{align*}
 \frac{1}{n} \sum_{i=1}^n \| \bar{\bw}_t-\bar{\bw}_t^{(i)} \|_2  \le   \frac{2G }{n} \sum_{j=1}^t \eta_j    \sum_{i=1}^n  \mathbb{I}_{[i_j=i]}   \le  \frac{2G }{n}\sum_{j=1}^t \eta_j ,
\end{align*} 
 where   the last inequality used the fact that  $\sum_{i=1}^n  \mathbb{I}_{[i_j=i]} =1$ for any $ j\in[t]$. 
 Taking  expectation w.r.t. $\A$, we have
 \begin{align*}
\E_{ \A} \Big[ \frac{1}{n} \sum_{i=1}^n \| \bar{\bw}_t-\bar{\bw}_t^{(i)} \|_2 \Big]\le  \frac{2G}{n}\sum_{j=1}^t \eta_j ,
\end{align*}
which completes the proof of Part (a).

\medskip
Now, we turn to the non-smooth case. 
Similar as before,  we consider the following two cases. 

\noindent \textbf{Case 1.} If $i_t\neq i$, then we have 
\begin{align*}
	 \big(\delta_t^{(i)}\big)^2 &\le \| \bw_{t-1} - \eta_t \partial f(\bw_{t-1};z_{i_t}) -  \bw^{(i)}_{t-1} + \eta_t \partial f(\bw^{(i)}_{t-1};z_{i_t}) \|^2_2 \\
	 &= \big(\delta_{t-1}^{(i)}\big)^2 + \eta_t^2\| \partial f(\bw_{t-1};z_{i_t})-\partial f(\bw^{(i)}_{t-1};z_{i_t}) \|_2^2 -2\eta_t\langle \bw_{t-1} -    \bw^{(i)}_{t-1},   \partial f(\bw_{t-1};z_{i_t}) - \partial f(\bw^{(i)}_{t-1};z_{i_t}) \rangle \\
	 &\le  \big(\delta_{t-1}^{(i)}\big)^2 + 2 \eta_t^2(\| \partial f(\bw_{t-1};z_{i_t})\|_2^2+\|\partial f(\bw^{(i)}_{t-1};z_{i_t}) \|_2^2)  \\
	 &\le \big(\delta_{t-1}^{(i)}\big)^2 + 4G^2\eta_t^2, 
\end{align*}
 where in the last second inequality we used $\langle \bw_{t-1}-\bw^{(i)}_{t-1},   \partial f(\bw_{t-1};z_{i_t}) - \partial f(\bw^{(i)}_{t-1};z_{i_t}) \rangle \ge 0  $ due to the convexity of $f$, and  the last inequality follows from the Lipschitz continuity of $f$. 
 
\noindent \textbf{Case 2.} If $i_t = i$, then 
\begin{align}\label{eq:stab-holder-1} 
   (\delta^{(i)}_t)^2 &\le\| \bw_{t-1} - \eta_t \partial f(\bw_{t-1};z_{i }) -  \bw^{(i)}_{t-1} + \eta_t \partial f(\bw^{(i)}_{t-1};\tilde{z}_{i }) \|^2_2 \nonumber\\
    &= (\delta^{(i)}_{t-1})^2 + \eta_t^2 \|\partial f(\bw_{t-1};z_{i }) - \partial f(\bw^{(i)}_{t-1};\tilde{z}_{i }) \|_2^2 -2 \eta_t \langle \bw_{t-1}  - \bw_{t-1}^{(i)}, \partial f(\bw_{t-1};z_{i }) - \partial f(\bw^{(i)}_{t-1};\tilde{z}_{i }) \rangle\nonumber\\
    &\le (\delta^{(i)}_{t-1})^2 + 2 \eta_t^2 \big( \|\partial f(\bw_{t-1};z_{i })\|_2^2 + \|\partial f(\bw^{(i)}_{t-1};\tilde{z}_{i })\|_2^2 \big) + 2 \eta_t \delta^{(i)}_{t-1} \big( \|\partial f(\bw_{t-1};z_{i })\|_2 + \| \partial f(\bw^{(i)}_{t-1};\tilde{z}_{i })\|_2 \big)\nonumber\\
    &\le (\delta^{(i)}_{t-1})^2 + 4G^2 \eta_t^2 + 4G \eta_t \delta^{(i)}_{t-1},
 \end{align}
where the last inequality holds since $f$ is $G$-Lipschitz. 
 
Combining Case 1 and Case 2 together, we have
\begin{align*}
   (\delta^{(i)}_t)^2  
    &\le  (\delta^{(i)}_{t-1})^2      +   4G^2 \eta_t^2 + 4G \eta_t \delta^{(i)}_{t-1}   \mathbb{I}_{[i_t=i]}. 
\end{align*}
Note that $\delta^{(i)}_0=\|\bw_0-\bw_0^{(i)}\|_2=0$, we get the following recursive inequality
\begin{align*} 
    (\delta^{(i)}_t)^2 &\le  4G^2 \sum_{j=1}^t \eta_j^2  + 4G\sum_{j=1}^t  \eta_j \delta^{(i)}_{j-1} \mathbb{I}_{[i_{j}=i]}=4G^2 \sum_{j=1}^t \eta_j^2  + 4G\sum_{j=1}^{t-1}  \eta_{j+1} \delta^{(i)}_{j} \mathbb{I}_{[i_{j+1}=i]}. 
\end{align*}
Lemma~\ref{eq:recursive0} with $u_t=\delta_t^{(i)}$ implies 
\begin{align*}
    \delta_t^{(i)}&\le 2G \sqrt{ \sum_{j=1}^t \eta_j^2  } + 4G\sum_{j=1}^{t-1}  \eta_{j+1}  \mathbb{I}_{[i_{j+1}=i]}.
\end{align*}
By the convexity of $\|\cdot\|_2,$ it follows 
\begin{align*}
     \|    \bar{\bw}_t-\bar{\bw}^{(i)}_t \|_2    \le \frac{1}{t}\sum_{j=1}^t   \delta_j^{(i)}\le 2G \sqrt{ \sum_{j=1}^t \eta_j^2  } + 4G\sum_{j=1}^{t-1}  \eta_{j+1}  \mathbb{I}_{[i_{j+1}=i]}.
\end{align*} 
Taking an average over $i$, we have
\begin{align*}
\frac{1}{n} \sum_{i=1}^n   \| \bar{\bw}_{t} - \bar{\bw}_{t}^{(i)}\|_2  & \le 2G \sqrt{ \sum_{j=1}^t \eta_t^2 } + \frac{4G}{n} \sum_{j=1}^{t-1} \eta_{j+1} \sum_{i=1}^n    \mathbb{I}_{[i_{j+1}=i]} \\ & \le 2G \sqrt{ \sum_{j=1}^t \eta_t^2 }  + \frac{4G }{n} \sum_{j=1}^t \eta_j,
\end{align*}
where the last inequality used the fact that $\sum_{i=1}^n  \mathbb{I}_{[i_{j+1}=i]} =1$. 
Taking the expectation w.r.t. $\A$  completes the proof.
\end{proof}

\bigskip

\subsection{Optimization Error of MC-SGD for Convex Problems}\label{appendix-opt-sgd}
 In this subsection, we establish the convergence rates of MC-SGD for convex and non-convex problems. We consider both upper bounds in expectation and with high probability.   

Recall $\bw^{*}= \arg\min_{\bw\in\W} F (\bw)$. 
Let $\lambda_i(P)$ be the $i$-th largest eigenvalue of transition matrix $P$ and $\lambda(P)=(\max\{|\lambda_2(P)|,|\lambda_n(P)|\}+1)/2 \in [1/2,1)$. 
Let $K_P$ be the mixing time and $C_P$ be a constant depending on $P$ and its Jordan canonical form (detailed expressions are given in Lemma~\ref{lem:difference-stationary} in the Appendix).  


Theorem~\ref{thm:opt-convex} gives optimization error bounds in expectation for MC-SGD in the convex case.  
\begin{theorem}[Convex case]\label{thm:opt-convex}   Suppose  $f$ is convex and Assumptions \ref{ass:Markov-chain}, \ref{ass:G-lipschitz} hold. Let $\A$ be MC-SGD with $T$ iterations,  and $\{\bw_j\}_{j=1}^T$ be produced by $\A$ with $\bw_0=0$ and $\eta_j\equiv\eta \le 2/L$. Let $D_0=\|\bw^{*}\|_2$, $D=\big((G^2+2\sup_{z\in\Z}f(0;z))\sum_{k=1}^T\eta_k\big)^{1/2}+D_0$  and 
\begin{equation}
    \!\!\!\!\!k_j\!=\!\min\!\Big\{\!\max\Big\{\Big\lceil \frac{\log(2C_PD n j)}{\log(1/\lambda(P))}\Big\rceil, K_P \Big\}, j\Big\}, j\in[T].
\end{equation}  
Then the following inequality holds
 \begin{align*}
    &\E_\A[ F_S(\bar{\bw}_T)\!-\!  F_S(\bw^{*}) ]\!\le\! \frac{D_0\!+\!G\big( 4 D\sum_{j=1}^{K_P-1}\eta_j \! +\!  \sum_{j=K_P}^{T}\!\frac{\eta_j}{ j}\!+ \! G   \sum_{j=1}^T ( 4 \eta_j      \sum_{k=j-k_j+1}^j \eta_k  +   \eta^2_j )\big) }{2\sum_{j=1}^T \eta_j}.
 \end{align*}
Furthermore, suppose Assumption~\ref{ass:reversible} holds.  Then selecting $\eta_j\equiv\eta=1/ \sqrt{T \log(T) }  $ implies $\E_\A[ F_S(\bar{\bw}_T) - F_S(\bw^{*})]=\O\big(\sqrt{\log(T)}/(\sqrt{T}  \log( 1/\lambda(P)  ) \big)$. 
\end{theorem}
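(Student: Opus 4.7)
The argument follows the textbook template for projected subgradient descent analysis, with the bias introduced by Markov sampling handled through a delayed-sample (mixing) trick. As preparation I would first control the iterate norms. Starting from
\begin{equation*}
\|\bw_t\|_2^2 \le \|\bw_{t-1}\|_2^2 - 2\eta_t\langle \partial f(\bw_{t-1}; z_{i_t}), \bw_{t-1}\rangle + G^2 \eta_t^2
\end{equation*}
and using convexity together with $f(0; z_{i_t}) \ge 0$ to obtain $\langle \partial f(\bw_{t-1}; z_{i_t}), \bw_{t-1}\rangle \ge f(\bw_{t-1}; z_{i_t}) - f(0; z_{i_t}) \ge -\sup_z f(0; z)$, a telescoping sum yields $\|\bw_t\|_2^2 \le (G^2 + 2\sup_z f(0;z))\sum_k \eta_k$; hence every $\bw_t$ together with $\bw^{*}$ lives in a ball of radius at most $D$, and $|f(\bw_t; z_j)| \le GD + \sup_z f(0;z) = \O(D)$ uniformly in $t,j$.

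Next I would invoke the standard one-step identity $\|\bw_t - \bw^{*}\|_2^2 \le \|\bw_{t-1} - \bw^{*}\|_2^2 - 2\eta_t\langle \partial f(\bw_{t-1}; z_{i_t}), \bw_{t-1} - \bw^{*}\rangle + G^2\eta_t^2$ and use convexity to replace the inner product by $f(\bw_{t-1}; z_{i_t}) - f(\bw^{*}; z_{i_t})$. Summing, taking expectations, dividing by $2\sum_t \eta_t$, and applying Jensen's inequality (up to an innocuous index shift between $\{\bw_j\}_{j=0}^{T-1}$ and $\{\bw_j\}_{j=1}^T$) reduces the task to bounding
\begin{equation*}
\E_\A\!\sum_{t=1}^T \eta_t\bigl[f(\bw_{t-1}; z_{i_t}) - F_S(\bw_{t-1})\bigr] \quad\text{and}\quad \E_\A\!\sum_{t=1}^T \eta_t\bigl[F_S(\bw^{*}) - f(\bw^{*}; z_{i_t})\bigr].
\end{equation*}

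The main obstacle is the first bias term, which vanishes under i.i.d.\ sampling but not here because $i_t$ is not uniformly distributed and is correlated with $\bw_{t-1}$. I would resolve it with the shift decomposition
\begin{equation*}
f(\bw_{t-1}; z_{i_t}) - F_S(\bw_{t-1}) = \bigl[f(\bw_{t-1}; z_{i_t}) - f(\bw_{t-k_t}; z_{i_t})\bigr] + \bigl[f(\bw_{t-k_t}; z_{i_t}) - F_S(\bw_{t-k_t})\bigr] + \bigl[F_S(\bw_{t-k_t}) - F_S(\bw_{t-1})\bigr].
\end{equation*}
The outer two differences are deterministically bounded by $G^2\sum_{k=t-k_t+1}^{t-1}\eta_k$ via the $G$-Lipschitzness of $f$ and $F_S$ combined with the bound $\|\bw_{t-1} - \bw_{t-k_t}\|_2 \le G\sum_{k=t-k_t+1}^{t-1}\eta_k$ from the update rule. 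For the middle difference I would condition on $\mathcal{F}_{t-k_t} := \sigma(i_1,\ldots,i_{t-k_t})$ and use the Markov property to get
\begin{equation*}
\E_\A\bigl[f(\bw_{t-k_t}; z_{i_t}) - F_S(\bw_{t-k_t}) \,\big|\, \mathcal{F}_{t-k_t}\bigr] = \sum_{j=1}^n \bigl([P^{k_t}]_{i_{t-k_t}, j} - 1/n\bigr) f(\bw_{t-k_t}; z_j),
\end{equation*}
which by Lemma~\ref{lem:difference-stationary} and $|f(\bw_{t-k_t}; z_j)| = \O(D)$ is $\O(nC_P D\lambda(P)^{k_t})$ whenever $k_t \ge K_P$. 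The prescribed choice of $k_j$ forces $C_P\lambda(P)^{k_j} \le 1/(2Dnj)$, so the mixing contribution at step $j$ is $\O(1/j)$, producing the $\sum_{j=K_P}^T \eta_j/j$ factor in the stated bound. Steps $j < K_P$ are swept into $D\sum_{j=1}^{K_P-1}\eta_j$ via the crude bound $|f - F_S| \le 2GD$, and the second bias term $F_S(\bw^{*}) - f(\bw^{*}; z_{i_t})$ is handled identically but needs only the middle piece of the decomposition since $\bw^{*}$ is deterministic.

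Finally, under Assumption~\ref{ass:reversible} Lemma~\ref{lem:difference-stationary} gives $K_P = 0$ and allows $C_P$ to be replaced by $n^{3/2}$, so $k_j = \O(\log(nj)/\log(1/\lambda(P)))$. Plugging $\eta = 1/\sqrt{T\log T}$ makes $\sum_j \eta_j^2 = 1/\log T$, $\sum_j \eta_j\sum_{k=j-k_j+1}^j \eta_k = \eta^2\sum_j k_j = \tilde{\O}(1/\log(1/\lambda(P)))$, and $\sum_{j\ge K_P}\eta_j/j = \tilde{\O}(\eta)$, while $\sum_j \eta_j = \sqrt{T/\log T}$. Dividing, the dominant contribution is $\O\bigl(\sqrt{\log T}/(\sqrt{T}\log(1/\lambda(P)))\bigr)$, which is the announced rate.
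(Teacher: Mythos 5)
Your proposal is correct and follows essentially the same route as the paper's proof: the same iterate-norm bound yielding $D$, the same delayed-iterate/mixing-time decomposition with the shift $\bw_{j}\to\bw_{j-k_j}$ controlled by Lipschitzness, the same use of Lemma~\ref{lem:difference-stationary} with the prescribed $k_j$ to make the mixing bias $\O(1/j)$ (crude $\O(GD)$ bound for $j<K_P$), and the same stepsize tuning with $K_P=0$ under Assumption~\ref{ass:reversible}. The only difference is cosmetic: you isolate the bias $f(\bw_{t-1};z_{i_t})-F_S(\bw_{t-1})$ after the standard subgradient recursion, whereas the paper shifts $F_S(\bw_j)$ to $F_S(\bw_{j-k_j})$ first — the two bookkeepings are equivalent.
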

\begin{remark}\label{rmk:assump4}
 {
For all $j\in[T]$, $k_j$ can be seen as the mixing time such that the distance between the distribution of the current state of the Markov chain and that of the stationary distribution can be controlled by $\O(1/(n j))$.}
\end{remark}
Now, we give the proof of Theorem~\ref{thm:opt-convex}. 
\begin{proof}[Proof of Theorem~\ref{thm:opt-convex}\label{subsec:proof-opt-convex}]
By the convexity of $f$ and Jensen's inequality, 
\begin{align}\label{eq:opt-1}
    &\Big( \sum_{j=1}^T  \eta_j \Big)  \E_\A[ F_S(\bar{\bw}_t )-F_S(\bw^{*}) ]\nonumber\\
    &\le    \sum_{j=1}^T  \eta_j    \E_\A[ F_S( \bw_j )-F_S(\bw^{*}) ] \nonumber\\ 
    & = \sum_{j=1}^T  \eta_j    \E_\A\big[  F_S( \bw_j ) -F_S(\bw_{j-k_j}) \big]  + \sum_{j=1}^T  \eta_j \E_\A  \big[   F_S(\bw_{j-k_j})-F_S(\bw^{*})  \big],  
\end{align}
where $k_j=\min\Big\{\max\Big\{\Big\lceil \frac{\log(2C_PD n j)}{\log(1/\lambda(P))}\Big\rceil, K_P \Big\}, j\Big\}$.

Consider the first term $\sum_{j=1}^T  \eta_j    \E_\A\big[  F_S( \bw_j ) -F_S(\bw_{j-k_j}) \big] $ in \eqref{eq:opt-1}. By Lipschitz continuity of $f$, we have   
\begin{align}\label{eq:opt-7}
    \sum_{j=1}^T  \eta_j    \E_\A\big[ F_S( \bw_j ) -F_S(\bw_{j-k_j}) \big] 
      \le     G \sum_{j=1}^T  \eta_j       \E_\A\big[ \|\bw_j   -  \bw_{j-k_j}\|_2 \big] \le   G^2  \sum_{j=1}^T  \eta_j        \sum_{k=j-k_j+1}^j    \eta_k, 
\end{align}
where  the last inequality used the fact that $\|\bw_j   -  \bw_{j-k_j}\|_2\le \sum_{k=j-k_j+1}^j \eta_k\|\partial f(\bw_{k-1};z_{i_k})\|_2\le G\sum_{k=j-k_j+1}^j \eta_k$.

Next, we estimate the term $\sum_{j=1}^T\eta_j\E_\A[F_S(\bw_{j-k_j})-F(\bw^{*})]$ in \eqref{eq:opt-1}.   Note that
\begin{align}\label{eq:opt-9}
    &\E_{i_j}[ f(\bw_{j-k_j};z_{i_j})\!-\!f(\bw^{*} ; z_{i_j} )| \bw_{0},\ldots,\bw_{j-k_j}, z_{i_1},\ldots, z_{i_{j-k_j}}  ]\nonumber\\
    &= \sum_{i=1}^n [ f(\bw_{j-k_j};z_{i}) - f(\bw^{*} ; z_{i} ) ] \cdot \Pr ( i_j=i | i_{j-k_j} )\nonumber\\
    & =  \sum_{i=1}^n [ f(\bw_{j-k_j};z_{i}) - f(\bw^{*} ; z_{i} ) ] \cdot [P^{k_j}]_{i_{j-k_j},i}\nonumber\\
    & = \frac{1}{n} \sum_{i=1}^n [ f(\bw_{j-k_j};z_{i}) - f(\bw^{*} ; z_{i} ) ] +  \sum_{i=1}^n \Big( [P^{k_j}]_{i_{j-k_j},i} - \frac{1}{n} \Big) \cdot [ f(\bw_{j-k_j};z_{i}) - f(\bw^{*} ; z_{i} ) ] \nonumber\\
    & = \big( F_S(\bw_{j-k_j}) - F_S(\bw^{*} ) \big)   +  \sum_{i=1}^n \Big( [P^{k_j}]_{i_{j-k_j},i} - \frac{1}{n} \Big) \cdot [ f(\bw_{j-k_j};z_{i}) - f(\bw^{*} ; z_{i} ) ].
\end{align}
Rearranging the above equality and taking total expectation give us
\begin{align*} 
     \E_\A[F_S(\bw_{j-k_j}) - F_S(\bw^{*} )] 
     = & \E_\A[ f(\bw_{j-k_j};z_{i_j}) - f(\bw^{*} ; z_{i_j} ) ]\\&
   +  \E_\A\Big[\sum_{i=1}^n \Big(  \frac{1}{n} - [P^{k_j}]_{i_{j-k_j},i}  \Big)   \big( f(\bw_{j-k_j};z_{i}) - f(\bw^{*} ; z_{i} )\big) \Big].
\end{align*}
Summing over $j$ yields
\begin{align} \label{eq:opt-2}
      \sum_{j=1}^T \eta_j \E_\A[ F_S(\bw_{j-k_j})-F_S(\bw^{*} ) ]
       &= \sum_{j=1}^T\eta_j  \E_\A[ f(\bw_{j-k_j};z_{i_j}) - f(\bw^{*} ; z_{i_j} ) ]\nonumber\\
     &+ \sum_{j=1}^T \eta_j \E_\A \Big[  \sum_{i=1}^n \Big(\frac{1}{n} -  [P^{k_j}]_{i_{j-k_j},i} \Big) [  f(\bw_{j-k_j};z_{i}) - f(\bw^{*} ; z_{i} ) ]\Big].
\end{align}
Now, we estimate the term $ \sum_{j=1}^T\eta_j \E_\A[ f(\bw_{j-k_j};z_{i_j}) - f(\bw^{*} ; z_{i_j} ) ]$ in \eqref{eq:opt-2}. 
According to update rule \eqref{eq:MCSGD-update-rule}, for any $j$ and $1\le k_j\le j$  
\begin{align*}
    &\|\bw_j - \bw^{*}\|_2^2 \\&\le \| \bw_{j-1} -\eta_j \partial f(\bw_{j-1};z_{i_j}) - \bw^{*} \|_2^2 \nonumber \\  & =\|\bw_{j-1} -  \bw^{*} \|_2^2 -2\eta_j \langle  \bw_{j-1} -  \bw^{*}, \partial f(\bw_{j-1};z_{i_j}) \rangle + \eta^2_j\|\partial f(\bw_{j-1};z_{i_j})\|_2^2\nonumber\\
    &\le \|\bw_{j-1} -  \bw^{*} \|_2^2 -2\eta_j \big( f(\bw_{j-1};z_{i_j}) - f(\bw^{*};z_{i_j}) \big) + G^2\eta^2_j \nonumber\\
    &= \|\bw_{j-1} -  \bw^{*}\|_2^2 -2\eta_j \big( f(\bw_{j-k_j};z_{i_j}) - f(\bw^{*};z_{i_j}) \big) + 2\eta_j \big( f(\bw_{j-k_j};z_{i_j}) - f(\bw_{j-1};z_{i_j}) \big)   + G^2\eta^2_j \nonumber\\
    &\le \|\bw_{j-1} -  \bw^{*} \|_2^2\! -\!2\eta_j \big( f(\bw_{j-k_j};z_{i_j}) \!-\!f(\bw^{*};z_{i_j}) \big)\!+\!2G^2\eta_j\sum_{k=j-k_j+1}^{j} \eta_k\!+\!G^2 \eta^2_j,
\end{align*}
where the second inequality is due to the convexity of $f$, and the last inequality used the fact $f$ is $G$-Lipschitz and $\|\bw_{j-k_j} -  \bw_{j-1} \|_2 \le   \sum_{k=j-k_j+1}^{j}\eta_k\|\partial f(\bw_k;z_{i_k}) \|_2 \le G \sum_{k=j-k_j+1}^{j}\eta_k$.  
Taking a summation of the both sides   over $j$ and noting $\bw_0=0$, we get
\begin{align}\label{eq:opt-3}
   \sum_{j=1}^T \eta_j  \big(f(\bw_{j-k_j};z_{i_j}) - f(\bw^{*};z_{i_j}) \big)\le \frac{\| \bw^{*}\|_2^2    +  G^2\sum_{j=1}^T \big( 2\eta_j\sum_{k=j-k_j+1}^{j} \eta_k +   \eta^2_j\big)}{2}.
\end{align}
Then we turn to estimate $\sum_{j=1}^T\eta_j     \sum_{i=1}^n \Big(\frac{1}{n}-  [P^{k_j}]_{i_{j-k_j},i}  \Big) [ f(\bw_{j-k_j};z_{i}) - f(\bw^{*} ; z_{i} ) ] $.  
Recall that $k_j= \min \Big\{ \max\Big\{\Big\lceil \frac{\log(2C_PD n j)}{\log(1/\lambda(P))}\Big\rceil, K_P\Big\}, j  \Big\}$.
For $j\ge K_P$, 
according to Lemma~\ref{lem:difference-stationary}, we know for any $i,i'\in[n]$
\begin{align*}
    \Big|  \frac{1}{n} -  [P^{k_j}]_{i, i'}  \Big|\le C_P \big(\lambda(P)\big)^{k_j} = C_P e^{k_j \log(\lambda(P))} \le \frac{1}{2Dnj}. 
\end{align*}
According to the update rule \eqref{eq:MCSGD-update-rule} we know
\[ \|\bw_{t}\|_2^2 \le \| \bw_{t-1} -\eta_t \partial f(\bw_{t-1};z_{i_{t }}) \|_2^2 \le   \| \bw_{t-1}\|_2^2 + \eta_t^2 \| \partial f(\bw_{t-1};z_{i_{t }}) \|_2^2 - 2\eta_t\langle \partial f(\bw_{t-1};z_{i_{t }}), \bw_{t-1} \rangle.\]
The convexity of $f$ implies
\begin{align*}
    \eta_t  \| \partial f(\bw_{t-1};z_{i_t}) \|_2^2 - 2\langle \partial f(\bw_{t-1};z_{i_t}), \bw_{t-1} \rangle&\le    \eta_t  \| \partial f(\bw_{t-1};z_{i_t}) \|_2^2 + 2\Big( f(0;z_{i_t}) - f( \bw_{t-1};z_{i_t} ) \Big)\\
    &\le G^2 + 2\sup_{z\in \Z} f(0;z),
\end{align*}
where we used $\eta_t<1$ and Lipschitz continuity and non-negativity of $f$. 
Combining the above two inequalities together, we get
\[ \|\bw_{t }\|_2^2 \le    \| \bw_{t-1}\|_2^2 + \big(G^2 + 2\sup_{z\in \Z} f(0;z)\big)\eta_t  .\]
Applying the above inequality recursively and noting that $\bw_0=0$,
we get
\[ \|\bw_{t }\|_2^2 \le    \big(G^2 + 2\sup_{z\in \Z} f(0;z)\big)\sum_{k=1}^{t-1} \eta_k\le \big(G^2 + 2\sup_{z\in \Z} f(0;z)\big)\sum_{k=1}^T \eta_k. \]
Recall that $D_0=\|\bw^{*}\|_2$ and $D=\big((G^2+2\sup_{z\in\Z}f(0;z))\sum_{k=1}^T\eta_k\big)^{1/2}+D_0$, it then follows that
\begin{align}\label{eq:opt-8}
     &\sum_{j=K_P}^{T} \eta_j   \sum_{i=1}^n   \Big(  \frac{1}{n}- [P^{k_j}]_{i_{j-k_j},i}  \Big) [ f(\bw_{j-k_j};z_{i}) - f(\bw^{*} ; z_{i} ) ] \nonumber\\
     &\le GD \sum_{j=K_P}^{T} \eta_j  \sum_{i=1}^n   \Big|  \frac{1}{n}- [P^{k_j}]_{i_{j-k_j},i}  \Big|\le \sum_{j=K_P}^{T} \frac{ G \eta_j}{2 j},
\end{align}
where the first inequality used  $|f(\bw_t;z_{i }) - f(\bw^{*} ;z_{i } )|  \le  G \|\bw_t -\bw^{*}  \|_2 \le G(\|\bw_t\|_2 + \|\bw^{*}\|_2 )\le GD$ for any $t\in[T]$.
On the other hand, note that $[P^{k_j}]_{i,i'}\geq 0$ for any  $i,i'\in[n]$ and $k_j$, then
\begin{align*}
    &\sum_{i=1}^n   \Big(  \frac{1}{n}- [P^{k_j}]_{i_{j-k_j},i}  \Big) [ f(\bw_{j-k_j};z_{i}) - f(\bw^{*}; z_{i} ) ]\\&\le  \sum_{i=1}^n   \Big(  \frac{1}{n}+ [P^{k_j}]_{i_{j-k_j},i}  \Big)  | f(\bw_{j-k_j};z_{i}) - f( \bw^{*} ; z_{i} ) |\le 2G\|\bw_{j-k_j} -  \bw^{*} \|_2 \le 2GD ,
\end{align*}
where the last second inequality used Lipschitz continuity of $f$ and $\sum_{j=1}^n [P^x]_{i,j}=1$ for any fixed $i\in[n]$ and $x\ge 1$.  
Therefore, there holds
\begin{align*}
 \sum_{j=1}^{K_P-1} \eta_j \sum_{i=1}^n   \Big(  \frac{1}{n}- [P^{k_j}]_{i_{j-k_j},i}  \Big) [ f(\bw_{j-k_j};z_{i}) - f(\bw^{*}; z_{i} ) ]\le 2GD \sum_{j=1}^{K_P-1} \eta_j.\end{align*}
Combining the above inequality and \eqref{eq:opt-8} together, we get
\begin{align}\label{eq:opt-5}
    \sum_{j=1}^T\eta_j   \sum_{i=1}^n \Big(\frac{1}{n}-  [P^{k_j}]_{i_{j-k_j},i}  \Big) [ f(\bw_{j-k_j};z_{i}) - f(\bw^{*} ; z_{i} ) ]\le \sum_{j=K_P}^{T} \frac{ G \eta_j}{2 j} + 2GD\sum_{j=1}^{K_P-1} \eta_j. 
\end{align}

Putting \eqref{eq:opt-3} and \eqref{eq:opt-5} back into \eqref{eq:opt-2}, we obtain
\begin{align} \label{eq:opt-6}
    &\sum_{j=1}^T \eta_j \E_\A[ F_S(\bw_{j-k_j}) - F_S(\bw^{*} ) ]\nonumber\\ & 
    \le  \frac{\|  \bw^{*} \|_2^2 +4GD\sum_{j=1}^{K_P-1} \eta_j   }{2}  + \frac{   G^2\sum_{j=1}^T\big(2\eta_j\sum_{k=j-k_j+1}^{j} \eta_k  +   \eta^2_j\big) }{2} + \frac{ G \sum_{j=K_P}^{T}   \eta_j/ j}{2}. 
\end{align}

Now, plugging \eqref{eq:opt-7} and \eqref{eq:opt-6} back into \eqref{eq:opt-1}, we have
\begin{align*}
     \E_\A[ F_S(\bar{\bw}_t )-F_S(\bw^{*}) ]\le & \frac{\| \bw^{*} \|_2^2+4GD\sum_{j=1}^{K_P-1} \eta_j+  G^2\sum_{j=1}^T\big(4\eta_j\sum_{k=j-k_j+1}^{j} \eta_k  +   \eta^2_j\big)  }{2\sum_{j=1}^T  \eta_j} \\
     &+ \frac{  G\sum_{j=K_P}^{T}    \eta_j/  j }{2\sum_{j=1}^T  \eta_j}.
\end{align*}

Furthermore,  choosing $\eta_j\equiv\eta=\frac{1}{\sqrt{T\log(T)}}$ and noting $D=\O(\sqrt{\eta T})=\O\big( (T/\log(T))^{1/4} \big)$, we have
\begin{align}\label{eq:proof-opt-rate}
     \E_\A[ F_S(\bar{\bw}_T )-F_S(\bw^{*}) ]&= \O\big( \frac{1 + K_P\eta^{\frac{3}{2}}\sqrt{T}   + T \eta^2 +\sum_{j=1}^T k_j \eta^2 }{T\eta}   \big)\\
     &= \O\Big( \frac{\sqrt{\log(T)}\big(1 + \sum_{j=1}^T k_j \eta^2\big)}{\sqrt{T}} + \frac{K_P \sqrt{\eta} }{\sqrt{T}}\Big).\nonumber
\end{align}
Recall that $k_j=\min\Big\{ \max\Big\{ \Big\lceil \frac{\log(2C_P D n j)}{\log(1/\lambda(P))}\Big\rceil, K_P  \Big\}, j \Big\}$. 
Let $K=\frac{1}{ 2C_PDn\lambda(P)^{K_P} }$. If $j\le K$,   we have $k_j\le K_P$ and 
\[ \sum_{j=1}^{K} k_j\eta^2 \le K K_P \eta^2=\frac{K_P}{ T^{\frac{5}{4}}\log^{\frac{3}{4}}(T)  2C_P n\lambda(P)^{K_P}}. \]
If $j > K$, there holds $k_j\le \Big\lceil \frac{\log(2C_P D n j)}{\log(1/\lambda(P))}\Big\rceil$. Then we have
\begin{align*}
    \sum_{j=K+1}^{T} k_j \eta^2 &\le  \frac{1}{\log( 1/ \lambda(P) )} \Big[ \sum_{j=K+1}^{T}  \log( 2C_P D  )   \eta^2 + \sum_{j=K+1}^{T}\log(n) \eta^2 + \sum_{j=K+1}^{T} \log(j) \eta^2 \Big] +T\eta^2 \nonumber\\
    &=\O\Big(  \frac{\log(n) + \log(T)  }{\log( 1/ \lambda(P) )\log(T)} + 1   \Big)=\O\Big( \frac{1}{\log( 1/ \lambda(P) )}  \Big).
\end{align*}  
Here, we use a reasonable assumption $n=\O(T)$. 
Combining the above two cases,  
we get
\begin{align}\label{eq:kj-bound}
    \sum_{j=1}^{T} k_j \eta^2 =\O\Big(  \frac{K_P}{  T^{\frac{5}{4}}\log^{\frac{3}{4}}(T)    C_P  n\lambda(P)^{K_P}}+  \frac{1}{\log( 1/ \lambda(P) )}  \Big).
\end{align}   
Putting \eqref{eq:kj-bound} back into \eqref{eq:proof-opt-rate} yields 
\[   \E_\A[ F_S(\bar{\bw}_T )-F_S(\bw^{*})]
=\O\Big( \frac{\sqrt{\log(T)}}{\sqrt{T}\log( 1/ \lambda(P)) }  + \frac{K_P}{T^{\frac{3}{4}}\log^{\frac{1}{4}}(T)  \min\{   \sqrt{T\log(T)} C_P  n\lambda(P)^{K_P} , 1 \} }\Big).\]
Note Assumption~\ref{ass:reversible} implies   $K_P=0$, then
\[   \E_\A[ F_S(\bar{\bw}_T )-F_S(\bw^{*})]
 =\O\Big( \frac{\sqrt{\log(T)}}{\sqrt{T}\log ( 1/\lambda(P)  )}   \Big),\]
which completes the proof. 
\end{proof}

To understand the variation of the algorithm,  we present a confidence-based bound for optimization error, which matches the bound in expectation up to a constant factor. 

\begin{theorem}[High-probability bound]\label{thm:opt-hp}
  Suppose  $f$ is convex and Assumptions \ref{ass:Markov-chain},  \ref{ass:G-lipschitz} and \ref{ass:reversible} hold. Let $\A$ be MC-SGD   with $T$ iterations and $\{\bw_j\}_{j=1}^T$ be produced by $\A$ with $\eta_j\equiv\eta=1/\sqrt{T\log(T)}$.  Assume $\sup_{z\in \Z}f(\bw;z)\le B$ for some $B>0$. Let $\gamma\in (0,1)$,  then with probability at least $1-\gamma$ 
\begin{align*}
 \!\!F_S(\bar{\bw}_T)\!-\!F_S(\bw^{*})\!=\!\O\!\Big( \!\frac{\sqrt{\log(T)}\!\big(  \frac{1}{\log(1/\lambda(P))}\!+\!B\!\sqrt{ {\log(\frac{1}{\gamma}} )} \big)}{\sqrt{T}}\! 
 \Big).
 \end{align*}
\end{theorem}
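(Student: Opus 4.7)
The plan is to adapt the proof of Theorem~\ref{thm:opt-convex}, replacing each expectation step by a high-probability analogue via martingale concentration, while the purely deterministic steps carry over verbatim.

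I would start from the same decomposition underlying the expectation bound,
$$
(T\eta)\bigl(F_S(\bar{\bw}_T)-F_S(\bw^{*})\bigr)\le \sum_{j=1}^{T}\eta\bigl[F_S(\bw_j)-F_S(\bw_{j-k_j})\bigr]+\sum_{j=1}^{T}\eta\bigl[F_S(\bw_{j-k_j})-F_S(\bw^{*})\bigr],
$$
with the same lags $k_j$ as in Theorem~\ref{thm:opt-convex}. The first sum is controlled deterministically by $G$-Lipschitz continuity as in~\eqref{eq:opt-7}. For the second sum, I would apply the identity~\eqref{eq:opt-9} to write
$$
\sum_{j=1}^{T}\eta A_j=\sum_{j=1}^{T}\eta B_j-\sum_{j=1}^{T}\eta D_j-\sum_{j=1}^{T}\eta\zeta_j,
$$
where $A_j=F_S(\bw_{j-k_j})-F_S(\bw^{*})$, $B_j=f(\bw_{j-k_j};z_{i_j})-f(\bw^{*};z_{i_j})$, $D_j=\sum_i\bigl([P^{k_j}]_{i_{j-k_j},i}-\tfrac{1}{n}\bigr)\bigl(f(\bw_{j-k_j};z_i)-f(\bw^{*};z_i)\bigr)$, and $\zeta_j=B_j-A_j-D_j$. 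The first of these sums is handled by the deterministic SGD recursion~\eqref{eq:opt-3}. Using $\sup f\le B$, the mixing term obeys $|D_j|\le 2BnC_P\lambda(P)^{k_j}$, so the choice of $k_j$ forces $\sum_j\eta|D_j|$ to be deterministically of order $O(B\eta\log T/\log(1/\lambda(P)))$ (this is the one place the boundedness assumption is needed, replacing the $GD$ bound of the expectation proof).

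The key new step is the high-probability control of $\sum_{j=1}^{T}\eta\zeta_j$. By construction $\E[\zeta_j\mid\F_{j-k_j}]=0$ with $\F_t=\sigma(i_1,\ldots,i_t)$, so $\{\zeta_j\}$ is a \emph{lagged} martingale difference sequence rather than a one-step one, and Lemma~\ref{lem:martingle} does not apply directly. To bring it into the scope of that lemma, I would use a blocking argument: set $k=\max_j k_j=O(\log T/\log(1/\lambda(P)))$, partition $\{1,\ldots,T\}$ into the $k$ residue classes $S_r=\{j\in[T]: j\equiv r\pmod{k}\}$, and observe that any two consecutive indices in the same $S_r$ differ by at least $k\ge k_j$. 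Combined with the tower property, this makes $\{\zeta_j\}_{j\in S_r}$ an honest martingale difference sequence along the sub-filtration restricted to $S_r$. Since $|\eta\zeta_j|\le 4B\eta$, Lemma~\ref{lem:martingle} applied within each $S_r$ at confidence $\gamma/k$, followed by a union bound over the $k$ blocks together with Cauchy--Schwarz $\sum_r\sqrt{|S_r|}\le\sqrt{kT}$, delivers
$$
-\sum_{j=1}^{T}\eta\zeta_j\le O\bigl(B\eta\sqrt{kT\log(k/\gamma)}\bigr)\qquad\text{with probability at least }1-\gamma.
$$

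Finally, combining all pieces, dividing by $T\eta$, and plugging in $\eta=1/\sqrt{T\log T}$ together with $k=O(\log T/\log(1/\lambda(P)))$, the deterministic contributions reproduce the $\tilde{\O}(1/(\sqrt{T}\log(1/\lambda(P))))$ rate of Theorem~\ref{thm:opt-convex}, while the martingale term contributes $O\bigl(B\sqrt{\log T\log(1/\gamma)/(T\log(1/\lambda(P)))}\bigr)$; absorbing the $(\log(1/\lambda(P)))^{-1/2}$ and the $\log\log T$ inside $\log(k/\gamma)$ into the $\O$ yields the claimed bound. The main obstacle is precisely the lagged conditioning: a direct one-step martingale argument would pick up the full one-step mixing error $\|P-\Pi^{*}\|$, which is $\Theta(1)$ and destroys any useful bound, so the $k$-block grouping is the indispensable trick that turns the lagged structure back into a standard martingale accessible to Lemma~\ref{lem:martingle}.
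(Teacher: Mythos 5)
Your proposal follows the same skeleton as the paper's proof of Theorem~\ref{thm:opt-hp}: the same splitting into $\sum_j\eta[F_S(\bw_j)-F_S(\bw_{j-k_j})]$ plus $\sum_j\eta[F_S(\bw_{j-k_j})-F_S(\bw^{*})]$, the deterministic bounds \eqref{eq:opt-7} and \eqref{eq:opt-3}, the identity \eqref{eq:opt-9}, a deterministic bound on the mixing term (you use $\sup_z f\le B$ where the paper uses $G$-Lipschitzness together with the iterate bound $D$ as in \eqref{eq:opt-5}; both give a lower-order contribution), and Lemma~\ref{lem:martingle} for the stochastic part. The genuine difference is at the concentration step. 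The paper sets $\xi_j=\eta_j[f(\bw_{j-k_j};z_{i_j})-f(\bw^{*};z_{i_j})]$ and applies Lemma~\ref{lem:martingle} directly, with $\E_{i_j}[\xi_j]$ interpreted (via \eqref{eq:opt-9}) as the conditional expectation given the trajectory up to time $j-k_j$; it does not address the fact that these lagged differences are not one-step martingale differences. You identify exactly this issue and repair it with the residue-class blocking modulo $k=\max_j k_j$, which by the tower property does yield an honest martingale difference sequence within each class, so Lemma~\ref{lem:martingle} plus a union bound applies legitimately. In that sense your treatment of the key stochastic step is more careful than the paper's.

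However, your final step overclaims. The blocking costs a factor $\sqrt{k}$ with $k=\O\big(\log T/\log(1/\lambda(P))\big)$, so your martingale contribution after dividing by $T\eta$ is of order $B\sqrt{\log T\,\log(k/\gamma)}\big/\big(\sqrt{T\log(1/\lambda(P))}\big)$. This is \emph{not} absorbed by the stated bound $\frac{\sqrt{\log T}}{\sqrt{T}}\big(\frac{1}{\log(1/\lambda(P))}+B\sqrt{\log(1/\gamma)}\big)$: the extra $(\log(1/\lambda(P)))^{-1/2}$ multiplies $B\sqrt{\log(1/\gamma)}$ rather than standing alone, and for, say, $B\asymp 1/\log(1/\lambda(P))$ both large your term scales like $B^{3/2}$ while the claimed bound scales like $B$; there is also a residual $\sqrt{\log\log T}$ from $\log(k/\gamma)$. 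So as written you prove a high-probability bound of the same $\tilde\O(1/\sqrt{T})$ order but with a slightly worse joint dependence on $B$, $\lambda(P)$ and $\gamma$ than the theorem states; to recover the exact statement you would need either a sharper deviation argument across the lag (the route the paper takes, implicitly and without justifying the lagged application of Lemma~\ref{lem:martingle}) or to restate the bound with the extra factor.
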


\begin{proof}[Proof of Theorem~\ref{thm:opt-hp}]\label{subsec:proof-opt-convexhp} 
We decompose the optimization error as follows
	\begin{align}\label{eq:opt-hp-1}
		  \sum_{j=1}^T \eta_j[F_S( \bar{\bw}_t)-F_S(\bw^{*})] & \le\sum_{j=1}^T \eta_j[F_S( {\bw}_j) - F_S(\bw_{j-k_j})]+  \sum_{j=1}^T  \eta_j[F_S(\bw_{j-k_j})-F_S(\bw^{*})] \nonumber \\ & \le G \sum_{j=1}^T \eta_j \| {\bw}_j - \bw_{j-k_j}\|_2  +  \sum_{j=1}^T \eta_j[F_S(\bw_{j-k_j})-F_S(\bw^{*})]  \nonumber \\ &\le G^2 \sum_{j=1}^T \eta_j \sum_{k=j-k_j+1}^j \eta_k + \sum_{j=1}^T \eta_j[F_S(\bw_{j-k_j})-F_S(\bw^{*})],
	\end{align}
	where the last second inequality used the Lipschitz continuity of $f$ and  the last inequality follows from the update rule \eqref{eq:MCSGD-update-rule}. 
	
Consider the second term in \eqref{eq:opt-hp-1}. Let 
$ 
\xi_j = \eta_j [ f(\bw_{j-k_j}; z_{i_j}) - f(\bw^{*};z_{i_j} ) ] .$
Observe that  $|\xi_j-\E_{i_j}[ \xi_j]|\le 2 B \eta_j$. 
Then, applying Lemma~\ref{lem:martingle} implies, with probability at least $1-\gamma$, that
\begin{equation}\label{eq:opt-hp-2}
	\sum_{j=1}^T \E_{i_j}[\xi_j] - \sum_{j=1}^T \xi_j \le 2B \Big(2 \sum_{j=1}^T\eta_j^2 \log(1/{\gamma})\Big)^{1/2}. 
\end{equation}  
Note \eqref{eq:opt-9} implies
\begin{align*}
	&\sum_{j=1}^T \eta_j [F_S(\bw_{j-k_j})  - F_S(\bw^{*})] +  \sum_{j=1}^T \eta_j \sum_{i=1}^n  \Big( [P^{k_j}]_{i_{j-k_j},i} -  \frac{1}{n}\Big) [ f(\bw_{j-k_j};z_{i}) - f(\bw^{*} ; z_{i} ) ]\\
	&= \sum_{j=1}^T\E_{i_j}[ \eta_j  f(\bw_{j-k_j};z_{i_j}) - f(\bw^{*} ; z_{i_j} )| \bw_{0},\ldots,\bw_{j-k_j}, z_{i_1},\ldots, z_{i_{j-k_j}}  ],
	\end{align*}
which combines with	\eqref{eq:opt-hp-2} yields 
\begin{align*}
	&\sum_{j=1}^T \eta_j [F_S(\bw_{j-k_j})  - F_S(\bw^{*})] +  \sum_{j=1}^T \eta_j \sum_{i=1}^n  \Big(  [P^{k_j}]_{i_{j-k_j},i} - \frac{1}{n}  \Big) [ f(\bw_{j-k_j};z_{i}) - f(\bw^{*} ; z_{i} ) ] \\
	& \le   \sum_{j=1}^T \eta_j [f(\bw_{j-k_j};z_{i_j}) - f(\bw^{*} ; z_{i_j} )]  + 2B \Big(2 \sum_{j=1}^T\eta_j^2 \log(1/{\gamma})\Big)^{1/2}.
	\end{align*}
Putting \eqref{eq:opt-3} and \eqref{eq:opt-5} back into the above inequality, we obtain
\begin{align}\label{eq:opt-hp-3}	&\sum_{j=1}^T \eta_j [F_S(\bw_{j-k_j}) - F_S(\bw^{*})]\nonumber\\
&\le  \sum_{j=1}^T\eta_j    \sum_{i=1}^n \Big(  \frac{1}{n}-[P^{k_j}]_{i_{j-k_j},i} \Big) [ f(\bw_{j-k_j};z_{i})-f(\bw^{*} ; z_{i} ) ] \nonumber\\
&\quad  + \sum_{j=1}^T \eta_j [f(\bw_{j-k_j};z_{i_j})-f(\bw^{*} ; z_{i_j} )]  + 2B \Big(2 \sum_{j=1}^T\eta_j^2 \log(1/{\gamma})\Big)^{1/2} \nonumber\\
	& \le \frac{C+ G^2\sum_{j=1}^T\big(2\eta_j\sum_{k=j-k_j+1}^{j} \eta_k+\eta^2_j \big) }{2}  + \frac{  G\sum_{j=K_P}^T  \eta_j/j + 4B \Big(2 \sum_{j=1}^T\eta_j^2 \log(1/{\gamma})\Big)^{1/2}}{2} ,
	\end{align}	
where $C=\|  \bw^{*}\|_2^2 + 4GD\sum_{j=1}^{K_P-1}\eta_j$. 	

Now, plugging \eqref{eq:opt-hp-3} back into \eqref{eq:opt-hp-1}, with probability at least $1-\gamma$, there holds
 \begin{align*}
		\sum_{j=1}^T \eta_j[F_S( \bar{\bw}_t)-F_S(\bw^{*})]\le&  \frac{C+ G^2\sum_{j=1}^T\big(4\eta_j\sum_{k=j-k_j+1}^{j} \eta_k  +  \eta^2_j \big) }{2}\\
		&+ \frac{  G\sum_{j=K_P}^T  \eta_j/j + 4B \Big(2 \sum_{j=1}^T\eta_j^2 \log(1/{\gamma})\Big)^{1/2}}{2}.
		\end{align*}
By Jensen's inequality,  there holds
 \begin{align*}
	F_S( \bar{\bw}_t)-F_S(\bw^{*})\le&  \frac{C+ G^2\sum_{j=1}^T\big(4\eta_j\sum_{k=j-k_j+1}^{j} \eta_k  +  \eta^2_j \big) }{2	\sum_{j=1}^T \eta_j}\\
	&+ \frac{  G\sum_{j=K_P}^T  \eta_j/j + 4B \Big(2 \sum_{j=1}^T\eta_j^2 \log(1/{\gamma})\Big)^{1/2}}{2	\sum_{j=1}^T \eta_j}.
		\end{align*}
Similar as the discussion in Theorem~\ref{thm:opt-convex}, by choosing $\eta_j\equiv\eta=1/\sqrt{T\log(T)}$, there holds
\begin{align*}
 F_S(\bar{\bw}_T)-F_S(\bw^{*})= \O\Big(& \frac{\sqrt{\log(T)} \big(  \log^{-1}(1/\lambda(P)) +   B\sqrt{ {\log(1/\gamma)} } \big)}{\sqrt{T}}\\
 &+ \frac{K_P}{T^{\frac{3}{4}}\log^{\frac{1}{4}}(T)  \min\{   \sqrt{T\log(T)} C_P  n\lambda(P)^{K_P} , 1 \} }\Big)
 \Big).
 \end{align*}
If we further assume the Markov chain is reversible with $P=P^\top$, then we have
\begin{align*}
 F_S(\bar{\bw}_T)-F_S(\bw^{*})= \O\Big( \frac{\sqrt{\log(T)} \big(  \frac{1}{\log(1/\lambda(P))} +   B\sqrt{ {\log(\frac{1}{\gamma})} } \big) }{\sqrt{T}} 
 \Big).
 \end{align*}
 The proof is completed. 
\end{proof}

The following theorem provides the convergence analysis for non-convex problems. Since the convergence in terms of objective values cannot be given, we only measure the convergence rate in terms of gradient norm. The proof follows from \cite{Sun2018Markov}.

\begin{theorem}[Non-convex case]\label{thm:opt-nonconvex}
Suppose Assumptions \ref{ass:Markov-chain}, \ref{ass:G-lipschitz} and \ref{ass:beta-smooth}  hold.  Let $\A$ be MC-SGD with $T$ iterations  and $\{\bw_j\}_{j=1}^T$ be produced by $\A$. Let $D$ be the diameter of $\W$, and
\[\!\!\!\!\!k_j\!=\!\min\!\Big\{\!\max\Big\{\Big\lceil \frac{\log(2C_PD n j)}{\log(1/\lambda(P))}\Big\rceil, K_P \Big\}, j\Big\}, j\in[T].\]
Then
\begin{align*}
     &\min_{1\le j\le T} \E_\A\big[\| \partial F_S(\bw_j)\|_2^2 ]\le \frac{ C+  \sum_{j=K_P}^T \eta_j/j }{2 \sum_{j=1}^T \eta_j} 
     + \frac{ G^2 L \sum_{j=1}^T (\eta^2_j+ k_j \sum_{k=j-k_j}^j \eta_k^2 + 6\eta_j \sum_{k=j-k_j}^j \eta_k) }{2 \sum_{j=1}^T \eta_j}, 
\end{align*}
where $C= 2(F_S(\bw_0)+ 2G^2 \sum_{j=1}^{K_P-1} \eta_j )$. Furthermore, suppose Assumption~\ref{ass:reversible} holds.   Selecting $\eta_j\equiv\eta=1/\big(\log(T)\sqrt{T}\big)$ implies $\min_{1\le j \le T}\E_\A\big[\| \partial F_S(\bw_j)\|_2^2 ]=\O\big(\log(T)/ (\sqrt{   T } \log^2(1/(\lambda(P)))\big)$. 
\end{theorem}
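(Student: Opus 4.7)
The plan is to follow the same skeleton as the convex-case convergence proof (Theorem~\ref{thm:opt-convex}), but replace convexity with the descent lemma coming from $L$-smoothness so that we end up bounding the gradient norm instead of the function value gap. First I would apply $L$-smoothness of $F_S$ together with the non-expansiveness of $\proj_\W$ and the update rule to get a one-step descent inequality of the form
\[
F_S(\bw_t)\le F_S(\bw_{t-1})-\eta_t\langle \partial F_S(\bw_{t-1}),\partial f(\bw_{t-1};z_{i_t})\rangle+\tfrac{L G^2 \eta_t^2}{2}.
\]
Telescoping over $t=1,\dots,T$ and using $F_S(\bw_T)\ge 0$ reduces the problem to controlling the biased stochastic inner product on the right-hand side.

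Next I would handle the Markov bias via the same lag trick that drives the convex-case proof: for $t\ge K_P$ introduce the lagged iterate $\bw_{t-k_t-1}$ and split
\begin{align*}
\langle\partial F_S(\bw_{t-1}),\partial f(\bw_{t-1};z_{i_t})\rangle
=~&\|\partial F_S(\bw_{t-1})\|_2^2
+\langle\partial F_S(\bw_{t-1}),\partial f(\bw_{t-1};z_{i_t})-\partial f(\bw_{t-k_t-1};z_{i_t})\rangle \\
&+\langle\partial F_S(\bw_{t-1})-\partial F_S(\bw_{t-k_t-1}),\partial f(\bw_{t-k_t-1};z_{i_t})\rangle \\
&+\langle\partial F_S(\bw_{t-k_t-1}),\partial f(\bw_{t-k_t-1};z_{i_t})-\partial F_S(\bw_{t-k_t-1})\rangle.
\end{align*}
The first two bias terms are controlled by $L$-smoothness of $f$ together with $\|\bw_{t-1}-\bw_{t-k_t-1}\|_2\le G\sum_{k=t-k_t}^{t-1}\eta_k$, which, after applying Cauchy--Schwarz/AM--GM to absorb $\|\partial F_S(\bw_{t-1})\|_2$ into the descent, produces precisely the $G^2 L\eta_j\sum_{k=j-k_j}^j\eta_k$ and $G^2 L k_j\sum_{k=j-k_j}^j\eta_k^2$ contributions in the stated bound. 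For the third term I would take conditional expectation over $i_t$ given the history up to time $t-k_t-1$, writing it as $\sum_i([P^{k_t+1}]_{i_{t-k_t-1},i}-\tfrac{1}{n})\partial f(\bw_{t-k_t-1};z_i)$; by the choice of $k_j$ and Lemma~\ref{lem:difference-stationary} this is bounded in $\ell_\infty$ by $C_P\lambda(P)^{k_t+1}\le 1/(2Dnj)$, yielding the $\eta_j/j$ contribution summed from $K_P$. For the ``burn-in'' iterations $t< K_P$ where the mixing bound is unavailable, I would just use the trivial bound $\|\partial F_S\|_2\le G$ and $\|\partial f\|_2\le G$, which contributes the $2G^2\sum_{j=1}^{K_P-1}\eta_j$ term packaged inside $C$.

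Combining everything, summing over $t$, dividing by $\sum_j \eta_j$, and noting that $\min_j\E_\A[\|\partial F_S(\bw_j)\|_2^2]\le\sum_j\eta_j\E_\A[\|\partial F_S(\bw_{j-1})\|_2^2]/\sum_j\eta_j$ gives the claimed inequality. For the concrete rate under Assumption~\ref{ass:reversible} (so that $K_P=0$ and $C_P=n^{3/2}$), I would set $\eta_j\equiv\eta=1/(\sqrt{T}\log T)$ and reuse the sum estimate $\sum_{j=1}^T k_j\eta^2=O(1/\log(1/\lambda(P)))$ already derived in \eqref{eq:kj-bound}; combined with $\sum_j \eta_j^2=1/\log^2(T)$ and $\sum_j\eta_j=\sqrt{T}/\log(T)$, this yields $\min_j\E_\A[\|\partial F_S(\bw_j)\|_2^2]=O(\log(T)/(\sqrt{T}\log^2(1/\lambda(P))))$.

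The main technical hurdle will be bookkeeping the three bias terms from the lag decomposition without loss in the effective constants: in particular, turning the cross terms $\eta_t\|\partial F_S(\bw_{t-1})\|_2\cdot L G\sum_k\eta_k$ and $\eta_t\|\partial F_S(\bw_{t-k_t-1})\|_2\cdot L G\sum_k\eta_k$ into a form where the squared-gradient pieces can be absorbed on the left-hand side of the telescoping descent (producing the factor $6\eta_j\sum\eta_k$ in the stated bound) while the residual becomes the tractable $k_j\sum\eta_k^2$ quantity. A secondary subtlety is that with the projection onto $\W$ the descent lemma gains an extra $\langle\partial F_S(\bw_{t-1}),\bw_t-(\bw_{t-1}-\eta_t\partial f)\rangle$ term, but since $D<\infty$ and $\partial F_S$ is bounded by $G$, this is absorbed at no asymptotic cost.
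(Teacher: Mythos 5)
Your overall plan is the same as the paper's: a smoothness-based descent inequality, the lag trick with the mixing time $k_j$ and Lemma~\ref{lem:difference-stationary}, a trivial bound for the burn-in iterations $j<K_P$, and finally $\min_j \E_\A[\|\partial F_S(\bw_j)\|_2^2]$ bounded by the weighted average. However, the central decomposition you write is not an identity: denoting $a=\partial F_S(\bw_{t-1})$, $b=\partial f(\bw_{t-1};z_{i_t})$, $a'=\partial F_S(\bw_{t-k_t-1})$, $b'=\partial f(\bw_{t-k_t-1};z_{i_t})$, your right-hand side equals $\langle a,b\rangle+\|a\|_2^2-\|a'\|_2^2$, not $\langle a,b\rangle$. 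The correct lag decomposition produces the \emph{lagged} squared gradient $\|\partial F_S(\bw_{t-k_t-1})\|_2^2$ as the leading term, and it must be the lagged one, since $\partial F_S(\bw_{t-1})$ depends on $i_{t-k_t},\dots,i_{t-1}$ and hence cannot be pulled out of the conditional expectation over $i_t$ given the history up to time $t-k_t-1$. Consequently you only control $\sum_j\eta_j\E_\A[\|\partial F_S(\bw_{j-k_j})\|_2^2]$ directly, and an extra step is needed to convert back to $\sum_j\eta_j\E_\A[\|\partial F_S(\bw_j)\|_2^2]$ (the paper does this via $\E_\A[\|\partial F_S(\bw_j)\|_2^2-\|\partial F_S(\bw_{j-k_j})\|_2^2]\le 2G^2L\sum_{k=j-k_j}^j\eta_k$, using $\|\partial F_S\|_2\le G$ and smoothness); this conversion is precisely part of where the factor $6\eta_j\sum_k\eta_k$ in the stated bound comes from.

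Two further points. First, for the rate, reusing \eqref{eq:kj-bound} ($\sum_j k_j\eta^2=\O(1/\log(1/\lambda(P)))$) is not enough: the dominant term in the nonconvex bound is $k_j\sum_{k=j-k_j}^j\eta_k^2\asymp k_j^2\eta^2$, so you need the analogous estimate $\sum_j k_j^2\eta^2=\O\big(1/\log^2(1/\lambda(P))\big)$ (this, not the linear-in-$k_j$ sum, produces the $\log^2(1/\lambda(P))$ in the claimed rate); it is a straightforward variant, but it is a different computation, and \eqref{eq:kj-bound} was derived for $\eta=1/\sqrt{T\log T}$ rather than $\eta=1/(\sqrt{T}\log T)$. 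Second, your dismissal of the projection correction is not justified: the displacement $\|\bw_t-(\bw_{t-1}-\eta_t\partial f(\bw_{t-1};z_{i_t}))\|_2$ is only $\O(\eta_t)$, so the extra inner product is $\O(G^2\eta_t)$ per step, which after summing and normalizing does not vanish; the paper's own proof avoids this by treating the update as unprojected (i.e.\ $\bw_t-\bw_{t-1}=-\eta_t\partial f(\bw_{t-1};z_{i_t})$), so if you keep the projection you need either the same convention or a gradient-mapping formulation rather than a hand-waving absorption.
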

\begin{proof}[Proof of Theorem~\ref{thm:opt-nonconvex}]
 \label{subsec:proof-opt-nonconvex} 
 Let $k_j=\min\Big\{ \max\Big\{ \Big\lceil \frac{\log(2C_P D n j)}{\log(1/\lambda(P))}\Big\rceil, K_P  \Big\}, j \Big\}$. 
Consider the following decomposition
\begin{align}\label{eq:opt-nc-decom}
    \sum_{j=1}^T \eta_j\E_\A\big[\| \partial F_S(\bw_j)\|_2^2 ]= \sum_{j=1}^T \eta_j \E_\A\big[\| \partial F_S(\bw_j)\|_2^2 - \| \partial F_S(\bw_{j-k_j}) \|_2^2 \big] + \sum_{j=1}^T \eta_j \E_\A\big[ \| \partial F_S(\bw_{j-k_j}) \|_2^2 \big].
\end{align}
Note that  
\begin{align}\label{eq:opt-nc-1}
    &\sum_{j=1}^T \eta_j \E_\A\big[\| \partial F_S(\bw_j)\|_2^2 - \| \partial F_S(\bw_{j-k_j}) \|_2^2 \big]\nonumber\\
    &\le \sum_{j=1}^T \eta_j \E_\A\big[ \big(\|\partial F_S(\bw_j) \|_2+\| \partial F_S(\bw_{j-k_j})\|_2\big) (\|\partial F_S(\bw_j) \|_2- \|\partial F_S(\bw_{j-k_j})\|_2 \big) \big]  \nonumber\\
    &\le 2G  \sum_{j=1}^T \eta_j \E_\A[ \|\partial F_S(\bw_j) - \partial F_S(\bw_{j-k_j})\|_2 ] \le 2 G L \sum_{j=1}^T \eta_j \E_\A\big[ \| \bw_j -\bw_{j-k_{j}}\|_2 \big]\nonumber\\
    &\le 2G^2 L \sum_{j=1}^T \eta_j \sum_{k=j-k_j}^j \eta_k ,
\end{align}
where the second inequality used the fact that  $f$ is $G$-Lipschitz, the third inequality follows from the smoothness of $f$, and the last inequality used the update rule \eqref{eq:MCSGD-update-rule}. The first term in \eqref{eq:opt-nc-decom} is bounded.

Now, we turn to estimate the second term in \eqref{eq:opt-nc-decom}. Note that 
\begin{align}\label{eq:opt-nc-0}
    &\E_{i_j}\big[ \langle \partial f( \bw_{j-k_j};z_{i_j} ), \partial F_S(\bw_{j-{k_j}} )  \rangle | \bw_0, \ldots, \bw_{j-{k_j}}, i_1,\ldots, i_{j-k_j} \big]\nonumber\\
    &=\sum_{i=1}^n  \langle \partial f( \bw_{j-k_j};z_i ), \partial F_S(\bw_{j-{k_j}} )  \rangle \cdot \Pr( i_j=i | i_{j-k_j} )\nonumber\\
    &= \sum_{i=1}^n  \langle \partial f( \bw_{j-k_j};z_i ), \partial F_S(\bw_{j-{k_j}} )  \rangle \cdot [P^{k_j}]_{i_{j-k_j},i}\nonumber\\
    &= \frac{1}{n}\sum_{i=1}^n  \langle \partial f( \bw_{j-k_j};z_i ), \partial F_S(\bw_{j-{k_j}} )  \rangle  + \sum_{ i=1 }^n \Big( [P^{k_j}]_{i_{j-k_j},i}- \frac{1}{n} \Big) \langle \partial f( \bw_{j-k_j};z_i ), \partial F_S(\bw_{j-{k_j}} )  \rangle \nonumber\\
    &= \| \partial F_S( \bw_{j-k_j} )\|_2^2 +  \sum_{ i=1 }^n \Big( [P^{k_j}]_{i_{j-k_j},i}- \frac{1}{n} \Big) \langle \partial f( \bw_{j-k_j};z_i ), \partial F_S(\bw_{j-{k_j}} )  \rangle .
\end{align}
Taking total expectations on both sides and summing over $j$ yields
\begin{align}\label{eq:opt-nc-2}
    \sum_{j=1}^T \eta_j \E_\A\big[  \| \partial F_S( \bw_{j-k_j} )\|_2^2  \big] =&\sum_{j=1}^T \eta_j \E_\A\big[\langle \partial f( \bw_{j-k_j};z_{i_j} ), \partial F_S(\bw_{j-{k_j}} )  \rangle \big] \nonumber\\
    &+ \sum_{j=1}^T \eta_j \E_\A\Big[ \sum_{ i=1 }^n \Big(  \frac{1}{n} -  [P^{k_j}]_{i_{j-k_j},i} \Big)  \langle \partial f( \bw_{j-k_j};z_i ), \partial F_S(\bw_{j-{k_j}} )  \rangle\Big].
\end{align}
Consider the first term in \eqref{eq:opt-nc-2}. By the smoothness of $f$, we have
\begin{align*}
     F_S(\bw_j )  
    & \le F_S(\bw_{j-1}) + \langle \bw_j -\bw_{j-1}, \partial F_S(\bw_{j-1}) \rangle    + \frac{L}{2}\|\bw_j-\bw_{j-1}\|_2^2  \nonumber\\
    & \le F_S(\bw_{j-1}) + \langle \bw_j -\bw_{j-1}, \partial F_S(\bw_{j-k_j}) \rangle  + \langle \bw_j -\bw_{j-1}, \partial F_S(\bw_{j-1}) - \partial F_S(\bw_{j-k_j})  \rangle  \\
    &\quad + \frac{L}{2}\|\bw_j-\bw_{j-1}\|_2^2\nonumber\\
    &\le F_S(\bw_{j-1}) + \langle \bw_j -\bw_{j-1}, \partial F_S(\bw_{j-k_j}) \rangle  + \frac{1}{2}\|\bw_j -\bw_{j-1}\|^2_2 \\
    &\quad + \frac{1}{2}\| \partial F_S(\bw_{j-1}) - \partial F_S(\bw_{j-k_j})\|_2^2 + \frac{LG^2 \eta_j^2}{2}\nonumber\\
    &\le  F_S(\bw_{j-1}) + \langle \bw_j -\bw_{j-1}, \partial F_S(\bw_{j-k_j}) \rangle   + \frac{ (L +1) G^2 \eta^2_j }{2} + \frac{L^2 \|\bw_j-\bw_{j-k_j}\|_2^2}{2}\nonumber\\
    &\le F_S(\bw_{j-1}) + \langle \bw_j -\bw_{j-1}, \partial F_S(\bw_{j-k_j}) \rangle  + \frac{ (L + 1)  G^2 \eta^2_j }{2} + \frac{ G^2 L^2 k_j \sum_{k=j-k_j}^j \eta_k^2}{2},
\end{align*}
where the third inequality used $ab\le a^2/2 + b^2/2$, and  the last inequality follows from  $\|\bw_j-\bw_{j-k_j}\|_2^2=\|\sum_{k=j-k_j}^j \eta_k \partial f(\bw_{k-1};z_{i_{k-1}})\|_2^2\le k_j G^2 \sum_{k=j-k_j}^j \eta_k^2 $.
Rearrangement of the above inequality and taking expectation over $\A$ give us 
\begin{align}\label{eq:opt-nc-3}
     \E_\A [ \langle \bw_{j-1} -\bw_{j}, \partial F_S(\bw_{j-k_j}) \rangle ] \le & \E_\A [ F_S(\bw_{j-1}) - F_S(\bw_{j}) ] +   \frac{ (L +1) G^2 \eta^2_j }{2} \nonumber\\
     & + \frac{ G^2 L^2 k_j \sum_{k=j-k_j}^j \eta_k^2}{2}.
\end{align}
Note that
\begin{align*}
    &\E_\A \big[\langle \bw_{j-1} -\bw_{j}, \partial F_S(\bw_{j-k_j}) \rangle    \big]\\ &= \eta_j \E_\A \big[\langle \partial f(\bw_{j-1};z_{i_j}), \partial F_S(\bw_{j-k_j}) \rangle  \big]\\
    &= \eta_j \E_\A \big[\langle \partial f(\bw_{j-k_j};z_{i_j}), \partial F_S(\bw_{j-k_j}) \rangle   \big]  + \eta_j \E \big[\langle \partial f(\bw_{j-1};z_{i_j}) - \partial f(\bw_{j-k_j};z_{i_j}), \partial F_S(\bw_{j-k_j}) \rangle    \big]\nonumber\\
    &\ge  \eta_j \E_\A \big[\langle \partial f(\bw_{j-k_j};z_{i_j}), \partial F_S(\bw_{j-k_j}) \rangle   \big]\!-\!G^2 L\eta_j\!\sum_{k=j-k_j}^j\!\eta_k.
\end{align*}
Combining \eqref{eq:opt-nc-3} with the above inequality together, we obtain
\begin{align}\label{eq:opt-nc-8}
    &\eta_j \E_\A \big[\langle \partial f(\bw_{j-k_j};z_{i_j}), \partial F_S(\bw_{j-k_j}) \rangle   \big]\nonumber\\ 
    &\le  \E_\A [F_S(\bw_{j-1}) - F_S(\bw_{j}) ]  +    \frac{ G^2 L (\eta^2_j+ L k_j \sum_{k=j-k_j}^j \eta_k^2 + 2\eta_j \sum_{k=j-k_j}^j \eta_k) + G^2 \eta_j^2 }{2} .
\end{align}
Summing over $j$ yields
\begin{align}\label{eq:opt-nc-4}
     &\sum_{j=1}^T \eta_j \E_\A[ \langle \partial f(\bw_{j-k_j};z_{i_j}), \partial F_S(\bw_{j-k_j}) \rangle ] \nonumber\\
     &\le     F_S(\bw_0)+ \frac{ G^2 L \sum_{j=1}^T(\eta^2_j+ k_j \sum_{k=j-k_j}^j \eta_k^2 + 2\eta_j \sum_{k=j-k_j}^j \eta_k)  + G^2 \sum_{j=1}^T\eta_j^2 }{2}.
\end{align}

Now, we consider the second term in \eqref{eq:opt-nc-2}.  Similar as the proof of Theorem~\ref{thm:opt-convex}, it is easy to obtain the following bound by using Lemma~\ref{lem:difference-stationary}
\begin{align}\label{eq:opt-nc-5}
    & \sum_{j=1}^T \eta_j \E_\A\Big[ \sum_{ i=1 }^n  \Big(  \frac{1}{n} -  [P^{k_j}]_{i_{j-k_j},i} \Big) \langle \partial f( \bw_{j-k_j};z_{i_j} ), \partial F_S(\bw_{j-{k_j}} )  \rangle\Big]\nonumber\\ 
    &\le G^2 \sum_{j=1}^{K_P-1} \eta_j  \sum_{ i=1 }^n  \Big(  \frac{1}{n} +  [P^{k_j}]_{i_{j-k_j},i} \Big)   + G^2 \sum_{j=K_P}^{T} \eta_j  \sum_{ i=1 }^n  \Big|  \frac{1}{n} -  [P^{k_j}]_{i_{j-k_j},i} \Big|   \nonumber\\
    &\le 2G^2 \sum_{j=1}^{K_P-1} \eta_j + \sum_{j=K_P}^T \eta_j/2j.
\end{align}
Plugging \eqref{eq:opt-nc-4} and  \eqref{eq:opt-nc-5} back into \eqref{eq:opt-nc-2}, we have
\begin{align}
    \label{eq:opt-nc-6}
       &\sum_{j=1}^T \eta_j \E_\A\big[  \| \partial F_S( \bw_{j-k_j} )\|_2^2  \big]\nonumber \\
       &\le \frac{2(F_S(\bw_0)+ 2G^2 \sum_{j=1}^{K_P-1} \eta_j ) + \sum_{j=K_P}^T \eta_j/j    +  G^2\sum_{j=1}^T  \eta_j^2     }{2} \nonumber\\ &\quad + \frac{G^2 L  \sum_{j=1}^T (\eta^2_j+ L k_j \sum_{k=j-k_j}^j \eta_k^2 + 2\eta_j \sum_{k=j-k_j}^j \eta_k)}{2} .
\end{align}
Finally, putting \eqref{eq:opt-nc-1} and \eqref{eq:opt-nc-6} back into \eqref{eq:opt-nc-decom}, we obtain
\begin{align*}
   \sum_{j=1}^T \eta_j\min_{1\le j\le T}\E_\A\big[\| \partial F_S(\bw_j)\|_2^2 ] 
  & \le 
     \sum_{j=1}^T \eta_j\E_\A\big[ \| \partial F_S(\bw_j)\|_2^2 ] \\
     &\le \frac{  2(F_S(\bw_0)+ 2G^2 \sum_{j=1}^{K_P-1} \eta_j ) +    \sum_{j=K_P}^T \eta_j/j +  G^2\sum_{j=1}^T  \eta_j^2 }{2} \\ &\quad +  \frac{   G^2 L \sum_{j=1}^T (\eta^2_j+ L k_j \sum_{k=j-k_j}^j \eta_k^2 + 6\eta_j \sum_{k=j-k_j}^j \eta_k)    }{2}.
\end{align*}
Dividing both sides of the above inequality by $\sum_{j=1}^T \eta_j$ yields
\begin{align*}
   \min_{1\le j\le T} \E_\A\big[\| \partial F_S(\bw_j)\|_2^2 ]\le & \frac{ 2(F_S(\bw_0)+ 2G^2 \sum_{j=1}^{K_P-1} \eta_j ) +     \sum_{j=K_P}^T \eta_j/j + \sum_{j=1}^T  G^2 \eta_j^2  }{2 \sum_{j=1}^T \eta_j}  \\
   &+ \frac{   G^2 L \sum_{j=1}^T (\eta^2_j+ L k_j \sum_{k=j-k_j}^j \eta_k^2 + 6\eta_j \sum_{k=j-k_j}^j \eta_k)   }{2 \sum_{j=1}^T \eta_j}. 
\end{align*}
If we set $\eta_j\equiv\eta=\frac{1}{\sqrt{T} \log(T) }$,  then 
\[ \min_{1\le j\le T} \E_\A\big[\| \partial F_S(\bw_j)\|_2^2 ]\!=\!\O\Big(\frac{K_P}{T}+ \frac{1}{T\eta} + \eta + \frac{ \sum_{j=1}^T k_j^2 \eta^2 }{T\eta}   \Big)\!=\!\O\Big(\frac{K_P}{T}+ \frac{\log(T)\big(1 +    \sum_{j=1}^T k_j^2 \eta^2 \big)  }{\sqrt{T}  }   \Big).  \]
It suffices to estimate $\sum_{j=1}^{T} k_j^2 \eta^2$. Recall that $k_j=\min\Big\{ \max\Big\{ \Big\lceil \frac{\log(2C_P D n j)}{\log(1/\lambda(P))}\Big\rceil, K_P  \Big\}, j \Big\}$. 
Let $K=\frac{1}{ 2C_PDn\lambda(P)^{K_P} }$. If $j\le K$,   we have $k_j\le K_P$ and 
\[ \sum_{j=1}^{K} k^2_j\eta^2 \le K K^2_P \eta^2=\frac{K^2_P}{ T\log(T)  2C_PDn\lambda(P)^{K_P}}. \]
If $j > K$, there holds $k_j\le \Big\lceil \frac{\log(2C_P D n j)}{\log(1/\lambda(P))}\Big\rceil$. Then with a reasonable assumption $n=\O(T)$ we have

\begin{align*}
    \sum_{j=K+1}^{T} k_j^2 \eta^2 &\le  \frac{6}{\log^2( 1/ \lambda(P) )} \Big[ \sum_{j=K+1}^{T} \big(\log( 2C_P D  )   \big)^2  \eta^2 + \sum_{j=K+1}^{T} \log^2(n) \eta^2 + \sum_{j=K+1}^{T} \log^2(j) \eta^2 \Big]\\
    &\quad +2T\eta^2 \\
    &=\O\Big(  \frac{1}{\log^2(1/\lambda(P))}  \Big). 
\end{align*} 
Combining the above two cases together yields
\[ \sum_{j=1}^{T} k_j^2 \eta^2 =\O\Big( \frac{K^2_P}{ T\log(T) C_P n\lambda(P)^{K_P}} + \frac{1}{\log^2(1/\lambda(P))} \Big). \]
Therefore, 
\[  \min_{1\le j\le T} \E_\A\big[\| \partial F_S( {\bw}_j)\|_2^2 ]=\O\Big(\frac{K_P}{T}+ \frac{\log(T)  }{\sqrt{T}   }  \big( \frac{K^2_P}{ T\log(T)  C_P n\lambda(P)^{K_P}} + \frac{1}{\log^2(1/\lambda(P))} \big) \Big).\]
The stated bound then follows from $K_P=0$. 
\end{proof}

\subsection{Proofs of   Theorem~\ref{thm:excess-smooth} and Theorem~\ref{thm:excess-nonsmooth}}\label{proof:excess}

\begin{proof}[Proof of Theorem~\ref{thm:excess-smooth}]
Let $\eta_j\equiv\eta$. 
According to Part (a) in Theorem~\ref{thm:generalization-error} and   \eqref{eq:proof-opt-rate},  we know
\begin{align*}
      \E_{S,\A}[ F(\bar{\bw}_T) -  F_S(\bw^{*}) ]&=
    \E_{S,\A}[ F(\bar{\bw}_T) - F_S(\bar{\bw}_T) ] + \E_\A[ F_S(\bar{\bw}_T) -  F_S(\bw^{*}) ]\\
    &=\O\Big( \frac{T\eta}{n} + \frac{1+\big(   T  +  \sum_{j=1}^Tk_j \big)\eta^2 }{T\eta}   + \frac{K_P\sqrt{\eta}}{\sqrt{T}} \Big).
\end{align*}
Setting $\eta=\frac{1}{\sqrt{T\log(T)}}$ and choosing $T \asymp n$,  we have
\begin{align*}
   & \E_{S,\A}[ F(\bar{\bw}_T) -  F_{S,\A}(\bw^{*}) ]\\
&=\O\Big( \frac{\sqrt{T}}{n}+ \frac{\sqrt{\log(T)}}{\sqrt{T}\log(1/\lambda(P))}   + \frac{K_P}{T^{\frac{3}{4}}\log^{\frac{1}{4}}(T) \min\{ \sqrt{T\log(T)} C_P n \lambda(P)^{K_P} \}}\Big)\\
&= \O\Big(  \frac{\sqrt{\log(n)}}{\sqrt{n}\log(1/\lambda(P))}  + \frac{K_P}{n^{\frac{3}{4}}\log^{\frac{1}{4}}(n) \min\{ \sqrt{n\log(n)} C_P n \lambda(P)^{K_P},1 \} }  \Big),
\end{align*} 
where the first equality follows from Eq.\eqref{eq:kj-bound}. 
Note that $K_P=0$ when $P=P^\top$, we immediately obtain
\[ \E_{S,\A}[ F(\bar{\bw}_T) -  F_{S,\A}(\bw^{*}) ]= \O\Big(  \frac{\sqrt{\log(n)}}{\sqrt{n}\log(1/\lambda(P))}   \Big).\]
\end{proof}

\begin{proof}[Proof of Theorem~\ref{thm:excess-nonsmooth} ]

Part (b) in Theorem~~\ref{thm:generalization-error} and \eqref{eq:proof-opt-rate}
implies
\begin{align}\label{eq:excess-nonsm-1}
     \E_{S,\A}[ F(\bar{\bw}_T) -  F_S(\bw^{*}) ]&=
    \E_{S,\A}[ F(\bar{\bw}_T) - F_S(\bar{\bw}_T) ] + \E_\A[ F_S(\bar{\bw}_T) -  F_S(\bw^{*}) ]\nonumber\\
    &=\O\big( \sqrt{T}\eta + \frac{T\eta}{n}+    \frac{1+  (T+\sum_{j=1}^Tk_j)  \eta^2 }{T\eta}   + \frac{K_P\sqrt{\eta}}{\sqrt{T}}    \big). 
\end{align}
Selecting $\eta= T^{-\frac{3}{4}}$. Similar as the discussion in Theorem~\ref{thm:opt-convex},  we know
\begin{align*}
    \sum_{j=1}^T k_j \eta^2 &=\sum_{j=1}^K k_j\eta^2 + \sum_{j=K+1}^T k_j\eta^2\\
    &\le K K_P \eta^2 + \frac{1}{\log(1/\lambda(P))}\big( \sum_{j=K+1}^T \log(2C_PD)\eta^2 + \sum_{j=K+1}^T \log(n) \eta^2 + \sum_{j=K+1}^T \log(j) \eta^2 \big)\\
    &\quad +  T\eta^2\\
    &=\O\Big( \frac{\log(T)}{\sqrt{T}\log(1/\lambda(P))}+ \frac{K_P}{T^{13/8} C_P n \lambda(P)^{K_P}}  \Big),
\end{align*}
where $K=\frac{1}{ 2C_PDn\lambda(P)^{K_P}}$ and $D=\O(\sqrt{\eta T})$.
Note transition matrix $P$ is symmetric implies $K_P=0$. 
Plugging the above estimation with $K_P=0$ back into \eqref{eq:excess-nonsm-1} and choosing $T\asymp n^2$,  we get 
\[ \E_{S,\A} [ F(\bar{\bw}_T) ] - F(\bw^{*}) = \O\Big( \frac{1}{\sqrt{n}\log(1/\lambda(P))} \Big).   \] 
The above results  
The proof is completed. 
\end{proof}

\section{Proofs of Markov Chain SGDA\label{sec:proof-mc-sgda}}
In this section, we present the proof on MC-SGDA. Let $(\bw^{*},\bv^{*})$ be a saddle point of $F$, i.e., for any $\bw\in \W, \bv\in \V$, there holds $F(\bw^{*},\bv)\le F(\bw^*,\bv^*)\le F(\bw,\bv^{*})$. 
\subsection{Proofs of   Theorem \ref{thm:stab-gen-min-max}-Theorem~\ref{thm:gen-sgda-primal}}\label{proof:sgda-stab}
We first prove Theorem \ref{thm:stab-gen-min-max} on the connection between stability and generalization for minimax problems.
\begin{proof}[Proof of Theorem \ref{thm:stab-gen-min-max}]
We follow the argument in \cite{lei2021stability} to prove Theorem \ref{thm:stab-gen-min-max}. 
For any function $g,\tilde{g}$, we have the basic inequalities
\begin{equation}\label{sup-inf}
  \begin{split}
      & \sup_{\bw}g(\bw)-\sup_{\bw}\tilde{g}(\bw)\leq \sup_{\bw}\big(g(\bw)-\tilde{g}(\bw)\big)\\
      & \inf_{\bw}g(\bw)-\inf_{\bw}\tilde{g}(\bw)\leq \sup_{\bw}\big(g(\bw)-\tilde{g}(\bw)\big).
  \end{split}
\end{equation}
According to Eq. \eqref{sup-inf}, we know
\begin{multline*}
  \triangle^w(\A_{\bw}(S),\A_{\bv}(S))-\triangle^w_{emp}(\A_{\bw}(S),\A_{\bv}(S)) \leq \sup_{\bv'\in\vcal}\ebb[F(\A_{\bw}(S),\bv')-F_S(\A_{\bw}(S),\bv')]\\+
  \sup_{\bw'\in\wcal}\ebb[F_S(\bw',\A_{\bv}(S))-F(\bw',\A_{\bv}(S))].
\end{multline*}
Recall that $S=\{z_1,\ldots,z_n\}$, $\tilde{S}=\{\tilde{z}_1, \ldots,\tilde{z}_n  \}$ and $S^{(i)}=\{z_1,\ldots,z_{i-1},\tilde{z}_i,z_{i+1},\ldots,z_n\}$. 
According to the symmetry between $z_i$ and $\tilde{z}_i$ we know
\begin{align*}
  \ebb[F(\A_{\bw}(S),\bv')-F_S(\A_{\bw}(S),\bv')] & =  \frac{1}{n}\sum_{i=1}^{n} \ebb[F(\A_{\bw}(S^{(i)}),\bv')]-\ebb[F_S(\A_{\bw}(S),\bv')]\\
  &=  \frac{1}{n}\sum_{i=1}^{n} \ebb\big[f(\A_{\bw}(S^{(i)}),\bv';z_i)-f(\A_{\bw}(S),\bv';z_i)\big]\\
  & \leq \frac{G}{n}\sum_{i=1}^{n}\ebb\big[\|\A_{\bw}(S^{(i)})-\A_{\bw}(S)\|_2\big],
\end{align*}
where the second identity holds since $z_i$ is not used to train $\A_{\bw}(S^{(i)})$ and the last inequality holds due to the Lipschitz continuity of $f$.
In a similar way, we can prove
\[
\ebb[F_S(\bw',\A_{\bv}(S))-F(\bw',\A_{\bv}(S))]
\leq
\frac{G}{n}\sum_{i=1}^{n}\ebb\big[\|\A_{\bv}(S^{(i)})-\A_{\bv}(S)\|_2\big].
\]
As a combination of the above three inequalities we get
\[
    \triangle^w(\A_{\bw}(S),\A_{\bv}(S))-\triangle^w_S(\A_{\bw}(S),\A_{\bv}(S))\! \leq\!\frac{G}{n}\sum_{i=1}^{n}\ebb\Big[\|\A_{\bw}(S^{(i)})-\A_{\bw}(S)\|_2+\|\A_{\bv}(S^{(i)})-\A_{\bv}(S)\|_2\Big].
\]
This proves Part (a). Part (b) was proved in \cite{lei2021stability}.
The proof is completed.
\end{proof}

To prove our stability bounds, we first introduce two useful lemmas. The first lemma is due to \cite{rockafellar1976monotone}, while the second lemma is elementary.
\begin{lemma}[\cite{rockafellar1976monotone}\label{lem:nonexpansive}]
  Let $f$ be $\rho$-SC-SC with $\rho\geq0$. For any $(\bw,\bv)$ and $(\bw',\bv')$, then   \begin{equation}\label{non-expansive-key}
  \bigg\langle\begin{pmatrix}
    \bw-\bw' \\
    \bv-\bv'
  \end{pmatrix},\begin{pmatrix}
                  \partial_{\bw}f(\bw,\bv)-\partial_{\bw}f(\bw',\bv') \\
                  \partial_{\bv}f(\bw',\bv')-\partial_{\bv}f(\bw,\bv)
                \end{pmatrix}\bigg\rangle \geq \rho\bigg\|\begin{pmatrix}
                                                           \bw-\bw' \\
                                                           \bv-\bv'
                                                         \end{pmatrix}\bigg\|_2^2.
  \end{equation}
\end{lemma}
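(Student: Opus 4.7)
The plan is to reduce the claim to a pair of partial gradient inequalities, one for each block of variables, and then add them so that the cross terms cancel. Concretely, I would first fix the dual coordinate and invoke the first-order characterisation of $\rho$-strong convexity of $\bw\mapsto f(\bw,\bv)$ twice: once at the base point $(\bw,\bv)$ evaluated at $\bw'$, and once at $(\bw',\bv')$ evaluated at $\bw$. Summing the two subgradient inequalities in the standard way and rearranging gives
\[
\langle \bw-\bw',\, \partial_\bw f(\bw,\bv)-\partial_\bw f(\bw',\bv')\rangle \;\geq\; A \;+\; \rho\|\bw-\bw'\|_2^2,
\]
where $A:=f(\bw,\bv)-f(\bw',\bv)+f(\bw',\bv')-f(\bw,\bv')$ is the mixed function-value term generated by evaluating $f$ on the cross slices through the two base points.

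Next I would repeat the analogous argument in the $\bv$-direction. Here I would use $\rho$-strong concavity of $\bv\mapsto f(\bw,\bv)$, or equivalently $\rho$-strong convexity of $-f$ in $\bv$, at the two base points $(\bw,\bv)$ and $(\bw',\bv')$. A symmetric two-line calculation produces
\[
\langle \bv-\bv',\, \partial_\bv f(\bw',\bv')-\partial_\bv f(\bw,\bv)\rangle \;\geq\; -A \;+\; \rho\|\bv-\bv'\|_2^2,
\]
with exactly the same quantity $A$ but with the opposite sign, because the direction of the subgradient inequalities gets flipped when we pass from convexity to concavity. Adding the two displayed inequalities therefore annihilates the $A$-terms and leaves precisely the block lower bound $\rho\bigl(\|\bw-\bw'\|_2^2+\|\bv-\bv'\|_2^2\bigr)$ on the right, which is the claim in \eqref{non-expansive-key}.

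There is no genuine obstacle; the content is the classical Rockafellar observation that the saddle operator $(\bw,\bv)\mapsto(\partial_\bw f,-\partial_\bv f)$ is $\rho$-strongly monotone whenever $f$ is $\rho$-SC-SC, and the proof is really just the monotonicity of the subdifferential applied to each axis-aligned slice separately. The only point that needs a moment of care is the sign bookkeeping when converting strong concavity in $\bv$ into a usable subgradient inequality, and then verifying that the mixed slice values $f(\bw,\bv'),f(\bw',\bv)$ appear in the two partial bounds with opposite signs so that they cancel exactly rather than leaving a residual coupling term.
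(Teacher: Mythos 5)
Your argument is correct. The paper itself gives no proof of this lemma: it is stated as a known fact and attributed to \cite{rockafellar1976monotone}, so there is no internal proof to compare against. Your derivation is the standard elementary one and it checks out: applying the first-order $\rho$-strong-convexity inequality for $\bw\mapsto f(\bw,\bv)$ at $(\bw,\bv)$ evaluated at $\bw'$ and at $(\bw',\bv')$ evaluated at $\bw$, and summing, gives
$\langle \bw-\bw',\partial_\bw f(\bw,\bv)-\partial_\bw f(\bw',\bv')\rangle\geq A+\rho\|\bw-\bw'\|_2^2$ with $A=f(\bw,\bv)-f(\bw',\bv)+f(\bw',\bv')-f(\bw,\bv')$; the symmetric computation with $\rho$-strong concavity in $\bv$ yields $\langle \bv-\bv',\partial_\bv f(\bw',\bv')-\partial_\bv f(\bw,\bv)\rangle\geq -A+\rho\|\bv-\bv'\|_2^2$, and adding the two kills the mixed slice values exactly and produces \eqref{non-expansive-key}. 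The only difference from the cited source is one of generality: Rockafellar's result is about strong monotonicity of the saddle operator in a general maximal-monotone framework, whereas your slice-by-slice subgradient argument is a self-contained proof of precisely the finite-dimensional inequality used here (and it holds for any choice of subgradients, which is all the paper needs in the MC-SGDA stability analysis).
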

\begin{lemma}\label{lem:quad}
Let $b,c\geq0$. If $x^2\leq bx+c$, then $x\leq b+\sqrt{c}$.
\end{lemma}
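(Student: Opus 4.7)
The plan is to treat this purely as an elementary quadratic inequality, so no machinery from the paper is needed. Writing the hypothesis as $x^2 - bx - c \le 0$ and noting that the parabola opens upward, any $x$ satisfying it must lie between the two real roots of $x^2 - bx - c = 0$. In particular the upper bound on $x$ is governed by the larger root $\frac{b + \sqrt{b^2+4c}}{2}$, and the whole task reduces to checking that this root is at most $b + \sqrt{c}$.

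For that reduction, the key step is the inequality $\sqrt{b^2 + 4c} \le b + 2\sqrt{c}$, which uses both sign hypotheses: since $b,c \ge 0$ both sides are nonnegative, so squaring is legal and yields $b^2 + 4c \le b^2 + 4b\sqrt{c} + 4c$, i.e.\ $0 \le 4 b \sqrt{c}$, which is immediate. Plugging back gives
\[
x \;\le\; \frac{b + \sqrt{b^2+4c}}{2} \;\le\; \frac{b + (b + 2\sqrt{c})}{2} \;=\; b + \sqrt{c},
\]
as required. (Strictly speaking one should also check the trivial case $x \le 0$, where $x \le 0 \le b + \sqrt{c}$ holds for free by the sign hypotheses on $b$ and $c$.)

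An alternative proof by contradiction is even shorter and may be cleaner to insert: assume $x > b + \sqrt{c}$. The sign hypothesis $c \ge 0$ then forces $x - b > \sqrt{c} \ge 0$, so both $x$ and $x - b$ are strictly positive and
\[
x(x - b) \;>\; (b + \sqrt{c})\,\sqrt{c} \;=\; b\sqrt{c} + c \;\ge\; c,
\]
where the last step uses $b\sqrt{c} \ge 0$. This rearranges to $x^2 > bx + c$, contradicting the hypothesis. There is no real obstacle in either route; the lemma is purely an algebraic fact included as a convenient tool for absorbing the self-referential term $4 G \eta_t \delta_{t-1}^{(i)}$ that appears when unrolling the recursion for $\bigl(\delta_t^{(i)}\bigr)^2$ in the non-smooth stability analysis (cf.\ the use of Lemma~\ref{eq:recursive0} in the proof of Theorem~\ref{thm:stability}), so brevity rather than depth is the right aim.
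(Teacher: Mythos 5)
Your proof is correct; both the root-comparison argument (larger root $\tfrac{b+\sqrt{b^2+4c}}{2}\le b+\sqrt{c}$ via $\sqrt{b^2+4c}\le b+2\sqrt{c}$) and the short contradiction argument are valid. The paper itself gives no proof of Lemma \ref{lem:quad} — it is simply declared elementary — so there is nothing to compare beyond noting that your argument supplies exactly the standard elementary justification the authors had in mind (the lemma is in fact invoked in the MC-SGDA stability proof of Theorem \ref{thm:stab-sgda}, not in Theorem \ref{thm:stability}, but that aside does not affect the correctness of your proof).
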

\begin{proof}[Proof of Theorem \ref{thm:stab-sgda}]
For any $i\in[n]$, define
$S^{(i)}=\{z_1,\ldots,z_{i-1},\tilde{z}_i,z_{i+1},\ldots,z_n\}$ as the set formed from $S$ by replacing the $i$-th element with $\tilde{z}_i$. Let $(\bw_t^{(i)},\bv_t^{(i)})$ be produced by MC-SGDA based on $S^{(i)}$ for $i\in[n]$.
Note that the projection step is nonexpansive.

We first prove Part (a).
We consider two cases at the $t$-th iteration. 

\noindent{\textbf{Case 1}.}
If $i_t\neq i$, then it follows from the $L$-smoothness of $f$ and  Lemma \ref{lem:nonexpansive} with $\rho=0$  that
\begin{align}
  &\left\|\begin{pmatrix}
            \bw_{t}-\bw_{t}^{(i)} \\
           \bv_{t}-\bv_{t}^{(i)}
         \end{pmatrix}\right\|_2^2\nonumber \\
         &\leq\left\|\begin{pmatrix}
                   \bw_{t-1}-\eta_t\partial_{\bw}f(\bw_{t-1},\bv_{t-1};z_{i_t})-\bw_{t-1}^{(i)}+\eta_t\partial_{\bw}f(\bw_{t-1}^{(i)},\bv^{(i)}_{t-1};z_{i_t}) \\
                   \bv_{t-1}+\eta_t\partial_{\bv}f(\bw_{t-1},\bv_{t-1};z_{i_t})-\bv_{t-1}^{(i)}-\eta_t\partial_{\bv}f(\bw_{t-1}^{(i)},\bv^{(i)}_{t-1};z_{i_t})
                 \end{pmatrix}\right\|_2^2\notag \\
         & = \left\|\begin{pmatrix}
           \bw_{t-1}-\bw_{t-1}^{(i)} \\
           \bv_{t-1}-\bv_{t-1}^{(i)}
         \end{pmatrix}\right\|_2^2+\eta_t^2\left\|\begin{pmatrix}
                                                    \partial_{\bw}f(\bw_{t-1},\bv_{t-1};z_{i_t}) - \partial_{\bw}f(\bw_{t-1}^{(i)},\bv^{(i)}_{t-1};z_{i_t}) \\
                                                    \partial_{\bv}f(\bw_{t-1},\bv_{t-1};z_{i_t}) - \partial_{\bv}f(\bw_{t-1}^{(i)},\bv^{(i)}_{t-1};z_{i_t})
                                                  \end{pmatrix}\right\|_2^2\notag\\
         & \quad  -2\eta_t\left\langle\begin{pmatrix}
           \bw_{t-1}-\bw_{t-1}^{(i)} \\
           \bv_{t-1}-\bv_{t-1}^{(i)}
         \end{pmatrix},\begin{pmatrix}
                                                    \partial_{\bw}f(\bw_{t-1},\bv_{t-1};z_{i_t}) - \partial_{\bw}f(\bw_{t-1}^{(i)},\bv^{(i)}_{t-1};z_{i_t}) \\
                                                    \partial_{\bv}f(\bw_{t-1}^{(i)},\bv^{(i)}_{t-1};z_{i_t}) - \partial_{\bv}f(\bw_{t-1},\bv_{t-1};z_{i_t})
                                                  \end{pmatrix}\right\rangle\notag\\
         & \leq (1+L^2\eta_t^2)\left\|\begin{pmatrix}
           \bw_{t-1}-\bw_{t-1}^{(i)} \\
           \bv_{t-1}-\bv_{t-1}^{(i)}
         \end{pmatrix}\right\|_2^2.\label{stab-gda-1}
\end{align}

\noindent{\textbf{Case 2}.}
If $i_t=i$, then it follows from the Lipschitz continuity of $f$ that
\begin{align}
  &\left\|\begin{pmatrix}
           \bw_{t}-\bw_{t}^{(i)} \\
           \bv_{t}-\bv_{t}^{(i)}
         \end{pmatrix}\right\|_2^2\nonumber\\
         &\leq\left\|\begin{pmatrix}
                   \bw_{t-1}-\eta_t\nabla_{\bw}f(\bw_{t-1},\bv_{t-1};z_i)-\bw_{t-1}^{(i)}+\eta_t\nabla_{\bw}f(\bw_{t-1}^{(i)},\bv^{(i)}_{t-1};\tilde{z}_i) \\
                   \bv_{t-1}+\eta_t\nabla_{\bv}f(\bw_{t-1},\bv_{t-1};z_i)-\bv_{t-1}^{(i)}-\eta_t\nabla_{\bv}f(\bw_{t-1}^{(i)},\bv^{(i)}_{t-1};\tilde{z}_i)
                 \end{pmatrix}\right\|_2^2\notag \\
                          & = \left\|\begin{pmatrix}
           \bw_{t-1}-\bw_{t-1}^{(i)} \\
           \bv_{t-1}-\bv_{t-1}^{(i)}
         \end{pmatrix}\right\|_2^2+\eta_t^2\left\|\begin{pmatrix}
                                                    \partial_{\bw}f(\bw_{t-1},\bv_{t-1};z_i) - \partial_{\bw}f(\bw_{t-1}^{(i)},\bv^{(i)}_{t-1};\tilde{z}_i) \\
                                                    \partial_{\bv}f(\bw_{t-1},\bv_{t-1};z_i) - \partial_{\bv}f(\bw_{t-1}^{(i)},\bv^{(i)}_{t-1};\tilde{z}_i)
                                                  \end{pmatrix}\right\|_2^2\notag\\
         & \quad  +2\eta_t\left\|\begin{pmatrix}
           \bw_{t-1}-\bw_{t-1}^{(i)} \\
           \bv_{t-1}-\bv_{t-1}^{(i)}
         \end{pmatrix}\right\|_2\left\|\begin{pmatrix}
                                                    \partial_{\bw}f(\bw_{t-1},\bv_{t-1};z_i) - \partial_{\bw}f(\bw_{t-1}^{(i)},\bv^{(i)}_{t-1};\tilde{z}_i) \\
                                                    \partial_{\bv}f(\bw_{t-1}^{(i)},\bv^{(i)}_{t-1};\tilde{z}_i) - \partial_{\bv}f(\bw_{t-1},\bv_{t-1};z_i)
                                                  \end{pmatrix}\right\|_2\notag\\
         & \leq \left\|\begin{pmatrix}
           \bw_{t-1}-\bw_{t-1}^{(i)} \\
           \bv_{t-1}-\bv_{t-1}^{(i)}
         \end{pmatrix}\right\|_2^2+8\eta_t^2G^2+4\sqrt{2}G\eta_t\left\|\begin{pmatrix}
           \bw_{t-1}-\bw_{t-1}^{(i)} \\
           \bv_{t-1}-\bv_{t-1}^{(i)}
         \end{pmatrix}\right\|_2\label{stab-gda-2}
\end{align}
We can combine the above two inequalities together and get the following inequality
\begin{align*}
    \left\|\begin{pmatrix}
           \bw_{t}-\bw_{t}^{(i)} \\
           \bv_{t}-\bv_{t}^{(i)}
         \end{pmatrix}\right\|_2^2\leq & (1+L^2\eta_t^2)\left\|\begin{pmatrix}
           \bw_{t-1}-\bw_{t-1}^{(i)} \\
           \bv_{t-1}-\bv_{t-1}^{(i)}
         \end{pmatrix}\right\|_2^2+8\eta_t^2G^2\ibb_{[i_t=i]} \\
         & + 4\sqrt{2}G\eta_t\left\|\begin{pmatrix}
           \bw_{t-1}-\bw_{t-1}^{(i)} \\
           \bv_{t-1}-\bv_{t-1}^{(i)}
         \end{pmatrix}\right\|_2\ibb_{[i_t=i]}.
\end{align*}
We can apply the above inequality recursively and derive
\begin{align*}
    \left\|\begin{pmatrix}
           \bw_{t}-\bw_{t}^{(i)} \\
           \bv_{t}-\bv_{t}^{(i)}
         \end{pmatrix}\right\|_2^2\leq& L^2\sum_{j=1}^{t-1}\eta_j^2\left\|\begin{pmatrix}
           \bw_{j}-\bw_{j}^{(i)} \\
           \bv_{j}-\bv_{j}^{(i)}
         \end{pmatrix}\right\|_2^2+8G^2\sum_{j=1}^{t}\eta_j^2\ibb_{[i_j=i]}\\
         &+4\sqrt{2}G\sum_{j=1}^{t}\eta_j\left\|\begin{pmatrix}
           \bw_{j-1}-\bw_{j-1}^{(i)} \\
           \bv_{j-1}-\bv_{j-1}^{(i)}
         \end{pmatrix}\right\|_2\ibb_{[i_j=i]}
\end{align*}
For simplicity, we let
\begin{equation}\label{delta-t-i}
\delta_t^{(i)}=\max_{j\in[t]}\left\|\begin{pmatrix}
           \bw_{j}-\bw_{j}^{(i)} \\
           \bv_{j}-\bv_{j}^{(i)}
         \end{pmatrix}\right\|_2.
\end{equation}
Then we have
\begin{align*}
\big(\delta_t^{(i)}\big)^2 & \leq L^2\big(\delta_t^{(i)}\big)^2\sum_{j=1}^{t-1}\eta_j^2+8G^2\sum_{j=1}^{t}\eta_j^2\ibb_{[i_j=i]}+4\sqrt{2}G\delta_t^{(i)}\sum_{j=1}^{t}\eta_j\ibb_{[i_j=i]}\\
& \leq \frac{\big(\delta_t^{(i)}\big)^2}{2}+8G^2\sum_{j=1}^{t}\eta_j^2\ibb_{[i_j=i]}+4\sqrt{2}G\delta_t^{(i)}\sum_{j=1}^{t}\eta_j\ibb_{[i_j=i]},
\end{align*}
where we have used $\sum_{j=1}^{t}\eta_j^2\leq 1/(2L^2)$. It then follows that
\[
\big(\delta_t^{(i)}\big)^2\leq 16G^2\sum_{j=1}^{t}\eta_j^2\ibb_{[i_j=i]}+8\sqrt{2}G\delta_t^{(i)}\sum_{j=1}^{t}\eta_j\ibb_{[i_j=i]}.
\]
We can apply Lemma \ref{lem:quad} with $x=\delta_t^{(i)}$ to show that
\[
\left\|\begin{pmatrix}
           \bw_{t}-\bw_{t}^{(i)} \\
           \bv_{t}-\bv_{t}^{(i)}
         \end{pmatrix}\right\|_2\leq\delta_t^{(i)}\leq 4G\Big(\sum_{j=1}^{t}\eta_j^2\ibb_{[i_j=i]}\Big)^{\frac{1}{2}}+8\sqrt{2}G\sum_{j=1}^{t}\eta_j\ibb_{[i_j=i]}.
\]
It then follows from the concavity of the function $x\mapsto \sqrt{x}$ that
\begin{align*}
  \frac{1}{n}\sum_{i=1}^{n} \left\|\begin{pmatrix}
           \bw_{t}-\bw_{t}^{(i)} \\
           \bv_{t}-\bv_{t}^{(i)}
         \end{pmatrix}\right\|_2 & \leq \frac{4G}{n}\sum_{i=1}^{n}\Big(\sum_{j=1}^{t}\eta_j^2\ibb_{[i_j=i]}\Big)^{\frac{1}{2}}+\frac{8\sqrt{2}G}{n}\sum_{i=1}^{n}\sum_{j=1}^{t}\eta_j\ibb_{[i_j=i]} \\
   & \leq 4G\Big(\frac{1}{n}\sum_{i=1}^{n}\sum_{j=1}^{t}\eta_j^2\ibb_{[i_j=i]}\Big)^{\frac{1}{2}}+\frac{8\sqrt{2}G}{n}\sum_{i=1}^{n}\sum_{j=1}^{t}\eta_j\ibb_{[i_j=i]}\\
   & = 4G\Big(\frac{1}{n}\sum_{j=1}^{t}\eta_j^2\Big)^{\frac{1}{2}}+\frac{8\sqrt{2}G}{n}\sum_{j=1}^{t}\eta_j,
\end{align*}
where we have used the identity $\sum_{i=1}^{n}\ibb_{[i_j=i]}=1$. 
Finally, the convexity of the norm implies
\begin{align*}
  \frac{1}{n}\sum_{i=1}^{n} \left\|\begin{pmatrix}
           \bar{\bw}_{T}- \bar{\bw}_{T}^{(i)} \\
            \bar{\bv}_{T}- \bar{\bv}_{T}^{(i)}
         \end{pmatrix}\right\|_2 & \leq \frac{4G}{n}\sum_{i=1}^{n}\Big(\sum_{j=1}^{t}\eta_j^2\ibb_{[i_j=i]}\Big)^{\frac{1}{2}}+\frac{8\sqrt{2}G}{n}\sum_{i=1}^{n}\sum_{j=1}^{t}\eta_j\ibb_{[i_j=i]} \\
   & \leq 4G\Big(\frac{1}{n}\sum_{i=1}^{n}\sum_{j=1}^{t}\eta_j^2\ibb_{[i_j=i]}\Big)^{\frac{1}{2}}+\frac{8\sqrt{2}G}{n}\sum_{i=1}^{n}\sum_{j=1}^{t}\eta_j\ibb_{[i_j=i]}\\
   & = 4G\Big(\frac{1}{n}\sum_{j=1}^{t}\eta_j^2\Big)^{\frac{1}{2}}+\frac{8\sqrt{2}G}{n}\sum_{j=1}^{t}\eta_j,
\end{align*}

The proof of part (a) is completed.

\medskip

We now move to the nonsmooth case. In a similar way, we  consider the following two cases. 

\noindent{\textbf{Case 1}.}
If $i_t\neq i$, analogous to Eq. \eqref{stab-gda-1}, we can use the Lipschitz continuity of $f$ to derive
\[
\left\|\begin{pmatrix}
           \bw_{t}-\bw_{t}^{(i)} \\
           \bv_{t}-\bv_{t}^{(i)}
         \end{pmatrix}\right\|_2^2
         \leq \left\|\begin{pmatrix}
           \bw_{t-1}-\bw_{t-1}^{(i)} \\
           \bv_{t-1}-\bv_{t-1}^{(i)}
         \end{pmatrix}\right\|_2^2+8G^2\eta_t^2.
\]
\noindent{\textbf{Case 2}.}
For the case $i_t=i$, we have Eq. \eqref{stab-gda-2}. 

We can combine the above two cases together and derive
\[
\left\|\begin{pmatrix}
           \bw_{t}-\bw_{t}^{(i)} \\
           \bv_{t}-\bv_{t}^{(i)}
         \end{pmatrix}\right\|_2^2\leq \left\|\begin{pmatrix}
           \bw_{t-1}-\bw_{t-1}^{(i)} \\
           \bv_{t-1}-\bv_{t-1}^{(i)}
         \end{pmatrix}\right\|_2^2+8G^2\eta_t^2+4\sqrt{2}G\eta_t\left\|\begin{pmatrix}
           \bw_{t-1}-\bw_{t-1}^{(i)} \\
           \bv_{t-1}-\bv_{t-1}^{(i)}
         \end{pmatrix}\right\|_2\ibb_{[i_t=i]}.
\]
We apply the above inequality recursively and derive
\[
\left\|\begin{pmatrix}
           \bw_{t}-\bw_{t}^{(i)} \\
           \bv_{t}-\bv_{t}^{(i)}
         \end{pmatrix}\right\|_2^2 \leq 8G^2\sum_{j=1}^{t}\eta_j^2+4\sqrt{2}G\sum_{j=1}^{t}\eta_j\left\|\begin{pmatrix}
           \bw_{j-1}-\bw_{j-1}^{(i)} \\
           \bv_{j-1}-\bv_{j-1}^{(i)}
         \end{pmatrix}\right\|_2\ibb_{[i_j=i]}
\]
Let $\delta_t^{(i)}$ be defined in Eq. \eqref{delta-t-i}. It then follows that
\[
\big(\delta_t^{(i)}\big)^2  \leq 8G^2\sum_{j=1}^{t}\eta_j^2+4\sqrt{2}G\delta_t^{(i)}\sum_{j=1}^{t}\eta_j\ibb_{[i_j=i]}.
\]
We can apply Lemma \ref{lem:quad} with $x=\delta_t^{(i)}$ to show that
\[
\left\|\begin{pmatrix}
           \bw_{t}-\bw_{t}^{(i)} \\
           \bv_{t}-\bv_{t}^{(i)}
         \end{pmatrix}\right\|_2\leq \delta_t^{(i)} \leq 2\sqrt{2}G\Big(\sum_{j=1}^{t}\eta_j^2\Big)^{\frac{1}{2}}+4\sqrt{2}G\sum_{j=1}^{t}\eta_j\ibb_{[i_j=i]}.
\]
We can take an average over $i$ to derive
\begin{align*}
  \frac{1}{n}\sum_{i=1}^{n}\left\|\begin{pmatrix}
           \bw_{t}-\bw_{t}^{(i)} \\
           \bv_{t}-\bv_{t}^{(i)}
         \end{pmatrix}\right\|_2 & \leq \frac{2\sqrt{2}G}{n}\sum_{i=1}^{n}\Big(\sum_{j=1}^{t}\eta_j^2\Big)^{\frac{1}{2}}+\frac{4\sqrt{2}G}{n}\sum_{i=1}^{n}\sum_{j=1}^{t}\eta_j\ibb_{[i_j=i]} \\
   & = 2\sqrt{2}G\Big(\sum_{j=1}^{t}\eta_j^2\Big)^{\frac{1}{2}}+\frac{4\sqrt{2}G}{n}\sum_{j=1}^{t}\eta_j.
\end{align*}
It follows from the convexity of a norm that
\begin{align*}
  \frac{1}{n}\sum_{i=1}^{n}\left\|\begin{pmatrix}
           \bar{\bw}_{T}-\bar{\bw}_{T}^{(i)} \\
           \bar{\bv}_{T}-\bar{\bv}_{T}^{(i)}
         \end{pmatrix}\right\|_2 & \leq \frac{2\sqrt{2}G}{n}\sum_{i=1}^{n}\Big(\sum_{j=1}^{t}\eta_j^2\Big)^{\frac{1}{2}}+\frac{4\sqrt{2}G}{n}\sum_{i=1}^{n}\sum_{j=1}^{t}\eta_j\ibb_{[i_j=i]} \\
   & = 2\sqrt{2}G\Big(\sum_{j=1}^{t}\eta_j^2\Big)^{\frac{1}{2}}+\frac{4\sqrt{2}G}{n}\sum_{j=1}^{t}\eta_j.
\end{align*}
The proof is completed.
\end{proof}

Now, we can combine the stability bounds in Theorem~\ref{thm:stab-sgda} and Theorem~\ref{thm:stab-gen-min-max} to develop generalization bounds for weak PD risk bounds and primal population risk bounds.
\begin{proof}[Proof of Theorem~\ref{thm:gen-sgda}]
(a) Note Theorem~\ref{thm:stab-sgda} shows, for smooth case, that MC-SGDA is  on-average $\epsilon$-argument stable with  
$
\epsilon\leq 4G\big(\frac{1}{n}\sum_{j=1}^{T}\eta_j^2\big)^{1/2}+\frac{8\sqrt{2}G}{n}\sum_{j=1}^{T}\eta_j.
$ 
We can combine the above stability bound with Part (a) in Theorem~\ref{thm:stab-gen-min-max} and get the desired result. Part (b) can be proved in a similar way by combining Part (b) in Theorem~\ref{thm:stab-sgda} and Part (a) in Theorem~\ref{thm:stab-gen-min-max}.
\end{proof}

\begin{proof}[Proof of Theorem~\ref{thm:gen-sgda-primal}]
(a) For smooth case, Theorem~\ref{thm:stab-sgda} implies that MC-SGDA is on-average $\epsilon$-argument stable with  
$
\epsilon\leq 4G\big(\frac{1}{n}\sum_{j=1}^{T}\eta_j^2\big)^{1/2}+\frac{8\sqrt{2}G}{n}\sum_{j=1}^{T}\eta_j.
$ 
Plugging this stability bound back into Part (b) in Theorem~\ref{thm:stab-gen-min-max} yields the desired result. Part (b) can be directly proved by combining Part (b) in   Theorem~\ref{thm:stab-sgda} and Part (b) in Theorem~\ref{thm:stab-gen-min-max}. 
\end{proof}

\subsection{Optimization Error for MC-SGDA}\label{appendix-opt-sgda}
We now develop convergence rates on MC-SGDA~for convex-concave problems. We consider bounds both in expectation and with high probability. To this aim,  we decompose 
$\max_{\bv \in\V} F_S(\bar{\bw}_T,\bv) -   \min_{\bw\in \W} F_S(\bw, \bar{\bv}_T) $ into two parts: $ 
     \frac{1}{T}\sum_{j=1}^T F_S(\bw_j,  \bv_j) -   \min_{\bw\in \W} F_S(\bw, \bar{\bv}_T) $ and $  \max_{\bv \in\V} F_S(\bar{\bw}_T,\bv)- \frac{1}{T}\sum_{j=1}^T F_S(\bw_j , \bv_j),$ which are estimated separately.   
\begin{theorem}\label{thm:opt-sgda}
    Suppose Assumptions~\ref{ass:Markov-chain} and \ref{ass:lipschitz} hold.  Assume for all $z$, the function $(\bw,\bv)\mapsto f(\bw,\bv;z)$ is convex-concave.  Let $\A$ be MC-SGDA with $T$ iterations, and $\{\bw_j,  \bv_j  \}_{j=1}^T$ be the sequence produced by MC-SGDA with $\eta_j\equiv\eta$. Let $D_\bw$ and $D_\bv$ be the diameter of $\W$ and $\V$ respectively, and $D=D_\bw+D_\bv$. For any $j\in[T]$, let
    \begin{equation}\label{eq:def-kj-sgda}
        k_j\!=\!\min\!\Big\{\!\max\Big\{\Big\lceil \frac{\log(2C_PDn j^2)}{\log(1/\lambda(P))}\Big\rceil, K_P \Big\}, j\Big\}.
    \end{equation}
Then the following inequality holds 
\vspace*{-3mm}
  \begin{multline*}
    \E_\A\big[\max_{\bv \in\V} F_S(\bar{\bw}_T,\bv) -   \min_{\bw\in \W} F_S(\bw, \bar{\bv}_T) \big]\le G^2 \eta + \frac{D^2}{2T\eta} \\  
    + \frac{ 2GK_PD+12G^2\eta\sum_{j=1}^T k_j + G \sum_{j=K_P}^T 1/{j^2} }{ T}. 
\end{multline*}
Furthermore, suppose Assumption~\ref{ass:reversible} holds. Then selecting $\eta\asymp(T\log(T))^{-1/2}$ implies
\vspace*{-1mm}\[\E_\A\big[\max_{\bv \in\V} F_S(\bar{\bw}_T,\bv) - \min_{\bw\in \W} F_S(\bw, \bar{\bv}_T) \big]  = \O\big( \sqrt{\log(T)}/\big(\sqrt{T}   \log(1/\lambda(P))\big)\big).\]
\end{theorem}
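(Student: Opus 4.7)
\medskip

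\noindent\textbf{Proof proposal.} My plan mirrors the proof of Theorem~\ref{thm:opt-convex} for MC-SGD, adapted to the primal--dual structure of SGDA. Three ingredients are needed: the classical convex--concave potential argument for SGDA, the mixing-time shift by $k_j$, and Lemma~\ref{lem:difference-stationary} to control the non-stationarity bias of the Markov chain.

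First, using the non-expansiveness of the projection together with Lemma~\ref{lem:nonexpansive} applied with $\rho=0$, for every fixed $\bv\in\V$ and $\bw\in\W$ I would obtain the one-step SGDA potential inequality
\[
2\eta\big[f(\bw_{j-1},\bv;z_{i_j})-f(\bw,\bv_{j-1};z_{i_j})\big]\le \big(\|\bw_{j-1}-\bw\|_2^2+\|\bv_{j-1}-\bv\|_2^2\big)-\big(\|\bw_j-\bw\|_2^2+\|\bv_j-\bv\|_2^2\big)+2\eta^2G^2.
\]
Telescoping in $j$ and using $\|\bw_0-\bw\|_2^2+\|\bv_0-\bv\|_2^2\le D^2$ yields the \emph{pathwise, uniform-in-$(\bv,\bw)$} bound $\sum_{j=1}^T[f(\bw_{j-1},\bv;z_{i_j})-f(\bw,\bv_{j-1};z_{i_j})]\le D^2/(2\eta)+T\eta G^2$.

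Second, I would convert this sample-based inequality into one involving $F_S$. On the left, convexity--concavity of $f$ together with Jensen's inequality gives $F_S(\bar\bw_T,\bv)-F_S(\bw,\bar\bv_T)\le T^{-1}\sum_j[F_S(\bw_j,\bv)-F_S(\bw,\bv_j)]$. Each index shift between $\{j,j-1\}$ and $j-k_j$ costs at most $G^2\eta k_j$ per iterate by Lipschitz continuity; the four such shifts (two on each side) together accumulate the factor $12G^2\eta\sum_j k_j$. Next, by the tower property,
\[
F_S(\bw_{j-k_j},\bv)-\E\big[f(\bw_{j-k_j},\bv;z_{i_j})\mid \mathcal{F}_{j-k_j}\big]=\sum_{i=1}^n\big(\tfrac{1}{n}-[P^{k_j}]_{i_{j-k_j},i}\big)f(\bw_{j-k_j},\bv;z_i),
\]
with $\mathcal{F}_{j-k_j}=\sigma(i_1,\ldots,i_{j-k_j})$. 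The identity $\sum_i(1/n-[P^{k_j}]_{i_{j-k_j},i})=0$ lets me subtract the reference value $f(\bw_{j-k_j},\bv_0;z_i)$, and combining Lipschitz continuity in $\bv$ with Lemma~\ref{lem:difference-stationary} and the definition of $k_j$ in~\eqref{eq:def-kj-sgda} bounds this bias \emph{uniformly in $\bv$} by $G/(2j^2)$ when $j\ge K_P$ and by $2GD$ for $j<K_P$; a symmetric estimate holds on the $\bw$-side. Putting the three error sources together, dividing by $T$, and then taking $\sup_\bv$ and $\inf_\bw$---which is legitimate because each bound is uniform in $(\bv,\bw)$---yields the inequality in the statement. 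For the quantitative rate I would choose $\eta\asymp (T\log T)^{-1/2}$, invoke Assumption~\ref{ass:reversible} to force $K_P=0$, and estimate $\sum_{j=1}^T k_j$ by splitting the sum at $K=(2C_PDn\lambda(P)^{K_P})^{-1}$ exactly as in the proof of Theorem~\ref{thm:opt-convex}, which gives $\sum_j k_j=\mathcal{O}(T\log T/\log(1/\lambda(P)))$ and the announced $\mathcal{O}(\sqrt{\log T}/(\sqrt T\,\log(1/\lambda(P))))$ rate.

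The main obstacle is commuting the $\sup_\bv,\inf_\bw$ with the expectation while carrying out the replacement of $f(\cdot;z_{i_j})$ by $F_S$. The crucial structural observation is that \emph{all} three error pieces---the SGDA potential bound, the Lipschitz shift cost $G^2\eta\sum_j k_j$, and the Markov bias after the zero-sum subtraction---admit estimates that do not depend on the particular $(\bv,\bw)$, so the supremum and infimum can be commuted with them without inflating the rate. The delicate point is the bias estimate, which must be made at the shifted index $j-k_j$ so that $\mathcal{F}_{j-k_j}$ is rich enough to support the conditional expectation yet the Lipschitz-plus-zero-sum trick still reduces the dependence on $\bv$ to a $\bv$-free quantity of the form $GD_\bv n C_P\lambda(P)^{k_j}$.
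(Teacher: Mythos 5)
Your ingredients are the same ones the paper uses (the one-step convex--concave potential inequality, the shift to the index $j-k_j$, Lemma~\ref{lem:difference-stationary} for the non-stationarity bias, $\eta\asymp(T\log T)^{-1/2}$ and the same splitting of $\sum_j k_j$), but the way you assemble them contains a genuine gap precisely at the step you flag. The theorem bounds $\E_\A\big[\max_{\bv}F_S(\bar{\bw}_T,\bv)-\min_{\bw}F_S(\bw,\bar{\bv}_T)\big]$ with the max/min \emph{inside} the expectation, and your conversion from the sampled losses $f(\cdot\,;z_{i_j})$ to $F_S$ is valid only in conditional expectation. Decompose $F_S(\bw_{j-k_j},\bv)-f(\bw_{j-k_j},\bv;z_{i_j})$ into the bias part $F_S(\bw_{j-k_j},\bv)-\E[f(\bw_{j-k_j},\bv;z_{i_j})\mid\F_{j-k_j}]$ and the fluctuation part $\E[f(\bw_{j-k_j},\bv;z_{i_j})\mid\F_{j-k_j}]-f(\bw_{j-k_j},\bv;z_{i_j})$. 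Only the bias part is bounded pathwise and uniformly in $\bv$ by $G/(2j^2)$ via your zero-sum subtraction and the mixing bound; the fluctuation part is of size $O(GD)$ pathwise and has zero mean only for a \emph{fixed, deterministic} $\bv$. Uniformity of the bias therefore does not license commuting $\sup_\bv$ (or $\inf_\bw$) with $\E_\A$: the quantity $\E\big[\sup_{\bv}\sum_j\big(\E[f(\bw_{j-k_j},\bv;z_{i_j})\mid\F_{j-k_j}]-f(\bw_{j-k_j},\bv;z_{i_j})\big)\big]$ is not controlled by your per-$\bv$ estimates, and bounding it would require a uniform-concentration or covering/chaining argument over $\V$ (with attendant dimension or metric-entropy factors), which is nowhere in your plan. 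Plugging in the random maximizer $\hat{\bv}=\arg\max_{\bv}F_S(\bar{\bw}_T,\bv)$ does not rescue the step either, since $\hat{\bv}$ depends on the whole trajectory and the tower property at time $j-k_j$ no longer applies to it.

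The paper avoids this issue by a different decomposition: it inserts the trajectory average, writing the strong empirical PD gap as $\big[\frac{1}{T}\sum_j F_S(\bw_j,\bv_j)-\min_{\bw}F_S(\bw,\bar{\bv}_T)\big]+\big[\max_{\bv}F_S(\bar{\bw}_T,\bv)-\frac{1}{T}\sum_j F_S(\bw_j,\bv_j)\big]$, and bounds each half against a single comparator ($\bw^*_S=\arg\min_{\bw}F_S(\bw,\bar{\bv}_T)$, and symmetrically the $\bv$-maximizer), using concavity/convexity to pass from $\bar{\bv}_T,\bar{\bw}_T$ to the averages. In that arrangement the Markov tower argument is applied to quantities evaluated at the past-measurable iterates $(\bw_{j-k_j},\bv_{j-k_j})$, the comparator enters the telescoped recursion only through a bound that holds pathwise for every point of the (bounded) domain, and the comparator-dependent bias terms are bounded uniformly over $\W$ or $\V$, so no interchange of a supremum with the expectation over the fluctuation sum is ever needed. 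To repair your proposal you would either have to add such a uniform-convergence step for the fluctuation sum (paying extra factors), or restructure the argument along the paper's decomposition.
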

\begin{proof}[Proof of Theorem~~\ref{thm:opt-sgda}] 
To estimate $ \E_\A\big[\max_{\bv \in\V} F_S(\bar{\bw}_T,\bv) -   \min_{\bw\in \W} F_S(\bw, \bar{\bv}_T) \big]$, we use the following decomposition
\begin{align}\label{eq:sgda-opt-decom}
     &\E_\A\big[\max_{\bv \in\V} F_S(\bar{\bw}_T,\bv) -   \min_{\bw\in \W} F_S(\bw, \bar{\bv}_T) \big]\nonumber\\
     &=  \E_\A\Big[ \frac{1}{T}\sum_{j=1}^T F_S(\bw_j,  \bv_j) -   \min_{\bw\in \W} F_S(\bw, \bar{\bv}_T) \Big] + \E_{\A}\Big[\max_{\bv \in\V} F_S(\bar{\bw}_T,\bv)- \frac{1}{T}\sum_{j=1}^T F_S(\bw_j , \bv_j)\Big]. 
\end{align}
Consider the first term in \eqref{eq:sgda-opt-decom}. Let $k_j=\min\Big\{\max\Big\{\Big\lceil \frac{\log(2C_P(D_\bw+D_\bv) n j^2)}{\log(1/\lambda(P))}\Big\rceil, K_P \Big\}, j\Big\}$ and $\bw^{*}_S=\arg\min_{\bw\in \W} F_S(\bw,\bar{\bv}_T)$.  The concavity of $F_S(\bw, \cdot )$ implies 
\begin{align}\label{eq:sgda-opt-w}
    &\E_\A\Big[ \frac{1}{T}\sum_{j=1}^T F_S(\bw_j,  \bv_j) -     F_S(\bw^{*}_S, \bar{\bv}_T) \Big]\nonumber\\
    &\le \E_\A \Big[ \frac{1}{T}\sum_{j=1}^T\big( F_S(\bw_j,  \bv_j) -    F_S(\bw^{*}_S,  \bv_j)\big) \Big]\nonumber\\
    &=\!\E_\A\!\Big[ \frac{1}{T}\!\sum_{j=1}^T\big( F_S(\bw_j,  \bv_j)\!-\!F_S(\bw_{j-k_j},  \bv_j)\big)\Big]\!\!+\!\! \E_\A\!\Big[\frac{1}{T}\!\sum_{j=1}^T\big( F_S(\bw_{j-k_j},  \bv_j)\!-\!    F_S(\bw_{j-k_j},  \bv_{j-k_j})\big) \Big] \nonumber\\
    & +\!\E_\A\!\Big[ \frac{1}{T}\!\sum_{j=1}^T\big( F_S(\bw_{j-k_j},  \bv_{j-k_j})\!-\!    F_S(\bw^{*}_S,  \bv_{j-k_j})\big) \Big] \!+\!\E_\A\!\Big[ \frac{1}{T}\!\sum_{j=1}^T\big( F_S(\bw^{*}_S,  \bv_{j-k_j})\!-\!    F_S(\bw^{*}_S,  \bv_j)\big) \Big]\nonumber\\
    &\le \frac{3G^2\eta}{T}\sum_{j=1}^T k_j + \E_\A \Big[ \frac{1}{T}\sum_{j=1}^T\big( F_S(\bw_{j-k_j},  \bv_{j-k_j}) -    F_S(\bw^{*}_S,   \bv_{j-k_j})\big) \Big],
\end{align}
where the last inequality used the Lipschitz continuity of $f(\cdot,\bv;z)$ and $f(\bw,\cdot;z)$  and the fact that  $\|\bw_{j}-\bw_{j-k_j}\|_2\le G\eta k_j$ and $\|\bv_{j}-\bv_{j-k_j}\|_2\le G\eta k_j$. 

Now, we turn to estimate the term  $\E_\A \big[ \frac{1}{T}\sum_{j=1}^T\big( F_S(\bw_{j-k_j},  \bv_{j-k_j}) -    F_S(\bw^{*}_S,   \bv_{j-k_j})\big) \big]$. 
Note that 
\begin{align}\label{eq:opt-sgda-hp-1}
    &\E_{i_j}\big[ f(\bw_{j-k_j},\bv_{j-k_j};z_{i_j}) - f(\bw^{*}_S,\bv_{j-k_j};z_{i_j}) | (\bw_0,\bv_0),\ldots,(\bw_{j-k_j},\bv_{j-k_j}), z_{i_1},\ldots,z_{i_{j-k_j}} \big]\nonumber\\
    &= \sum_{i=1}^n \big[ f(\bw_{j-k_j},\bv_{j-k_j};z_i) - f(\bw^{*}_S,\bv_{j-k_j};z_i)\big] \cdot [P^{k_j}]_{i_{j-k_j}, i }\nonumber\\
    &= \big( F_S(\bw_{j-k_j},\bv_{j-k_j}) - F_S( \bw^{*}_S, \bv_{j-k_j}) \big)\nonumber\\
    &\quad + \sum_{i=1}^n \Big( [P^{k_j}]_{i_{j-k_j}, i } -\frac{1}{n} \Big)  \cdot  \big[ f(\bw_{j-k_j},\bv_{j-k_j};z_{i }) - f(\bw^{*}_S,\bv_{j-k_j};z_{i })\big].
\end{align}
Summing over $j$ and taking total expectation we have
\begin{align}\label{eq:sgda-opt-1}
       &\sum_{j=1}^T \E_\A \big[ F_S(\bw_{j-k_j},\bv_{j-k_j}) -   F_S( \bw^{*}_S, \bv_{j-k_j}) \big]\nonumber\\
       &= \sum_{j=1}^T \E_{\A}\big[ f(\bw_{j-k_j},\bv_{j-k_j};z_{i_j}) - f(\bw^{*}_S,\bv_{j-k_j};z_{i_j})  \big] \nonumber\\
       &\quad  + \sum_{j=1}^T \E_\A \Big[ \sum_{i=1}^n \Big( \frac{1}{n}  - [P^{k_j}]_{i_{j-k_j}, i }\Big)  \cdot  \big[ f(\bw_{j-k_j},\bv_{j-k_j};z_{i}) - f(\bw^{*}_S,\bv_{j-k_j};z_{i })\big]\Big].
\end{align} 
Similar to before, 
according to MC-SGDA update rule \eqref{eq:MCSGDA}, for any $j$ and $1\le k_j\le j$
\begin{align*}
    &\|\bw_j - \bw_S^{*}\|_2^2\\
    &\le \| \bw_{j-1} -\eta  \partial_\bw f(\bw_{j-1},\bv_{j-1};z_{i_j}) - \bw_S^{*} \|_2^2 \nonumber \\  & =\|\bw_{j-1} -  \bw_S^{*} \|_2^2 -2\eta  \langle  \bw_{j-1} -  \bw_S^{*}, \partial_\bw f(\bw_{j-1},\bv_{j-1};z_{i_j}) \rangle + \eta^2  \|\partial_\bw f(\bw_{j-1},\bv_{j-1};z_{i_j})\|_2^2\nonumber\\
    &\le \|\bw_{j-1} -  \bw_S^{*} \|_2^2 -2\eta  \big( f(\bw_{j-1},\bv_{j-1};z_{i_j}) - f(\bw_S^{*},\bv_{j-1};z_{i_j}) \big) + G^2\eta^2  \nonumber\\
    &= \|\bw_{j-1} -  \bw_S^{*}\|_2^2 -2\eta  \big( f(\bw_{j-k_j},\bv_{j-k_j};z_{i_j}) - f(\bw_S^{*}, \bv_{j-k_j};z_{i_j}) \big)\nonumber\\
    &\quad + 2\eta  \big( f(\bw_{j-k_j},\bv_{j-k_j};z_{i_j})  - f(\bw_{j-k_j},\bv_{j-1};z_{i_j})\!+\! f(\bw_{j-k_j},\bv_{j-1};z_{i_j})- f(\bw_{j-1},\bv_{j-1};z_{i_j})   \big)  \nonumber\\
    &\quad +  2\eta  \big(  f(\bw_S^{*},\bv_{j-1};z_{i_j}) - f(\bw_S^{*}, \bv_{j-k_j};z_{i_j}) \big) + G^2\eta^2  \nonumber\\
    &\le \|\bw_{j-1} -  \bw_S^{*} \|_2^2 -2\eta  \Big( f(\bw_{j-k_j},\bv_{j-k_j};z_{i_j}) - f(\bw_S^{*}, \bv_{j-k_j};z_{i_j}) \Big)  + 6G^2\eta^2 k_j   +G^2 \eta^2, 
\end{align*}
where the second inequality is due to the convexity of $f(\cdot,\bv;z)$, and the last inequality used the fact that  $\|\bw_{j-k_j} -  \bw_{j-1} \|_2 \le   G \eta k_j$ and $\|\bv_{j-k_j} -  \bv_{j-1} \|_2 \le   G \eta k_j$.  
Rearranging the above inequality and taking a summation of both sides  over $j$, we get
\begin{align}\label{eq:sgda-opt-2}
   \sum_{j=1}^T \Big( f(\bw_{j-k_j},\bv_{j-k_j};z_{i_j}) - f(\bw_S^{*}, \bv_{j-k_j};z_{i_j}) \Big) \le \frac{ D_\bw^2 + 6G^2\eta^2 \sum_{j=1}^T k_j   + T G^2 \eta^2  }{2\eta} . 
\end{align}
 
Now, we consider the second term in \eqref{eq:sgda-opt-1}. 
Recall that $k_j=\min\Big\{\max\Big\{\Big\lceil \frac{\log(2C_P(D_\bw+D_\bv) n j^2)}{\log(1/\lambda(P))}\Big\rceil, K_P \Big\}, j\Big\}$. 
If $j\ge K_P$, then according to Lemma~\ref{lem:difference-stationary}, for any $i,i'\in [n]$ we have
\[ \left| \frac{1}{n} -[P^{k_j}]_{i,i'} \right| \le \frac{1}{2(D_\bw+D_\bv)nj^2}. \]
Combining this with  Assumption~\ref{ass:lipschitz} we get
\begin{align}\label{eq:sgda-opt-3}
    &\sum_{j=K_P}^T    \sum_{i=1}^n \Big( \frac{1}{n}- [P^{k_j}]_{i_{j-k_j}, i } \Big)  \cdot  \big[ f(\bw_{j-k_j},\bv_{j-k_j};z_{i }) - f(\bw^{*}_S,\bv_{j-k_j};z_{i })\big] \nonumber\\
    &\le GD_\bw \sum_{j=K_P}^T \sum_{i=1}^n \Big| [P^{k_j}]_{i_{j-k_j}, i } -\frac{1}{n} \Big|  \le G \sum_{j=K_P}^T \frac{1}{2j^2}. 
\end{align}
 For $j<K_P$, there holds
 \begin{align}\label{eq:sgda-opt-4}
     \sum_{j=1}^{K_P}   \sum_{i=1}^n \Big( \frac{1}{n}-[P^{k_j}]_{i_{j-k_j}, i } \Big)  \cdot  \big[ f(\bw_{j-k_j},\bv_{j-k_j};z_{i }) - f(\bw^{*}_S,\bv_{j-k_j};z_{i })\big] \le 2GK_P D_\bw,  
 \end{align}
where we use $\sum_{i=1}^n [P^{k_j}]_{i_{j-k_j},i}=1$ and the Lipschitz continuity of $f(\cdot, \bv)$. 
 Combining \eqref{eq:sgda-opt-3} and \eqref{eq:sgda-opt-4}  together, we get
 \begin{align}\label{eq:sgda-opt-5}
     &\sum_{j=1}^{T}   \sum_{i=1}^n \Big( \frac{1}{n}-[P^{k_j}]_{i_{j-k_j}, i }  \Big)  \cdot  \big[ f(\bw_{j-k_j},\bv_{j-k_j};z_{i }) - f(\bw^{*}_S,\bv_{j-k_j};z_{i })\big]\nonumber\\& \le 2GK_P D_\bw + G \sum_{j=K_P}^T \frac{1}{2j^2}.  
 \end{align}
Putting \eqref{eq:sgda-opt-2} and  \eqref{eq:sgda-opt-5} back into \eqref{eq:sgda-opt-1}, we  obtain
\begin{align*}
       &\sum_{j=1}^T \E_\A \big[ F_S(\bw_{j-k_j},\bv_{j-k_j}) -   F_S( \bw^{*}_S, \bv_{j-k_j}) \big] \\&\le  \frac{ D_\bw^2 + 6G^2\eta^2 \sum_{j=1}^T k_j   + T G^2 \eta^2  }{2\eta}  +  2GK_P D_\bw + G \sum_{j=K_P}^T \frac{1}{2j^2}.
\end{align*}  
Finally, plugging the above inequality back into \eqref{eq:sgda-opt-w}, we have
\begin{align*}
     &\E_\A\Big[ \frac{1}{T}\sum_{j=1}^T F_S(\bw_j,  \bv_j) -     \min_{\bw\in\W}F_S(\bw , \bar{\bv}_T) \Big]\\&\le  \frac{ 6G^2\eta \sum_{j=1}^T k_j  }{T} +  \frac{ 2GK_P D_\bw + G \sum_{j=K_P}^T \frac{1}{2j^2}}{T} + \frac{D_\bw^2}{2T\eta} + \frac{G^2\eta}{2}.
\end{align*}
In a similar way, we can show 
\begin{align*}
     &\E_\A\Big[  \max_{\bv \in \V}F_S(\bar{\bw}_T,\bv) -  \frac{1}{T}\sum_{j=1}^T F_S(\bw_j,  \bv_j) \Big]\\&\le  \frac{ 6G^2\eta \sum_{j=1}^T k_j  }{T} +  \frac{ 2GK_P D_\bv + G \sum_{j=K_P}^T \frac{1}{2j^2}}{T} + \frac{D^2_\bv}{2T\eta} + \frac{G^2\eta}{2}.
\end{align*}
Combining the above two bounds together, we get
\begin{align*}
     &\E_\A\Big[ \max_{\bv \in \V}F_S(\bar{\bw}_T,\bv)   -     \min_{\bw\in\W}F_S(\bw , \bar{\bv}_T) \Big]\\
     &\le   G^2\eta +  \frac{(D_\bw+D_\bv)^2}{2T\eta} +  \frac{  2GK_P (D_\bw+D_\bv)  +  12G^2\eta \sum_{j=1}^T k_j + G \sum_{j=K_P}^T \frac{1}{ j^2} }{T}  .
\end{align*} 
The first part of theorem is proved. 
Now, we turn to the second part of theorem. 
Let $K=\frac{1}{\sqrt{2C_P (D_\bw+D_\bv) n \lambda(P)^{K_P}} }$ and $\eta\asymp 1/\sqrt{T\log(T)}$. 
If $j< K$, we have
\[ \sum_{j=1}^{K-1} k_j \eta^2\le K K_P \eta^2= \frac{K_P}{T\log(T) \sqrt{2C_P (D_\bw+D_\bv) n \lambda(P)^{K_P}} }. \]
If $j\ge K$, there holds
\begin{align*}
     \sum_{j=K}^{T} k_j \eta^2&\le \frac{1}{\log( 1/ \lambda(P) )} \Big[ \sum_{j=K}^{T}  \log( 2C_P (D_\bw+D_\bv)   )   \eta^2\!+\! \sum_{j=K}^{T} \log(n) \eta^2 + 2\sum_{j=K}^{T} \log(j) \eta^2\Big]\!+\!T\eta^2\\
     &=\O\Big( \frac{1}{\log( 1/ \lambda(P) )} \Big).
\end{align*}
Combining the above two cases together, we get
\begin{align}\label{eq:kj-sum-sgda}
   \sum_{j=1}^T k_j \eta^2=\O\Big(  \frac{K_P}{T\log(T) \sqrt{ C_P   n \lambda(P)^{K_P}} } + \frac{1}{\log( 1/ \lambda(P) )}\Big).
\end{align} 
Then we obtain
\begin{align*} 
       &\E_\A\Big[ \max_{\bv \in \V}F_S(\bar{\bw}_T,\bv)   -     \min_{\bw\in\W}F_S(\bw , \bar{\bv}_T) \Big]\\
       &=\O\Big( \frac{K_P}{T}+  \frac{1+\sum_{j=1}^T k_j \eta^2}{T\eta} +   \eta     \Big)\\
      &=\O\Big(\frac{\sqrt{\log(T)}}{\sqrt{T}\log(1/\lambda(P))} + \frac{K_P}{T\min\{1, \sqrt{ C_P   n \lambda(P)^{K_P}T\log(T)}\}} \Big).
\end{align*}  
Note Assumption~\ref{ass:reversible} implies $K_P=0$. Then we get
\begin{align*} 
      \E_\A\Big[ \max_{\bv \in \V}F_S(\bar{\bw}_T,\bv)   -     \min_{\bw\in\W}F_S(\bw , \bar{\bv}_T) \Big]  =\O\Big(\frac{\sqrt{\log(T)}}{\sqrt{T}\log(1/\lambda(P))}  \Big).
\end{align*}  
This completes the proof. 
\end{proof}
\begin{theorem}[High-probability bound]\label{thm:opt-hp-sgda}
Suppose Assumptions~\ref{ass:Markov-chain}, \ref{ass:reversible} and \ref{ass:lipschitz} hold.  Assume for all $z$, the function $(\bw,\bv)\mapsto f(\bw,\bv;z)$ is convex-concave.  Let $\{\bw_j,  \bv_j  \}_{j=1}^T$ be produced MC-SGDA with $ \eta_j\equiv\eta\asymp 1/\sqrt{T\log(T)}$. 
Assume $\sup_{z\in \Z} f(\bw,\bv;z) \le B$ with some $B>0$ for any $\bw\in \W $ and $\bv\in \V$. 
Let $\gamma\in (0,1)$. Then with probability  $1-\gamma$ 
\vspace*{-3mm}
\begin{align*}
     \max_{\bv \in\V} F_S(\bar{\bw}_T,\bv) -   \min_{\bw\in \W} F_S(\bw, \bar{\bv}_T)=   \O\Big(    \frac{\sqrt{\log(T)}}{\sqrt{T}}\big(\frac{1}{\log(1/{\lambda(P)})} +B\sqrt{\log(1/\gamma)}\big)   \Big). 
\end{align*}
\end{theorem}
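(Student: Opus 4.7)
My proof plan is to run exactly the same decomposition as in the expectation bound (Theorem~\ref{thm:opt-sgda}) and only replace the two places where total expectation was used: the step that passes from $\sum_{j=1}^T (f(\bw_{j-k_j},\bv_{j-k_j};z_{i_j})-f(\bw^{*}_S,\bv_{j-k_j};z_{i_j}))$ to $\sum_j \E_\A[\cdot]$ in \eqref{eq:opt-sgda-hp-1}, and its symmetric counterpart for the dual side. At each of these two points I will invoke the martingale concentration inequality (Lemma~\ref{lem:martingle}), using the uniform boundedness $\sup_{\bw,\bv,z}|f(\bw,\bv;z)|\le B$ to control the bounded differences.

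Concretely, I would first write
\[
\max_{\bv\in\V}F_S(\bar\bw_T,\bv)-\min_{\bw\in\W}F_S(\bw,\bar\bv_T)=E_{\mathrm{p}}+E_{\mathrm{d}},
\]
where $E_{\mathrm{p}}=\tfrac1T\sum_j F_S(\bw_j,\bv_j)-\min_{\bw\in\W}F_S(\bw,\bar\bv_T)$ and $E_{\mathrm{d}}$ is the symmetric dual quantity. For $E_{\mathrm{p}}$ I proceed exactly as in \eqref{eq:sgda-opt-w} to trade $\bw_j,\bv_j$ for $\bw_{j-k_j},\bv_{j-k_j}$ using $G$-Lipschitzness (costing the $\tfrac{3G^2\eta}{T}\sum_j k_j$ term deterministically), reducing matters to bounding $\tfrac{1}{T}\sum_{j=1}^T\!\bigl(F_S(\bw_{j-k_j},\bv_{j-k_j})-F_S(\bw^{*}_S,\bv_{j-k_j})\bigr)$. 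Next I define the functional
\[
\xi_j=\eta\bigl[f(\bw_{j-k_j},\bv_{j-k_j};z_{i_j})-f(\bw^{*}_S,\bv_{j-k_j};z_{i_j})\bigr],
\]
which depends only on $(i_1,\ldots,i_j)$, and use $|f|\le B$ to get $|\xi_j-\E_{i_j}[\xi_j\mid i_1,\ldots,i_{j-k_j}]|\le 2B\eta$. Applying Lemma~\ref{lem:martingle} to the sequence $\{\xi_j\}$ (with suitable filtration) yields, with probability at least $1-\gamma/2$,
\[
\sum_{j=1}^T\E_{i_j}[\xi_j\mid\cdot]-\sum_{j=1}^T\xi_j\le 2B\bigl(2T\eta^2\log(2/\gamma)\bigr)^{1/2}.
\]
Combined with the identity \eqref{eq:opt-sgda-hp-1}, the SGDA recursion estimate \eqref{eq:sgda-opt-2}, and the mixing-time bound \eqref{eq:sgda-opt-5}, this gives a high-probability bound on $E_{\mathrm{p}}$ matching the Theorem~\ref{thm:opt-sgda} rate plus a $B\sqrt{\log(1/\gamma)/T}$ correction.

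For $E_{\mathrm{d}}$ I repeat the same argument after the sign flip inherent to the ascent step, defining an analogous martingale $\xi_j'$ with $\bv^{*}_S=\arg\max_{\bv\in\V}F_S(\bar\bw_T,\bv)$, again applying Lemma~\ref{lem:martingle} with failure probability $\gamma/2$. A union bound over the primal and dual events and summing the two estimates gives, under Assumption~\ref{ass:reversible} (so $K_P=0$), a bound of the shape
\[
E_{\mathrm{p}}+E_{\mathrm{d}}=\O\!\Big(\eta+\tfrac{1}{T\eta}+\tfrac{\eta\sum_{j=1}^T k_j}{T}+\tfrac{B\sqrt{\log(1/\gamma)}}{\sqrt{T}}\Big).
\]
Plugging in $\eta\asymp 1/\sqrt{T\log T}$ and using the mixing estimate \eqref{eq:kj-sum-sgda} (which under Assumption~\ref{ass:reversible} is $\sum_j k_j\eta^2=\O(1/\log(1/\lambda(P)))$) delivers the target rate $\O\bigl(\sqrt{\log T}(\log^{-1}(1/\lambda(P))+B\sqrt{\log(1/\gamma)})/\sqrt{T}\bigr)$.

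The main obstacle will be bookkeeping the martingale structure: $\bw_{j-k_j},\bv_{j-k_j}$ are measurable with respect to $(i_1,\ldots,i_{j-k_j})$, whereas $i_j$ is sampled after a gap of $k_j$ Markov steps, so the correct conditional expectation in Lemma~\ref{lem:martingle} must condition on the $\sigma$-field generated by $(i_1,\ldots,i_{j-1})$; $\E_{i_j}[\xi_j\mid i_1,\ldots,i_{j-1}]$ depends only on the transition matrix evaluated at $i_{j-1}$, so the bounded-difference hypothesis of the lemma still applies with $b_j=2B\eta$. Once this measurability issue is cleanly handled, the rest of the proof is a routine rerun of the expectation-side computation of Theorem~\ref{thm:opt-sgda}.
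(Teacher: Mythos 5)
Your plan is the same proof as the paper's: the identical primal/dual split of $\max_{\bv}F_S(\bar{\bw}_T,\bv)-\min_{\bw}F_S(\bw,\bar{\bv}_T)$, the shift to the delayed iterates $(\bw_{j-k_j},\bv_{j-k_j})$ via \eqref{eq:sgda-opt-w}, the martingale concentration of Lemma~\ref{lem:martingle} applied to $\xi_j$ with bound $2B\eta$ per increment (the extra factor $\eta$ in your $\xi_j$ is immaterial for constant stepsize), reuse of \eqref{eq:opt-sgda-hp-1}, \eqref{eq:sgda-opt-2} and \eqref{eq:sgda-opt-5}, a union bound with $\gamma/2$ on each side, and finally $\eta\asymp(T\log T)^{-1/2}$, \eqref{eq:kj-sum-sgda} and $K_P=0$ under Assumption~\ref{ass:reversible}. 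This is exactly how Theorem~\ref{thm:opt-hp-sgda} is proved in the paper (as a high-probability rerun of Theorem~\ref{thm:opt-sgda}), so the main line of your argument is fine.

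One caution about your closing paragraph: your proposed resolution of the ``measurability issue'' would break the argument if followed literally. You suggest applying Lemma~\ref{lem:martingle} with the conditional expectation $\E[\xi_j\mid i_1,\ldots,i_{j-1}]$. But then this conditional expectation equals $\sum_{i}[P]_{i_{j-1},i}\bigl(f(\bw_{j-k_j},\bv_{j-k_j};z_i)-f(\bw^{*}_S,\bv_{j-k_j};z_i)\bigr)$, and the deviation term $\sum_i\bigl([P]_{i_{j-1},i}-\tfrac1n\bigr)(\cdot)$ involves only a single Markov step: it is of constant order and cannot be controlled by the mixing estimate \eqref{eq:sgda-opt-5}, which is precisely what the lag $k_j$ buys. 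The quantity that must enter the concentration step is the lag-$k_j$ conditional expectation $\E[\xi_j\mid i_1,\ldots,i_{j-k_j}]$ from \eqref{eq:opt-sgda-hp-1} (which is also what you correctly wrote when bounding the increments by $2B\eta$); note $\xi_j$ depends on the trajectory only through $(i_1,\ldots,i_{j-k_j})$ and $i_j$, so the bounded-difference argument is carried out with respect to that coarser conditioning, as in the paper. With that fixed, the rest of your computation delivers the stated rate.
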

\begin{proof}[Proof of Theorem~\ref{thm:opt-hp-sgda}]
Note that     
    \begin{align}\label{eq:sgda-opt-hp-decom}
      &\max_{\bv \in\V} F_S(\bar{\bw}_T,\bv) -   \min_{\bw\in \W} F_S(\bw, \bar{\bv}_T)\nonumber \\
      &=  \Big[ \frac{1}{T}\sum_{j=1}^T F_S(\bw_j,  \bv_j) -   \min_{\bw\in \W} F_S(\bw, \bar{\bv}_T) \Big]   + \Big[   \max_{\bv \in\V} F_S(\bar{\bw}_T,\bv)- \frac{1}{T}\sum_{j=1}^T F_S(\bw_j , \bv_j) \Big]. 
\end{align}
Consider the first term in \eqref{eq:sgda-opt-hp-decom}. 
Let  $k_j=\min\Big\{\max\Big\{\Big\lceil \frac{\log(2C_P(D_\bw+D_\bv) n j^2)}{\log(1/\lambda(P))}\Big\rceil, K_P \Big\}, j\Big\}$ and $\bw^{*}_S=\arg\min_{\bw\in \W} F_S(\bw,\bar{\bv}_T)$. Similar to \eqref{eq:sgda-opt-w},  we can show
\begin{align}\label{eq:opt-sgda-hp-w}
       \frac{1}{T}\sum_{j=1}^T F_S(\bw_j,  \bv_j) -     F_S(\bw^{*}_S, \bar{\bv}_T) 
    &\le \frac{3G^2\eta}{T}\sum_{j=1}^T k_j +    \frac{1}{T}\sum_{j=1}^T\big( F_S(\bw_{j-k_j},  \bv_{j-k_j}) -    F_S(\bw^{*}_S,   \bv_{j-k_j})\big)  .
\end{align}

Let 
$ 
\xi_j =   f(\bw_{j-k_j},\bv_{j-k_j}; z_{i_j}) - f(\bw_S^{*},\bv_{j-k_j};z_{i_j} )   .$
Observe that  $|\xi_j-\E_{i_j}[ \xi_j]|\le 2 B $.  
Then, applying Lemma~\ref{lem:martingle} implies, with probability at least $1-\gamma/2$, that
\begin{equation}\label{eq:opt-sgda-hp-2}
	\sum_{j=1}^T \E_{i_j}[\xi_j] - \sum_{j=1}^T \xi_j \le 2B \sqrt{2T \log(2/\gamma)}. 
\end{equation}  
Combining \eqref{eq:opt-sgda-hp-1} and 	\eqref{eq:opt-sgda-hp-2} together, we get
\begin{align*}
	&\sum_{j=1}^T   [F_S(\bw_{j-k_j},  \bv_{j-k_j})\!-\!F_S(\bw^{*}_S, \bv_{j-k_j})]\!+\!\sum_{j=1}^T   \sum_{i=1}^n  \big( [P^{k_j}]_{i_{j-k_j},i}\!-\!  \frac{1}{n}\big) [ f(\bw_{j-k_j},  \bv_{j-k_j};z_{i})\!-\! f(\bw_S^{*},\bv_{j-k_j} ; z_{i} ) ]\\
	&= \sum_{j=1}^T \E_{i_j}[  f(\bw_{j-k_j},\bv_{j-k_j};z_{i_j})\!-\!f(\bw^{*}_S,\bv_{j-k_j} ; z_{i_j} )| \{\bw_{0},\bv_0\},\ldots,\{\bw_{j-k_j},\bv_{j-k_j}\}, z_{i_1},\ldots, z_{i_{j-k_j}}  ]\\  
	& \le  \sum_{j=1}^T   [f(\bw_{j-k_j},\bv_{j-k_j};z_{i_j}) - f(\bw^{*}_S,\bv_{j-k_j} ; z_{i_j} )] + 2B \sqrt{2T \log(2/\gamma)}
	\end{align*}
	with probability at least $1-\gamma/2$.
Putting \eqref{eq:sgda-opt-2} and \eqref{eq:sgda-opt-5} back into the above inequality, we obtain
\begin{align}\label{eq:opt-sgda-hp-3}	&\sum_{j=1}^T [F_S(\bw_{j-k_j},\bv_{j-k_j}) - F_S(\bw^{*}_S,\bv_{j-k_j})]\nonumber\\
&\le  \sum_{j=1}^T     \sum_{i=1}^n \Big(  \frac{1}{n}- [P^{k_j}]_{i_{j-k_j},i} \Big) [ f(\bw_{j-k_j},\bv_{j-k_j};z_{i})-f(\bw^{*}_S,\bv_{j-k_j} ; z_{i} ) ] \nonumber\\
&\quad  + \sum_{j=1}^t   [f(\bw_{j-k_j},\bv_{j-k_j};z_{i_j})-f(\bw^{*}_S,\bv_{j-k_j} ; z_{i_j} )]  + 2B \sqrt{2T \log(2/\gamma)} \nonumber\\
	& \le \frac{ D_\bw^2+ 6G^2\eta^2 \sum_{j=1}^T k_j + TG^2\eta^2 }{2\eta} + 2GK_P D_\bw + G\sum_{j=K_P}^T \frac{1}{2j^2} + 2B \sqrt{2T \log(2/\gamma)} . 
	\end{align}

Now, plugging \eqref{eq:opt-sgda-hp-3} back into \eqref{eq:opt-sgda-hp-w}, with probability at least $1-\gamma/2$, there holds
 \begin{align*}
     &\frac{1}{T}\sum_{j=1}^T F_S(\bw_j,  \bv_j) -    \min_{\bw\in\W} F_S(\bw, \bar{\bv}_T) \\
    &\le \frac{6G^2\eta \sum_{j=1}^T k_j +   2GK_P D_\bw + G\sum_{j=K_P}^T \frac{1}{2j^2}}{T}+  
    \frac{ D_\bw^2 }{2T\eta} +  \frac{G^2\eta}{2}  + \frac{ 2B\sqrt{2 \log(\frac{2}{\gamma})}}{\sqrt{T}} .
\end{align*}
In a similar way, we can show, with probability at least $1-\gamma/2$, that
 \begin{align*}
     &\max_{\bv \in\V} F_S(\bar{\bw}_T,\bv)- \frac{1}{T}\sum_{j=1}^T F_S(\bw_j , \bv_j) \\
    &\le \frac{6G^2\eta \sum_{j=1}^T k_j +   2GK_P D_\bv + G\sum_{j=K_P}^T \frac{1}{2j^2}}{T} + 
    \frac{ D_\bv^2 }{2T\eta} +  \frac{G^2\eta}{2} + \frac{ 2B\sqrt{2 \log(\frac{2}{\gamma})}}{\sqrt{T}} .
\end{align*}
Combining the above two inequalities together, with probability at least $1-\gamma$, we  get
\begin{align*}
    & \max_{\bv \in\V} F_S(\bar{\bw}_T,\bv)-    \min_{\bw\in\W} F_S(\bw, \bar{\bv}_T)\\
    &\le \frac{12G^2\eta \sum_{j=1}^T k_j\!+\!2GK_P(D_\bw\!+\!D_\bv)\!+\! G\sum_{j=K_P}^T \frac{1}{j^2}}{T}\!+\! 
    \frac{ (D_\bw+D_\bv)^2 }{2T\eta}\!+\!   G^2\eta\!+\!\frac{ 4B\sqrt{2 \log(\frac{2}{\gamma})}}{\sqrt{T}}.
\end{align*}
Further, if we select $\eta\asymp 1/{\sqrt{T\log(T)}}$, according to Eq.\eqref{eq:kj-sum-sgda} we have 
\begin{align*}
      &\max_{\bv \in\V} F_S(\bar{\bw}_T,\bv)-    \min_{\bw\in\W} F_S(\bw, \bar{\bv}_T)\\
      &=\O\Big(\frac{K_P}{T} + \eta + \frac{1+\sum_{j=1}^T k_j\eta^2}{T\eta} +    \frac{  B\sqrt{  \log(1/{\gamma})}}{\sqrt{T}} \Big)\\
      &=\O\Big(   \frac{\sqrt{\log(T)} }{\sqrt{T}}\Big( \frac{1}{\log(1/\lambda(P))}+  B\sqrt{ \log(1/\gamma) } \Big) + \frac{K_P}{T \min\{1, \sqrt{ C_P   n \lambda(P)^{K_P}T\log(T)}\} }   \Big)\\
      &=\O\Big(   \frac{\sqrt{\log(T)} }{\sqrt{T}}\Big( \frac{1}{\log(1/\lambda(P))} +    B\sqrt{ \log(1/\gamma) } \Big)   \Big) ,
\end{align*}
where in the last equality we used $K_P=0$ due to $P=P^\top$.
This completes the proof. 
\end{proof}

\subsection{Proofs of Theorem~\ref{thm:weakPD-risk} and Theorem \ref{thm:excess-primal}}\label{sec:thm10-11}
\begin{proof}[Proof of Theorem \ref{thm:weakPD-risk}]
We can choose $\eta$ such that $T\eta^2\leq1/(2L^2)$ and therefore Theorem \ref{thm:gen-sgda} applies.
According to part (a) of Theorem~\ref{thm:gen-sgda} we have
\[\triangle^w(\bar{\bw}_T,\bar{\bv}_T)-\triangle^w_\emp(\bar{\bw}_T,\bar{\bv}_T)\leq  \frac{4G^2\sqrt{T}\eta}{\sqrt{n}}+\frac{8\sqrt{2}G^2T\eta}{n} .\]
Combining the above inequality with Theorem~\ref{thm:opt-sgda} together, we get
\begin{align*}
    &\triangle^w(\bar{\bw}_T,\bar{\bv}_T)=\triangle^w(\bar{\bw}_T,\bar{\bv}_T)-\triangle^w_\emp(\bar{\bw}_T,\bar{\bv}_T)\\
    &\le \frac{4G^2\sqrt{T}\eta}{\sqrt{n}}+\frac{8\sqrt{2}G^2T\eta}{n} + G^2 \eta + \frac{D^2}{2T\eta}   
    + \frac{2GK_PD+12G^2\eta\sum_{j=1}^T k_j + G \sum_{j=K_P}^T 1/{j^2} }{ T} 
\end{align*}
where $D=D_\bw+D_\bv$. 
If we choose $T\asymp n$ and $\eta\asymp  ( T \log(T))^{-\frac{1}{2}}$, according to \eqref{eq:kj-sum-sgda} we get 
\[ \triangle^w(\bar{\bw}_T,\bar{\bv}_T)= \O\Big( \frac{\log(n)}{\sqrt{n}\log(1/\lambda(P))}  + \frac{K_P}{ n \min\{ 1, n \sqrt{ \log(n) C_P \lambda(P)^{K_P} } \} } \Big) .\]
Note Assumption~\ref{ass:reversible} implies $K_P=0$, the proof of part (a) is completed. 

Part (b) can be proved in a similar way (e.g., by combining part (b) of Theorem~\ref{thm:gen-sgda} and Theorem~\ref{thm:opt-sgda} together). We omit the proof for brevity.   
\end{proof}

\begin{proof}[Proof of Theorem \ref{thm:excess-primal}]
We use the following decomposition
\begin{align*}
    R(\bar{\bw}_T) - R(\bw^{*}) = & \Big(  R(\bar{\bw}_T) - R_S(\bar{\bw}_T) \Big) + \Big(  R_S(\bar{\bw}_T) - F_S( \bw^{*},  \bar{\bv}_T ) \Big)\nonumber \\
    &+ \Big(  F_S( \bw^{*},  \bar{\bv}_T ) - F(\bw^{*},  \bar{\bv}_T) \Big)+ \Big(  F(\bw^{*},  \bar{\bv}_T) - R(\bw^{*})  \Big). 
\end{align*}
Note that $ F(\bw^{*},  \bar{\bv}_T)\le  F(\bw^{*},  {\bv}^{*})$. Then we have
\begin{align*}
    R(\bar{\bw}_T) - R(\bw^{*}) \le  & \Big(  R(\bar{\bw}_T) - R_S(\bar{\bw}_T) \Big) + \Big(  R_S(\bar{\bw}_T) - F_S( \bw^{*},  \bar{\bv}_T ) \Big)\nonumber\\ &+ \Big(  F_S( \bw^{*},  \bar{\bv}_T ) - F(\bw^{*},  \bar{\bv}_T) \Big). 
\end{align*}
Taking the expectation on both sides gives
\begin{align}\label{eq:decom-excess-sgda}
    \E_{S,\A}[R(\bar{\bw}_T) - R(\bw^{*})] \leq & \E_{S,\A}[  R(\bar{\bw}_T) - R_S(\bar{\bw}_T)] + \E_{S,\A}[ R_S(\bar{\bw}_T) - F_S( \bw^{*},  \bar{\bv}_T ) ] \nonumber\\
    & + \E_{S,\A}[ F_S( \bw^{*},  \bar{\bv}_T ) - F(\bw^{*},  \bar{\bv}_T)  ]. 
\end{align}
According to part (a) of  Theorem~\ref{thm:gen-sgda-primal} we know
\[\E_{S,\A}[  R(\bar{\bw}_T) - R_S(\bar{\bw}_T)]\le 4G^2(1+L/\rho)\Big(\frac{\sqrt{T}\eta}{\sqrt{n}}+\frac{2\sqrt{2}T\eta}{n} \Big).\]
Similarly, the stability bound in Theorem~\ref{thm:stab-sgda} also implies 
\[\E_{S,\A}[ F_S( \bw^{*},  \bar{\bv}_T ) - F(\bw^{*},  \bar{\bv}_T) ]\le 4G^2(1+L/\rho)\Big(\frac{\sqrt{T}\eta}{\sqrt{n}}+\frac{2\sqrt{2}T\eta}{n} \Big).\]
According to Theorem~\ref{thm:opt-sgda}, we know
\begin{align*}
    \E_{S,\A}[ R_S(\bar{\bw}_T) - F_S( \bw^{*},  \bar{\bv}_T ) ]&\le  \E_\A\big[\max_{\bv \in\V} F_S(\bar{\bw}_T,\bv) -   \min_{\bw\in \W} F_S(\bw, \bar{\bv}_T) \big]\\
    &\le G^2 \eta + \frac{D^2}{2T\eta}  
    + \frac{ 2GK_PD+12G^2\eta\sum_{j=1}^T k_j + G \sum_{j=K_P}^T 1/{j^2} }{ T}, 
\end{align*}   
where $D=D_\bw+D_\bv$. 
Putting the above three inequalities back into Eq. \eqref{eq:decom-excess-sgda}, we obtain
\begin{align*}
    \E_{S,\A}[R(\bar{\bw}_T) - R(\bw^{*})] \le & 8G^2(1+L/\rho)\Big(\frac{\sqrt{T}\eta}{\sqrt{n}}+\frac{2\sqrt{2}T\eta}{n} \Big) +  G^2 \eta + \frac{D^2}{2T\eta} \\
    &
    + \frac{ 2GK_PD+12G^2\eta\sum_{j=1}^T k_j + G \sum_{j=K_P}^T 1/{j^2} }{ T}.
\end{align*}
If we choose $T\asymp n$ and $\eta \asymp (T\log(T))^{-1/2}$, combining the above estimation with \eqref{eq:kj-sum-sgda}  implies
\begin{align*}
    \E_{S,\A}[R(\bar{\bw}_T) - R(\bw^{*})] =\O\Big( \frac{(L/\rho)\log(n)}{ \sqrt{n}\log(\lambda(P))}   + \frac{K_P}{ n \min\{1, n\sqrt{\log(n) C_P \lambda(P)^{K_P}}\} }\Big). 
\end{align*}
The above result combines with $K_P=0$ complete the proof. 
\end{proof}

\bibliographystyle{plain}
\bibliography{learning.bib}

\end{document}